\newtheorem{lemma}{Lemma}
\newtheorem{corollary}{Corollary}
\theoremstyle{definition}
\newtheorem{definition}{Definition}
\newtheorem{example}{Example}
\def\actions{\mathcal{A}}
\def\histories{\mathcal{H}}
\def\KL{\mathbf{d}_{\mathrm{KL}}}
\def\E{\mathbb{E}}
\def\V{\mathbb{V}}
\def\H{\mathbb{H}}
\def\I{\mathbb{I}}
\def\Pr{\mathbb{P}}
\def\R{\mathbb{R}}
\def\1{\mathbf{1}}
\def\0{\mathbf{0}}
\DeclareMathOperator*{\argmax}{arg\,max}
\newcommand\numberthis{\addtocounter{equation}{1}\tag{\theequation}}
\newcommand{\remove}[1]{}
\newcommand{\add}[1]{{#1}}
\newcommand{\addnew}[1]{{#1}}
\newcommand{\kibitz}[2]{\ifnum\Comments=1{\textcolor{#1}{\textsf{\footnotesize #2}}}\fi}
\definecolor{darkred}{rgb}{0.7,0,0}
\definecolor{darkgreen}{rgb}{0.0,0.5,0.0}
\definecolor{darkblue}{rgb}{0.0,0.0,0.5}
\definecolor{teal}{rgb}{0.0,0.5,0.5}
\definecolor{dogwoodrose}{rgb}{0.84, 0.09, 0.41}
\definecolor{electriccrimson}{rgb}{1.0, 0.0, 0.25}
\definecolor{folly}{rgb}{1.0, 0.0, 0.31}
\definecolor{frenchrose}{rgb}{0.96, 0.29, 0.54}
\definecolor{cadetgrey}{rgb}{0.57, 0.64, 0.69}
\title{Non-Stationary Bandit Learning via Predictive Sampling}
\author[1]{Yueyang Liu}
\author[2]{Xu Kuang}
\author[3]{Benjamin Van Roy}
\affil[1]{Jones Graduate School of Business, Rice University}
\affil[2]{Stanford Graduate School of Business}
\affil[3]{Department of Management Science and Engineering, Department of Electrical Engineering, Stanford University}
\date{}
\begin{document}

\maketitle

\begin{abstract}
Thompson sampling has proven effective across a wide range of stationary bandit environments.  However, as we demonstrate in this paper, it can perform poorly when applied to non-stationary environments.  We show that such failures are attributed to the fact that, when exploring, the algorithm does not differentiate actions based on how quickly the information acquired loses its usefulness due to non-stationarity. Building upon this insight, we propose predictive sampling, an algorithm that deprioritizes acquiring information that quickly loses usefulness. Theoretical guarantee on the performance of predictive sampling is established through a Bayesian regret bound. We provide versions of predictive sampling for which computations tractably scale to complex bandit environments of practical interest.  Through numerical simulations, we demonstrate that predictive sampling outperforms Thompson sampling in all non-stationary environments examined.
\end{abstract}

\section{Introduction}
\label{sec:introduction}
{Thompson sampling (TS) \citep{Thompson1933} is a bandit learning algorithm that, at each timestep, samples statistically plausible mean rewards and selects an action with the largest sampled mean.}  {The algorithm's popularity derives from its conceptual elegance and practical performance.  In particular,} for a range of stationary bandit environments, or {\it stationary bandits}, for short, the efficacy of TS has been established through theoretical and empirical analyses \citep{pmlr-v23-agrawal12,  NIPS2011_e53a0a29, RussoMOR2014}. 
The algorithm has enjoyed a wide range of applications, including 
revenue management \citep{ferreira2018online}, website optimization \citep{hill2017efficient}, Monte Carlo tree search \citep{bai2013bayesian}, A/B testing \citep{graepel2010web}, advertising \citep{agarwal2014laser, graepel2010web, agarwal2013computational,  schwartz2017customer}, 
and recommender systems \citep{kawale2015efficient}.

Many of these applications, however, exhibit non-stationarity.\footnote{\addnew{Our usage of the term {\it non-stationary} is consistent with the bandit learning literature, which differs from the literature on stochastic processes.  In particular a bandit is considered stationary if the vector process $\{R_t\}_{t=1}^{+\infty}$, where each component $R_{t,a}$ is the reward earned if action $a$ is selected at time $t$, is exchangeable.}}  For example, recommender systems often serve user populations where trends and preferences fluctuate over time. Similarly, learning algorithms designed for dynamic pricing can be impacted by seasonal changes in supply and demand patterns. Unfortunately, as we demonstrate in this paper, TS is not suited for these non-stationary environments. We show that applying TS can lead to near worst-case performance in some examples of non-stationary bandits; this is further corroborated by the sub-optimal performance of TS in our numerical experiments. 

\add{What causes TS to perform poorly in the face of non-stationarity? One way to explain this is by using a \addnew{notion of} ``information durability'': the degree to which a reward observation at the current time step influences our belief about future rewards. Let us denote by $R_{t, a}$ the reward observed from pulling arm $a$ at time step $t$. Then, in a stationary bandit environment, because the arms' reward distributions do not change over time, observing the reward $R_{t, a}$ at time $t$ would influence our belief and predictions of all future rewards from this arm, $R_{t', a}$ for $t' \geq  t + 1$. In this sense, the information gleaned from $R_{t, a}$ is infinitely durable.  In contrast, consider a non-stationary environment where, say, $R_{t,a}$ is drawn from $\mathcal{N}(\theta_{t,a}, 1)$, and $\theta_{t,a}$ is drawn independently from $\mathcal{N}(0,1)$ every ten timesteps, remaining fixed between draws. In this case, information revealed by $R_{t, a}$ about the future rewards from arm $a$ is \addnew{less durable} as a result of the ``resets'' in $\theta_{t,a}$.  
}

\add{As a corollary, different actions can now have different levels of information durability in a non-stationary environment, and this lies at the root of TS's inefficiency. Bandit algorithms like TS typically sacrifice short-term rewards in exchange for learning about the reward distributions, with the tacit assumption that this knowledge will then help the decision-maker harness greater rewards in the future. But if information durability varies from arm to arm, then it would seem to follow that a well-performing algorithm should also take variations in information durability into account when conducting exploration. Unfortunately, TS, like most bandit algorithms designed for stationary environments, does not incorporate information durability in its decision-making by design.}   

\add{Motivated by the above observations, we propose predictive sampling (PS). At a high level, PS inherits much of the conceptual elegance of
the original TS algorithm, but is redesigned so that it naturally shapes exploration using information
durability.  At each time $t$, PS samples a statistically plausible sequence  of \emph{future} reward vectors, $\hat{R}_{t+1:\infty}$, from the joint posterior distribution of future rewards $R_{t+1:\infty}$ (each vector $R_k$ assigns a reward $R_{k,a}$ to each action $a$). It then computes the expected immediate rewards for the current time step,\footnote{We use the equality sign in $ R_{t+1:\infty} = \hat{R}_{t+1:\infty}$ for simplicity in this informal exposition. The actual operation here is a change of measure, which we will formally define in the later section. }
\begin{equation}
    \bar{R}_{t} = \mathbb{E}[R_{t} \mid H_{t-1}, R_{t+1:\infty} = \hat{R}_{t+1:\infty}],
    \label{eq:barR_intro}
\end{equation}
where $H_{t-1}$ stands for the history. Importantly, the algorithm ``pretends'' that the future rewards are given by the sampled vector $\hat{R}_{t+1:\infty}$. Finally, an action $A_t \in \arg\max_{a \in \mathcal{A}} \bar{R}_{t,a}$ is chosen to maximize over these conditional expectations.

Here is some high-level intuition for why PS, and in particular \eqref{eq:barR_intro}, adapts to information durability: Suppose rewards are independent across arms. If arm $a$'s rewards are relatively stationary, then one would expect that observing its future rewards ${R}_{t+1:\infty, a}$ is highly informative of the current-step reward $R_{t, a}$. Therefore, the mean reward of $R_{t, a}$ conditional on the sampled future rewards, $\bar{R}_{t, a}$, would be more sensitive to the input $\hat{R}_{t+1:\infty}$ and thus have a larger variance. This generally encourages sampling arm $a$. Conversely, if arm $a$ has highly non-stationary rewards, then seeing future rewards ${R}_{t+1:\infty, a}$ tells us little about the current reward, $R_{t, a}$. As a result, $\bar{R}_{t, a}$ would be less sensitive to the input $\hat{R}_{t+1:\infty}$ and thus have a smaller variance. This generally discourages sampling arm $a$.
In summary, via the calculations in \eqref{eq:barR_intro}, PS naturally endows arms with more durable information with more ``uncertainty,'' thus encouraging their exploration relative to less information-durable arms.\footnote{\addnew{There can be special cases of bandit environments where PS is not responsive to information durability; we discuss one such case in Section \ref{section:conclusion}.}}  We will provide more concrete examples in Section \ref{section:properties} to illustrate the intuition behind PS, once the proper notation is introduced.
}

Beyond proposing PS, a second major contribution of this paper lies in establishing a general information-theoretic framework that facilitates performance analysis of any agent in a non-stationary bandit.  Non-stationary introduces significant challenges to applying existing information-theoretic analyses due to changes in both the quantity and the quality of information. For example, while the total amount of information in the environment is typically bounded in a stationary bandit {(which we will formalize later in Section 7.3)}, in a non-stationary one, additional randomness can be injected in each time step, and thus the information can grow unbounded as the horizon $T$ increases. Furthermore, as discussed earlier, not all such information is relevant when it comes to predicting future rewards.  In particular, the analysis needs to account for information durability. To overcome these difficulties, we introduce the concept of \emph{predictive information}, which captures the type of information that is useful in predicting future rewards. Using it, along with a new notion of information ratio, we are able to extend the information-theoretic regret analysis originally developed for stationary bandits \citep{RussoMOR2014} to non-stationary bandit learning and obtain non-trivial regret upper bounds. 

Using this framework, we establish a general regret bound for any agent that is expressed in terms of the cumulative predictive information $\Delta$.
The bound grows linearly in $\sqrt{\Gamma \Delta}$, where $\Gamma$ denotes the sum of the information ratios. 
Applying this analysis to PS, we establish a regret bound that grows linearly in $\sqrt{T \Delta}$. In particular, when applied to a stationary environment, this bound reproduces some of the best known bounds for the regret of TS (c.f.,~\citep{neu2022lifting}), suggesting that this new  analysis is likely competitive against its stationary counterparts in terms of tightness. 

We further leverage these general regret bounds to derive easy-to-interpret  guarantees for specific classes of non-stationary bandit problems. We do so by developing new techniques that allow us to upper-bound the cumulative predictive information $\Delta$. For instance, we analyze a class of modulated Bernoulli bandits that
generalizes the constant rate per-arm abrupt switching model in \citep{pmlr-v31-mellor13a}, where the reward distributions evolve according to a Markov chain. In this case, we are able to derive explicit bounds on $\Delta$ and further bounds on the regret that exhibits a graceful dependence on the transition kernel of the modulating Markov chain. 
We also establish a lower bound on the regret incurred by any agent in these bandits. These regret bounds demonstrate the effectiveness of predictive sampling
across a range of such non-stationary Bernoulli bandits.

Finally, we develop computationally tractable implementations of PS. For a class of non-stationary Gaussian bandits, we demonstrate how to implement PS exactly in a computationally efficient manner. For a more complex class of non-stationary logistic bandits, where PS is too computationally expensive to be performed exactly, we develop an efficient procedure to approximate PS using Laplace approximation. Using these implementations, we conduct extensive numerical experiments across a range of such bandits with varying information durability. Our computational results suggest that PS, as well as its approximation, consistently outperform not only TS, but also other algorithms proposed for non-stationary bandit learning. 

In summary, the main contributions of this paper include: 
\begin{enumerate}[(1)]
\item We  elucidate how and why TS and variants of TS proposed in previous literature do not account for information durability when selecting actions. 
\item  We propose PS for non-stationary bandit learning, and layout qualitative insights on how and why PS can significantly outperform TS in non-stationary environments. We further support the claim with theoretical results and numerical evidence.  
\item We develop computationally tractable implementations of PS for a class of Gaussian bandits which we refer to as AR(1) bandits  and an approximation of PS for a class of logistic bandits. 
\end{enumerate}

\textbf{Structure of the paper} \quad The paper is organized as follows. Section~\ref{sec:motivating_discussion} presents an example illustrating the limitations of TS in certain non-stationary bandits. Section~\ref{sec:bandits} introduces a general formulation of bandits. Sections~\ref{sec:ps} and~\ref{section:properties} formally introduce PS and discuss its qualitative properties. Section~\ref{sec:regret} provides the regret analyses. Section~\ref{sec:experiments} presents tractable examples and approximations of PS, along with numerical experiments. Section~\ref{section:conclusion} summarizes the paper. The appendix provides the probabilistic framework, information-theoretic notations and concepts, and technical proofs.

\section{Related Work}
\label{sec:related_work}
\paragraph{Non-Stationary Bandit Learning}

A number of interesting algorithms for non-stationary bandit learning have been proposed based on modifying TS \citep{9194367, gupta2011thompson, pmlr-v31-mellor13a, raj2017taming, trovo2020sliding, viappiani2013thompson} or other stationary bandit algorithms \citep{bacchiocchi2022autoregressive, besbes2019optimal, besson2019generalized, cheung2019learning, garivier2008upper, hartland2006multi, kocsis2006discounted, mintz2020nonstationary, pmlr-v31-mellor13a, zhao2020simple}. 
{ The main distinction between this literature and our work is they tend to focus on obtaining better estimates of the current rewards in the face of uncertainty, before deploying these improved reward estimates within an otherwise conventional bandit algorithm. Some of the prominent methods to improve reward estimates include maintaining a sliding window
\citep{cheung2019learning, garivier2008upper, russac2020algorithms, trovo2020sliding}, 
discounting past rewards by recency
\citep{bogunovic2016time, garivier2008upper, kocsis2006discounted, russac2020algorithms},  
and restarting a base algorithm periodically or when a change point is detected 
\citep{abbasi2022new, pmlr-v99-auer19a, besbes2019optimal, besson2019generalized, cheung2019learning, chen2024non, ghatak2021kolmogorov, gupta2011thompson, hartland2006multi, luo2018efficient, mellor2013thompson, raj2017taming, wei2018abruptly, viappiani2013thompson, zhao2020simple}. 
{These approaches focus on improving reward estimation rather than modifying how an algorithm explores based on these estimates. In contrast, our approach explicitly considers how to adjust exploration to account for information durability in a non-stationary environment. }
\remove{However, because these approaches do not modify the underlying bandit algorithm, they still do not incorporate information durability into their exploration and therefore tend to suffer from the same limitations as their stationary counterparts. In contrast, our approach places a heavy emphasis on reasoning about how to change the way a bandit algorithm ought to explore in a non-stationary environment.}} 

Another line of literature considered bandit problems where reward processes are modeled by more general (non i.i.d.) stochastic processes \citep{kaspi1998multi, pmlr-v162-kim22j, levine2017rotting, mandelbaum1986discrete, mandelbaum1987continuous, varaiya1985extensions}. A major distinction is that these works generally assume that the reward from an action evolves only when the action is selected by the agent, and as a result, the reward from a particular action only depends on how many times the said action has been used up until this point. In contrast, the rewards in our model, as well as those in the above-mentioned bandit literature, generally evolve over time {exogenously}, regardless of whether an action has been selected or not. 

\paragraph{Information-Theoretic Analysis of Stationary Bandit Learning}
Our work builds on the body of literature on information-theoretic regret analyses 
for stationary bandits 
\citep{bubeck2015bandit, dong2018information, hao2022contextual, lattimore2019information, lu2021reinforcement, neu2022lifting, RussoMOR2014, JMLR:v17:14-087, russo2018learning}. 
This literature introduces the notion of an information ratio and bounds the regret of an agent in terms of its information ratio. Our work contributes to this literature by extending the information-theoretic framework to non-stationary bandit learning. As described in the Introduction, we overcome several non-trivially difficulties encountered in the process by leveraging a new notion of information ratio, originally proposed by \cite{RussoMOR2014}, that is better suited for non-stationary bandits, as well as by using a novel concept of predictive information that allows us to articulate the predictive value of information for future rewards. 

{After we posted an initial version of this paper, \citep{min2023information} was disseminated which also examines the performance of TS in a non-stationary environment through an information-theoretic lens. While \cite{min2023information} primarily focus on bounding the performance degradation of TS in a non-stationary environment, we are more interested in understanding how to design a new algorithm that would mitigate some of the shortcomings of TS in non-stationary environments. 
} 

\paragraph{Prediction Driven Decision Making} There are algorithms that explicitly or implicitly use predictions of future system inputs in decision making. One paradigm, often known as model predictive control,  involves a controller who repeatedly solves a planning problem into the future by substituting future inputs using predictions \citep{mesbah2018stochastic}; the problems studied in \citep{freund2019good, spencer2014queuing, wen2022predictions2decisions, xu2016using} fall under this general category. In other cases, thinking in terms of a hypothetical future input trajectory has proven valuable in obtaining improved performance characterization for Markov decision processes, a technique known as information relaxation \citep{brown2010information}. Notably,  \cite{min2019thompson} proposes an interesting family of information-relaxation-inspired sampling algorithms for stationary bandits. When the actions are independent, PS can be shown to be equivalent to an extreme point of a sequence of such algorithms. However, compared to our work, existing applications that make use of predictions are either not concerned with learning with bandit-type feedback, or do not address the issue of learning in a non-stationary environment.

\section{Motivation}
\label{sec:motivating_discussion}

This section demonstrates that TS, as typically applied, does not account for the durability of information and this can severely degrade its performance. 
Moreover, we demonstrate that in some non-stationary bandits, TS performs arbitrarily close to the worst possible agent. 
We also show that 
the same holds true for variants of TS that have been 
proposed in the literature. 

\subsection{Tossing Random Coins}
\label{sec:random_coins}
Suppose you engage in a sequence of decisions where, at each timestep, you choose between one of two biased coins to toss and subsequently receive a payoff of $\$1$ if the coin lands heads and $\$0$ otherwise.  This environment is illustrated in Figure~\ref{fig:two_coins_replace_prob}. The first coin is known to land heads with probability $p_1 = 0.99$. The second coin is drawn from a bag that holds an infinite number of extremely biased coins, half of which always land heads and the other half always land tails. At each timestep, 
there is a high probability, say $0.99$, that 
the second coin is replaced by a new one drawn from the bag. 
The bias of the second coin, which we denote by $p_{t,2}$, takes $0$ and $1$ with equal probabiliy, and changes over time.  

\begin{figure}[htb]
\begin{center}
\includegraphics[scale=0.3]{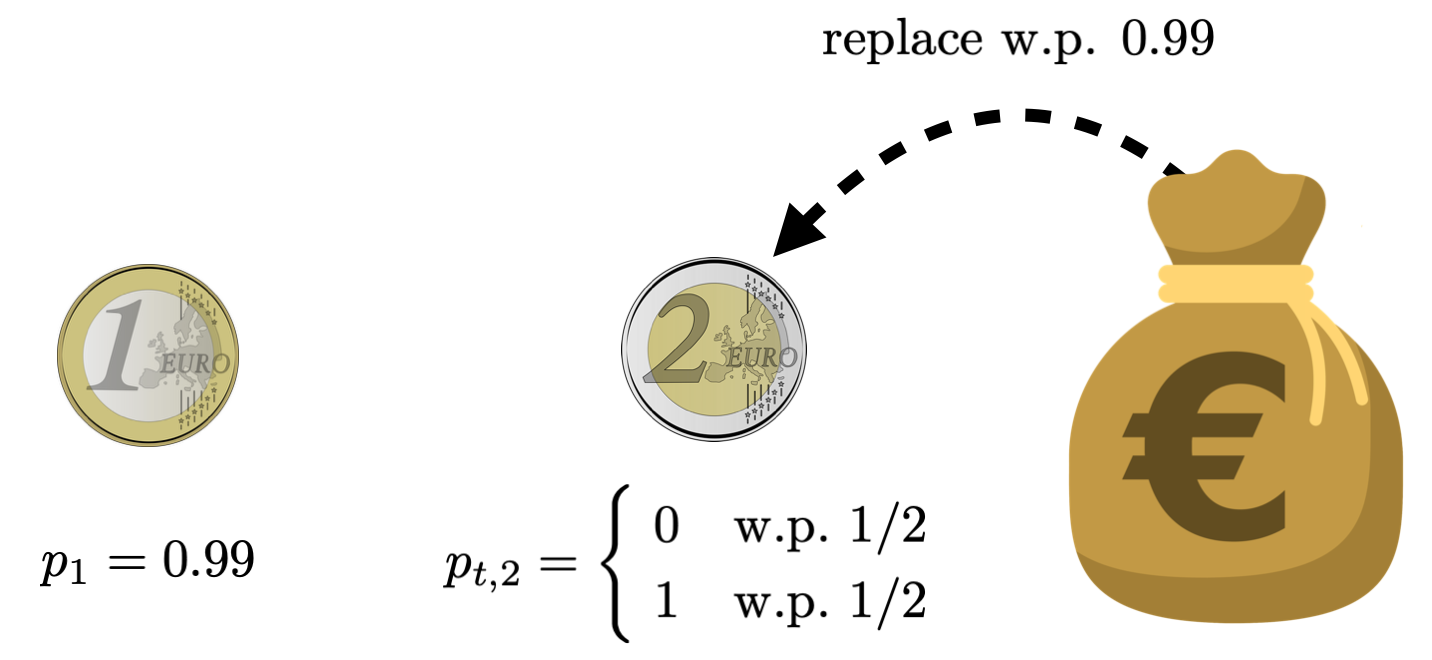}
\end{center}
\caption{
Choosing between two coins: one with bias $p_1 = 0.99$ and the other with bias $p_{t,2}$ of either $1$ or $0$. The second coin is replaced each time with probability $0.99$. 
}
\label{fig:two_coins_replace_prob}
\end{figure}

{
In this game, the bias of the first coin, $p_1 = 0.99$, is known. The replacement process for the second coin—specifically, the replacement rate of $0.99$ and the biases of the coins in the bag—is also known. However, you cannot directly observe when a coin is replaced or the bias of the newly introduced coin.  In this case, you can attempt to infer the bias  
$\{p_{t,2}\}_{t = 1}^{+\infty}$ of the second coin based on observed rewards.  
}

In this environment, selecting the first coin offers payoff of $99$\textcent\ per timestep. Selecting the second coin offers \$1 if the coin bias is 1 and \$0 otherwise, each with equal probability. 
After a single toss of the second coin, the bias of the second coin is revealed. 
However, there is a high probability that the second coin is replaced at the next timestep, and the learned bias becomes irrelevant. 
Therefore, an optimal agent would be one that only ever tosses the first coin, accumulating payoffs at an expected rate of $99$\textcent\ per timestep, instead of investing to learn the bias of the second coin.

Although TS offers an approach to making such sequential decisions, it turns out that it invests in learning the bias of the second coin, the durability of which is poor,
and is thus suboptimal in this environment. 
Observe that this environment is identified by the coin biases $p_1$ and $p_{t,2}$. 
At each timestep, TS samples from the posterior distribution of the coin biases and selects an action that would maximize 
the sample. 
{Formally, TS samples $\hat{p}_{t,1}$ from the posterior of $p_1$ and $\hat{p}_{t,2}$ from the posterior of $p_{t,2}$, and selects the first coin if $\hat{p}_{t,1} > \hat{p}_{t,2}$ and the second coin otherwise.} 
Because the first bias is known, at each timestep $t$, TS takes its sample to be $\hat{p}_{t,1} = p_1 = 0.99$. The second coin, on the other hand, is replaced with \remove{high }probability {$0.99$} at each time, and when it is replaced, the bias becomes $0$ or $1$ with equal probability. So $\hat{p}_{t,2} = 1$ with a probability that is at least {$0.99 \times 1/2 =$} $0.495$. 
Maximizing between $\hat{p}_{t,1}$ and $\hat{p}_{t,2}$, TS samples the second coin with a probability that is at least $0.495$.
A simple derivation {(Appendix~\ref{appendix:motivating_example_two})} shows that TS accumulates payoffs at an expected rate of at most $75$\textcent\ per timestep, which clearly falls far short of the $99$\textcent\ rate that would be earned by repeatedly tossing the first coin.

\subsection{Existing Variants of TS Do Not Fix the Issue}
A number of variants of TS have been proposed for the purpose of non-stationary bandit learning, including TS with change-detection \citep{9194367}, dynamic TS \citep{gupta2011thompson}, change-point TS \citep{ pmlr-v31-mellor13a}, discounted TS \citep{raj2017taming},  sliding-window TS \citep{trovo2020sliding} and reset-aware TS \citep{viappiani2013thompson}. 
Instead of maintaining an exact posterior distribution, which can be highly computationally demanding in a non-stationary environment, these algorithms strive to employ various heuristics to approximate the posterior distribution, and also to update this approximation in a tractable manner. 
Each algorithm, similar to what TS would do, samples from the approximate posterior distribution, and selects an action that optimizes the corresponding expected payoff. 

In the coin-tossing environment of Figure~\ref{fig:two_coins_replace_prob}, each of these algorithms maintains an approximate posterior distribution of the coin biases $p_1$ and $p_{t,2}$, 
samples from this distribution, and selects an action that maximizes the sample. 
Similar to TS, each agent would sample $\hat{p}_{t,1} = 0.99$. 
Recall that the second coin is replaced with probability $0.99$ at each timestep, and when it is replaced, the bias becomes $0$ or $1$ with equal probability. Consequently, if an agent intelligently uses this coin-replacement information in approximating the posterior distribution of the coin biases, the agent would sample $\hat{p}_{t,2} = 1$ with a probability that is at least $0.495$. Maximizing between $\hat{p}_{t,1}$ and $\hat{p}_{t,2}$, the agent would select the second coin with a probability that is at least $0.495$ and deviate from the optimal agent that only ever selects the first coin. 

Although not all agents would intelligently use the coin-replacement information in approximating the posterior distribution, readers can verify that each of the aforementioned agents would select the second coin with a positive probability. 
Therefore, similar to TS, these variants of TS invest in learning the bias of the second coin, and these variants are thus suboptimal in this environment. Since the variants of TS behave and perform similarly to TS, we will  focus our attention on TS as a benchmark in the rest of the paper. 

\subsection{TS Can Perform Very Poorly}
\label{sec:K_coins}
Now that we have demonstrated that TS deviates from the optimal strategy in some non-stationary environments, we next characterize how bad TS can perform. 
The main message here is that TS can perform almost as badly as a worst-performing agent in some non-stationary environments. 

Consider a variant of the environment of Figure~\ref{fig:two_coins_replace_prob}, where
the decision at each time is to choose which coin among $K$ coins to toss, where $K$ is greater than $2$. 
{The coins from the third to the $K$-th} are independent copies of the second coin. That is, each of these coins is drawn independently from the bag and is replaced independently at each time with probability $0.99$. In addition, with this variant, almost all coins in the bag have bias $0$: suppose that $99\%$ of the coins in the bag have bias $0$, and the rest have bias $1$. 
Figure~\ref{fig:many_coins_n} illustrates this.

\begin{figure}[htb]
\begin{center}
\includegraphics[scale=0.3]{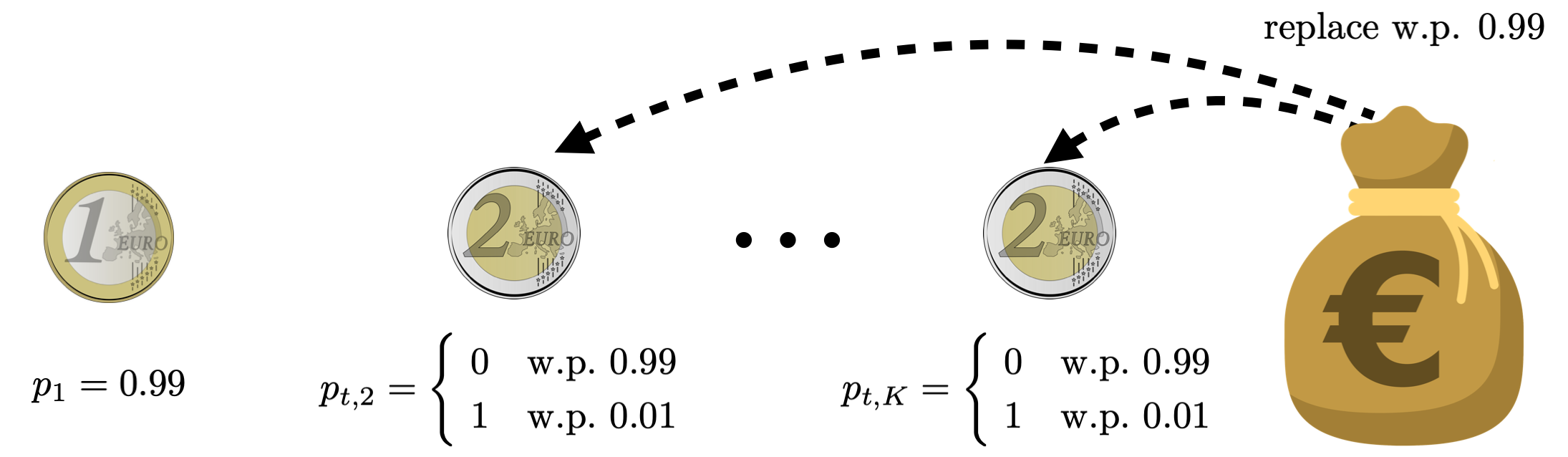}
\end{center}
\caption{\label{fig:many_coins_n}
Choosing among $K$ coins: the third through $K$-th coins are independent copies of the second coin.}
\end{figure}

In such an environment, TS performs almost as badly as the worst-performing agent. 
To see why, first {note that TS samples $\hat{p}_{t,1}$ from the posterior of $p_{1}$ and 
$\hat{p}_{t, k}$ from the posterior of $p_{t,k}$ for $k \in \{2, ... , K\}$, and selects the coin with the largest sampled value. }
\remove{observe that}TS takes $\hat{p}_{t,1} = 0.99$ as before. {At each timestep, each coin from the second to the $K$-th} is independently replaced with a positive probability $0.99$, and that the bag contains a positive proportion $1\%$ of coins with bias $1$. Therefore, $\hat{p}_{t,k} = 1$ with a \remove{positive probability}{probability that is at least $0.99 \times 0.01 = 0.0099$}, for $k \in \{2, ..., K\}$. 
When $K$ is sufficiently large, by maximizing among $\hat{p}_{t, 1}, \hat{p}_{t,2}, ... , \hat{p}_{t, K}$, 
 TS selects one of the second through the $K$-th coins with a sufficiently large probability; the probability converges to $1$ as $K \rightarrow +\infty$. 
However, {tossing any coin from the second to the $K$-th} yields an expected payoff at a rate that is close to $0$\textcent
\remove{; a simple derivation reveals that this rate is less than $2$\textcent\ per timestep}.
{Indeed, a simple derivation (Appendix~\ref{appendix:motivating_example_k}) reveals that} in an environment where $K$ is sufficiently large, 
TS collects an expected payoff that is at most $2$\textcent\ per timestep, much smaller than the rate of $99$\textcent\ accumulated by an agent that only ever selects the first coin. This gap of $97$\textcent\ in payoffs per timestep is large, considering that the expected payoff per timestep lies in the range $[\$0, \$1]$. 

In fact, following a similar argument, we can show that there exists an environment with multiple coins where the performance gap between TS and the optimal agent can be arbitrarily close to \$1. In other words, we are able to show that in such an environment, 
TS 
accumulates an expected payoff 
that is arbitrarily close to $\$0$ and that an optimal agent accumulates an expected payoff that is arbitrarily close to $\$1$. Since payoffs in such environments are binary-valued, this indicates that TS performs arbitrarily close to the worst-possible agent for this environment. 
The above observations will be formalized in Theorem~\ref{theorem:performance_difference_1} of Section~\ref{section:learning_target} using the language of Bernoulli bandits.

\section{Models and Definitions}
\label{sec:bandits}
We formally describe in this section the system model and some key definitions.  All random quantities are defined with respect to a probability space $(\Omega, \mathcal{F}, \Pr)$. 

We first formalize the concept of a bandit. \remove{Let $\actions$ be a finite set, and
$\{R_t\}_{t = 1}^{+\infty}$
be a stochastic process taking values in 
$\R^{|\actions|}$.}{Let $\mathcal{A}$ be a finite set, and 
$\{R_t\}_{t = 1}^{+\infty}$ be a stochastic process, where each $R_t = (R_{t,a}: a \in \mathcal{A})$ takes values in $\mathbb{R}^{|\actions|}$.} A \emph{bandit} is defined by the tuple $(\{R_t\}_{t = 1}^{+\infty}, \actions)$, where the two elements correspond to the reward process and the action set, respectively. In particular, for every $t \in \mathbb{Z}_+$ and $a \in \actions$, $R_{t,a}$ represents the reward that will be realized if an agent executes action $a$ at timestep $t$. We use $\mathbb{Z}_+$ to denote positive integers. We use $R_{1:+\infty}$ as a shorthand for the full reward sequence, and $R_{i:j}$ as a shorthand for the reward sequence $\{R_t\}_{t = i}^j$. 

Let $\histories$ denote the set of all sequences of a finite number of action-reward pairs. We refer to the elements of $\histories$ as \emph{histories}. 
A \emph{policy} is a function that maps a history in $\mathcal{H}$ to a probability distribution over $\actions$. 
So a policy $\pi$ assigns, for each realization of history $ h \in \mathcal{H}$, a probability $\pi(a|h)$ of choosing an action $a$ for all $a \in \actions$. {For any policy $\pi$, we use $A_t^{\pi}$ to denote the action selected at time $t$ by an agent following policy $\pi$, and  
$H^{\pi}_{t}$ to denote the history generated upon selecting $A_t^{\pi}$ and observing the subsequent reward while executing $\pi$.}  
Specifically, we let $H^{\pi}_0$ be the empty history, and iteratively define $A_t^{\pi}$ and $H_{t}^{\pi}$ for all $t \in \mathbb{Z}_{+}$. We let $A_t^{\pi}$ be such that $\Pr(A^{\pi}_t \in \cdot |H^{\pi}_{t-1}) =  \pi(\cdot|H^{\pi}_{t-1})$ and that $A_t^{\pi}$ is independent of $R_{1:+\infty}$ 
conditioned on $H_{t-1}^{\pi}$, and let {$H^{\pi}_{t} = (A^{\pi}_1, R_{1,A^{\pi}_1}, \ldots, A^{\pi}_{t}, R_{t,A^{\pi}_{t}})$}. 

Much of the work presented in this paper studies an agent that executes a specific policy, i.e., PS. Note that when it is clear from the context, we suppress superscripts that indicate this. For example, we use 
$A_t$ for the action selected and 
$H_t$ for the history generated as an agent executes PS.

{It is worth noting that we adopt a standard Bayesian framework, and as such model all uncertain quantities as random variables. 
The information about the environment that the agent possesses at the beginning of time is represented as a joint distribution 
over the sequence of reward vectors $\Pr(\{R_t\}_{t = 1}^{+\infty} \in \cdot)$. 
The agent then iteratively refines their knowledge of the reward process by updating its posterior distribution using the observed rewards and the rules of conditional probabilities. In our model the actions do not have delayed consequences. 
} 

\paragraph{Nonstationary Bandits vs. Stationary Stochastic Processes}
Consistent with the bandit learning literature, we define a bandit as stationary if its reward process $\{R_t\}_{t=1}^{+\infty}$ is exchangeable. That is, its joint distribution is invariant under any finite permutation of time indices. By de Finetti’s theorem, this is equivalent to the existence of a distribution $P$ over $\mathbb{R}^{|\mathcal{A}|}$ such that, conditioned on $P$, the rewards are i.i.d. draws from $P$. We refer to this $P$ as \emph{the reward distribution}, which is only well-defined under stationarity. A bandit is said to be non-stationary if it does not satisfy this property. \add{Note that this definition of non-stationary bandit environments differs from the definition of stationarity in the theory of stochastic processes, where a process is considered stationary if it admits a joint distribution that is invariant under time shifts.}

\remove{In order to determine whether a bandit is non-stationary, 
We propose a definition of stationary bandits that is consistent with all stationary bandit models in the literature that we are aware of. We say that a bandit is stationary if the reward process, $\{R_t\}_{t = 1}^{+\infty}$, is exchangeable. That is, the joint distribution of the reward process is invariant under any finite permutation of the time indices. 
By de Finetti's theorem, we also obtain an equivalent, and more familiar definition: a bandit is stationary if there exists a distribution $P$ over $\R^{|\actions|}$ such that, conditioned on $P$, the rewards are independently and identically distributed according to $P$. We refer to this $P$ as the \emph{reward distribution}. This is a hidden state of the 
{It is worth mentioning that, here, the notion of a reward distribution is only defined for stationary bandits.} 
With the above definition in place, we say a bandit is \emph{non-stationary} if it is not stationary.}

\section{Predictive Sampling}
\label{sec:ps} 
This section introduces the predictive sampling (PS) algorithm.  

\subsection{Setting a Different Learning Target}
\label{section:learning_target}
We start by introducing a concept that is central to both the design and analysis of PS:  learning target. Learning naturally occurs as an agent acquires more information about the rewards process when interacting with a bandit environment. A learning target, $\chi$,  is a random variable that formalizes, and further crystallizes, \emph{about what} the agent aims 
to learn in this process. For instance, taking $\chi$ to be the mean rewards of different arms, one may cast many existing algorithms for stationary bandits as trying to strike a balance between gaining more information about $\chi$ and maximizing instantaneous rewards \citep{arumugam2021deciding, arumugam2021value, lu2021reinforcement, russo2022satisficing}. 

But we can take it a step further, by using the learning target not merely as a device to interpret existing algorithms, but also a design tool to actively shape agent's exploration behavior. This led us to a crucial insight: by choosing the {appropriate} learning target in a TS-like algorithm, we can overcome TS's failure to account for information durability. Specifically, we argue that a promising candidate for the learning target is the {sequence of all future {reward vectors}}, $R_{t+1:\infty}$. Intuitively, one would expect an agent that aims to learn about the entire future reward {vector} sequence would naturally have to take into account the durability of the information she gathers in each timestep. 

Note that we set the learning target as $R_{t+1:\infty}$ instead of $R_{t:\infty}$ when selecting action $A_t$. Since we observe $R_{t, A_t}$ immediately after selecting $A_t$, the rewards $R_{t+1:\infty}$ are precisely the ones that, if learned, can help us make better decisions for future actions $A_{t+1}, A_{t+2}$, and so on. The additional $R_t$ beyond $R_{t+1:\infty}$ does not contribute any further information for selecting improved future actions.

Building on the above insight, the PS algorithm follows naturally from the following two-step procedure: First, we provide a general formulation of TS so as to make the role of the learning target explicit. In particular, we frame TS as an agent who, at each timestep $t$, 
\begin{enumerate}
    \item \textbf{samples} a statistically plausible learning target $\hat{\chi}_t$ from its posterior $\Pr(\chi_t\in \cdot | H_{t-1}^{\pi_{\mathrm{TS}}})$, 
    \item \textbf{estimates} the conditional mean reward 
    $\hat{\theta}_t^{\pi_{\mathrm{TS}}} = 
    \E[R_{t} | H_{t-1}^{\pi_{\mathrm{TS}}}, \chi_t \leftarrow \hat{\chi}_t]$ 
    given the sampled learning target, 
    \item \textbf{selects} the action that has the highest mean reward estimate. 
\end{enumerate}
 In the second step, we simply replace the learning target in the above TS procedure with the sequence of all future {reward vectors}\remove{rewards} $R_{t+1:\infty}$, and doing so immediately leads to the PS algorithm. 
 
In other words, PS builds upon TS by changing not \emph{how} it samples, but \emph{what} it samples. The most commonly used learning target in TS and its variants is the \add{current-step mean reward or reward distribution (a random variable itself)}, irrespective of how ``quickly" \add{it} is about to change. This choice could incentivize an agent to expand valuable resources on learning a piece of information that will quickly lose
relevance in a non-stationary environment.

As a concrete piece of evidence, Theorem~\ref{theorem:performance_difference_1} establishes that a TS agent can suffer near worst-case performance in certain (non-stationary) Bernoulli bandits. Here, we use 
$\pi_{\text{TS}}$ to denote the policy executed by TS; for any policy $\pi$ and $T \in \mathbb{Z}_+$, denote by $\mathrm{Return}(T; \pi)$ the expected cumulative reward collected by an agent that executes $\pi$: 
$$
    \mathrm{Return}(T; \pi) = \sum_{t=1}^{T} \E\left[R_{t, A_t^{\pi}}\right]; 
$$
a Bernoulli bandit is a bandit where the rewards are $\{0,1\}$-valued.

\begin{restatable}{theorem}{maximumdifferencets}
\label{theorem:performance_difference_1}
For all $\epsilon \in (0, 1)$, 
there exists a Bernoulli bandit $\nu$ and a policy $\pi$ such that under $\nu$, 
\begin{align*}
     \mathrm{Return}(T; \pi_{\mathrm{TS}}) \leq \epsilon T, \text{ and }
     \mathrm{Return}(T; \pi) \geq (1 - \epsilon)T, \quad \mbox{for all $T \in \mathbb{Z}_{+}$.}
\end{align*}
\end{restatable}
A proof of this result is provided in Appendix~\ref{appendix:difference}.

\subsection{The Predictive Sampling Algorithm}
\label{sec:algorithm}

\begin{figure}[ht]
\centering
\begin{minipage}{0.55\textwidth}
\begin{algorithm}[H]
\caption{Predictive sampling (PS)}\label{alg:predictive-sampling}
\For{$t = 1, 2, \ldots, T$}{
\textbf{sample}: $\hat{R}^{(t)}_{t+1:\infty} \sim \Pr(R_{t+1:\infty} \in \cdot | H_{t-1})$ \\
\textbf{estimate}: $\hat{\theta}_{t} = \E[R_{t} | H_{t-1}, R_{t+1:\infty} \leftarrow \hat{R}^{(t)}_{t+1:\infty}]$ \\
\textbf{select}: $A_t \in \argmax_{a \in \actions} \hat{\theta}_{t, a}$ \\
\textbf{observe}: $R_{t,A_t}$ 
}
\end{algorithm}
\caption{The predictive sampling algorithm}
\label{algo:PS}
\end{minipage}
\end{figure}

We now provide a formal description of PS, summarized in Figure \ref{algo:PS}. At each timestep $t$, PS performs the following steps: 
\begin{enumerate}
    \item \textbf{samples} an infinite sequence of future {reward vectors}\remove{rewards} $\hat{R}_{t+1:\infty}^{(t)}$ from its posterior distribution, $\Pr(R_{t+1:\infty} \in \cdot | H_{t-1})$,  
    \item \textbf{estimates} expected mean rewards by deriving an estimate $\hat{\theta}_t$, 
    by ``pretending'' that $\hat{R}_{t+1:\infty}^{(t)}$ is the sequence of true future {reward vectors}\remove{rewards} $R_{t+1:\infty}$: 
    \begin{equation}
         \hat{\theta}_{t} = \E[R_{t} | H_{t-1}, R_{t+1:\infty} \leftarrow \hat{R}^{(t)}_{t+1:\infty}], 
    \end{equation}
\item \textbf{selects} the action that maximizes $\hat{\theta}_{t, a}$, by setting $A_t \in \argmax_{a \in \actions} \hat{\theta}_{t, a}$. 
\end{enumerate}
In Step 2, the notation $X \leftarrow Y$ denotes a change of measure from that of the random variable $X$ to that of the random variable $Y$:  if we let $f(x) = \E[R_{t} | H_{t-1}, R_{t+1:\infty} = x]$, then $\hat{\theta}_t = f(\hat{R}^{(t)}_{t+1:\infty})$.   This notation is  formally defined in Appendix~\ref{appendix:change-of-measure}. 

While feasibly sampling an entire infinite sequence of future rewards remains challenging, Section~\ref{sec:ps_ar1} presents an efficient implementation of PS for a class of non-stationary Gaussian bandits. This implementation leverages an observation introduced at the beginning of Section~\ref{sec:ps_information}, which shows that sampling $\hat{\theta}_t$ from the posterior distribution of a certain parameter is equivalent to sampling future rewards and computing $\hat{\theta}_t$ from them. \add{ Note also that while some non-stationary bandit models we examine later admit a natural interpretation where the reward process can be seen as generated by a hidden Markov model (e.g., the modulated Bernoulli bandit), the predictive sampling algorithm itself does not rely on any hidden Markovian interpretation of the reward process.} \addnew{In such a bandit, $t_s$ typically uses the hidden state as the learning target, while $p_s$ uses $r_{t+1:\infty}$ as the learning target.} \add{This generality can be especially useful, for instance, when PS is realized by training a black-box predictive model, such as a neural network, to directly sample trajectories of future rewards, without specifying or inferring hidden states.}

\section{Qualitative Properties of PS}
\label{section:properties}
Next, we use some examples to illustrate two salient qualitative properties of PS. First, it reacts to information durability, and benefits from doing so. Second, in stationary environments, it coincides with the behavior of TS, and is therefore expected to perform just as well as TS.

\subsection{PS Reacts to Information Durability}
\label{sec:ps_information}
\remove{Let us use the type of coin-tossing environments in Figures \ref{fig:two_coins_replace_prob} and \ref{fig:many_coins_n} to illustrate the mechanism through which PS takes information durability into account. }
First, observe that PS samples the mean reward estimate $\hat{\theta}_t$ according to: 
\begin{align*}
\Pr\left(\hat{\theta}_t \in \cdot | H_{t-1}\right) = \Pr\left(\E[R_{t} | H_{t-1}, R_{t+1:\infty} \leftarrow \hat{R}^{(t)}_{t+1:\infty}] \in \cdot | H_{t-1}\right)
= \Pr\left(\E[R_{t} | H_{t-1}, R_{t+1:\infty}] \in \cdot | H_{t-1}\right), 
\end{align*}
where we have used the fact that the trajectory of future 
{reward vectors}\remove{rewards} are sampled with respect to the posterior distribution: $\Pr(\hat{R}^{(t)}_{t+1:\infty} \in \cdot | H_{t-1}) = \Pr(R_{t+1:\infty} \in \cdot | H_{t-1})$. In other words,  PS samples the mean reward estimate $\hat{\theta}_t$ from the posterior distribution {$\Pr(\E[R_{t} | H_{t-1}, R_{t+1:\infty}] \in \cdot | H_{t-1})$.}\remove{of $\E[R_{t+1} | H_t, R_{t+2:\infty}]$ conditioned on the history $H_t$.} Let us examine how this distribution changes as a function of the information durability of an action. 

{We consider} a more general version of the example given in Figure \ref{fig:two_coins_replace_prob}. Here, in every timestep the second coin can be replaced with a new coin drawn from the bag with a probability
{$q$ that is not necessarily equal to $0.99$.} 
{Below we present a formal description of the example. 
\begin{example} [\bf{Coin-Tossing Example Parameterized by $q$}]
\label{ex:coin_math} Consider two coins. 
The bias of the first coin $p_1 = 0.99$. The sequence $\{p_{t,2}\}_{t = 1}^{+\infty}$ represents the bias of the second coin, and transitions according to 
\begin{align}
\label{eq:bias_transition}
p_{1, 2} = \begin{cases}
0\ \ & \mathrm{w.p.}\ 0.5\\
1 & \mathrm{w.p.} \ 0.5
\end{cases},
\ \ 
p_{t+1, 2} = \begin{cases}
p_{t,2} & \mathrm{w.p.}\ 1 - 0.5 q\\
1 - p_{t,2} & \mathrm{w.p.}\ 0.5 q
\end{cases}.
\end{align}
The outcome of the first coin $R_{t,1} \overset{\mathrm{i.i.d}}{\sim} \mathrm{Bernoulli}(p_1)$, independent of the bias of the other coin $\{p_{t,2}\}_{t = 1}^{+\infty}$ or the outcomes of the other coin $\{R_{t,2}\}_{t = 1}^{+\infty}$. 
The outcome of the second coin $R_{t,2} \sim \mathrm{Bernoulli}(p_{t,2})$, independent of the bias or outcomes at other times or of the other coin. Select one coin at each time, observe its outcome, and collect the corresponding reward. 
\end{example}
}

In Example 1, the bias of the first coin, $p_1 = 0.99$, is known. The transition equation for the bias of the second coin, 
$\{p_{t,2}\}_{t=1}^{+\infty}$, i.e., Equation~\eqref{eq:bias_transition}, is also known. Although the timing of the replacement of the second coin and its bias are not directly observable, one can attempt to estimate the bias $\{p_{t,2}\}_{t = 1}^{+\infty}$ using observed rewards and Equation~\eqref{eq:bias_transition}.

{This example describes a bandit with actions $\actions = \{1,2\}$ and reward process $\{R_t\}_{t = 1}^{+\infty}$. 
In this example, intuitively,}\remove{Intuitively,} the 
\remove{smaller}
{larger}
$q$ is, the 
\remove{less}{more}
likely that the current second coin would be replaced, and therefore the 
\remove{\emph{more durable}}{\emph{less durable}}
any information about its distribution. {More specifically,}\remove{In this example,} due to the independence between the biases of the two coins, \remove{we see that} the mean reward estimate for second coin, $\hat{\theta}_{t,2}$, is drawn from 
the posterior distribution 
{$\Pr(\E[R_{t,2} | H_{t-1}, R_{t+1:\infty,2}] \in \cdot | H_{t-1})$, where $H_{t-1}$ denotes the actions taken in the past and the rewards observed in the past.}\remove{ of $\E[R_{t+1,2} | H_t, R_{t+2:\infty,2}]$ conditional on the history $H_t$.} When the redraw probability $q$ is very large, \addnew{there would have likely been a ``redraw'' shortly after $t$, and therefore the values of future rewards beyond the ``redraw" would have little influence on our knowledge about $R_{t,2}$. Therefore, $\E[R_{t,2} | H_{t-1}, R_{t+1:\infty,2}]$ is relatively close to $\E[R_{t,2} | H_{t-1}]$. In other words, $\E[R_{t,2} | H_{t-1}, R_{t+1:\infty,2}]$ is relatively \emph{insensitive} to \remove{the realized value of}{the sequence of potential outcomes}.} This further implies that this would induce a relatively \emph{small variance} in the sampling distribution 
{$\Pr(\hat{\theta}_{t, 2} \in \cdot | H_{t-1}) 
= \Pr(\E[R_{t,2} | H_{t-1}, R_{t+1:\infty,2}] \in \cdot | H_{t-1})$}
of $\hat{\theta}_{t,2}${.}
\remove{, since multiple values of $R_{t+2:\infty,2}$ can all lead to similar final outcome for $\hat{\theta}_{t,2}$.} 

On the other hand, when the redraw probability $q$ is small, i.e., when the information associated with the second coin is durable, we see the opposite effect { that the sampling variance for $\hat{{\theta}}_{t,2}$ is larger}. In this case, 
{$\Pr(\E[R_{t,2} | H_{t-1}, R_{t+1:\infty,2}] \in \cdot | H_{t-1})$} 
\remove{$\E[R_{t+1,2} | H_t, R_{t+2:\infty,2}]$}
is much more \emph{sensitive} to \remove{the realization of}$R_{t+1:\infty,2}$, beyond just the first few entries, because the agent can more confidently leverage many entries of the future rewards to infer the value of \remove{the next reward}{the immediate reward associated with action $2$}, $R_{t,2}$, knowing that there's hardly any redrawing occurring between now and then. Consequently, we would expect the sampling distribution for $\hat{\theta}_{t,2}$ to have a larger variance. 

In general, all else being equal, increasing the variance of the sampling distribution 
{$\Pr(\hat{\theta}_{t,a} \in \cdot | H_{t-1})$}
of the mean reward estimate of an action {$a$} tends to encourage the exploration of that action. The above analysis therefore suggests that PS naturally tends to favor exploring actions that have higher information durability, a desirable feature. 

To illustrate the performance impact from PS's use of information durability, let us revisit the example in Theorem \ref{theorem:performance_difference_1}. The next theorem shows that PS excels, and in fact achieves near-optimal performance, in a bandit environment where TS failed. A complete proof is provided in Appendix~\ref{appendix:difference}. 
\begin{restatable}{theorem}{maximumdifference}
\label{theorem:performance_difference_2}
For all $\epsilon \in (0, 1)$, 
under the Bernoulli bandit $\nu$ specified in Theorem~\ref{theorem:performance_difference_1}, we have that
\begin{align*}
    \mathrm{Return}(T; \pi_{\mathrm{PS}})  \geq (1-\epsilon) T,  \quad \mbox{for all $T \in \mathbb{Z}_{+}$.}
\end{align*}
\end{restatable}

Finally, we can apply PS to the coin-tossing environments of Figures \ref{fig:two_coins_replace_prob} and \ref{fig:many_coins_n}. It turns out PS executes the optimal policy in both instances. In the two-coin environment of Figure~\ref{fig:two_coins_replace_prob}, each reward {vector} $R_t$ corresponds to the payoff {of selecting different actions}, and 
PS takes the sequence of \remove{future payoffs} 
{all future payoff vectors} $R_{t+1:\infty}$ 
to be the learning target. 
{Recall that Example~\ref{ex:coin_math} with $q = 0.99$ corresponds to this two-coin environment.}
Since the first coin has a known bias of 
{$p_1 = 0.99$}\remove{$0.99$}, PS takes 
{$\hat{\theta}_{t,1} = \mathbb{E}[R_{t,1} | H_{t-1}, R_{t+1:\infty, 1}] = p_1 = 0.99$.}\remove{$\hat{\theta}_{t,1} = 0.99$.} 
Recall that the second coin is replaced with probability 
{$q = 0.99$}\remove{$0.99$}
at each timestep; {more specifically, the bias 
$\{p_{t,2}\}_{t = 1}^{+\infty}$
of the second coin 
transitions according to \eqref{eq:bias_transition}.
}\remove{half of the coins in the bag have bias $0$, and the other half bias $x1$.} Therefore, the sample 
{$\hat{\theta}_{t,2} \sim \Pr(\mathbb{E}[R_{t, 2} | H_{t-1}, R_{t+1:\infty, 2}] \in \cdot | H_{t-1})$}\remove{$\hat{\theta}_{t,2}$}
is close to $0.5$. 
{More specifically, we can easily derive some really loose bounds
$
\mathbb{E}[R_{t, 2} | H_{t-1}, R_{t+1:\infty, 2}]
= 
\Pr(R_{t, 2} = 1 | H_{t-1}, R_{t+1:+\infty, 2}) = \Pr(p_{t, 2} = 1 | H_{t-1}, R_{t+1:+\infty})
\geq q^2/4 \approx 0.245$ 
and 
$
\mathbb{E}[R_{t, 2} | H_{t-1}, R_{t+1:\infty, 2}] 
\leq 1 - q^2/4 = 0.755$.  
Hence, since 
$\hat{\theta}_{t,2} \sim \Pr(\mathbb{E}[R_{t, 2} | H_{t-1}, R_{t+1:\infty}] \in \cdot | H_{t-1})$, 
$\hat{\theta}_{{t-1},2}$ takes values in $[0.245, 0.755]$. } 
Consequently, by maximizing between 
{$\hat{\theta}_{t,1}$ and $\hat{\theta}_{t,2}$}
\remove{$\hat{p}_{t,1}$ and $\hat{p}_{t,2}$}, 
PS only ever selects the first coin and executes the optimal policy in this environment. 

For the $K$-coin environment of Figure~\ref{fig:many_coins_n}, 
{we first present a formal description of it below. 
\begin{example} [\bf{Coin-Tossing Example with $K$ Coins}]
\label{ex:coin_math_k}
Consider $K$ coins. 
The bias of the first coin $p_1 = 0.99$. The sequence $\{p_{t,i}\}_{t = 1}^{+\infty}$ represents the bias of the $i$-th coin, for $i \in \{2, ... , K\}$, and transitions according to 
\begin{align}
\label{eq:bias_transition_K}
p_{1, i} = \begin{cases}
0\ \ & \mathrm{w.p.}\ 0.99\\
1 & \mathrm{w.p.} \ 0.01
\end{cases},
\ \ 
p_{t+1, i} = \begin{cases}
0 & \mathrm{w.p.}\ 0.9901 \mathbf{1}_{\{p_{t,i = 0}\}} + 0.9801 \mathbf{1}_{\{p_{t,i} = 1\}}\\
1 & \mathrm{w.p.}\ 0.0099 \mathbf{1}_{\{p_{t,i} = 0\}} + 0.0199 \mathbf{1}_{\{p_{t,i} = 1\}}
\end{cases},
\end{align}
where $\mathbf{1}$ denotes the indicator function. 
The sequences $\{p_{t,i}\}_{t = 1}^{+\infty}$ are independent across $i \in \{2, ... , K\}$. 
The outcome of the first coin $R_{t,1} \overset{\mathrm{i.i.d}}{\sim} \mathrm{Bernoulli}(p_1)$, independent of the bias of the other coins or the outcomes of the other coins. 
The outcome of the $i$-th coin $R_{t,i} \sim \mathrm{Bernoulli}(p_{t,i})$, for $i \in \{2, ... , K\}$, independent of the bias or outcomes at other times or of the other coin. 
\end{example}
}
{
Example~\ref{ex:coin_math_k} describes a bandit with actions $\actions = \{1, 2, ... , K\}$ and reward process $\{R_t\}_{t = 1}^{+\infty}$. 
In this example}, PS takes $\hat{\theta}_{t,1} = 0.99$ as before. 
\remove{Recall that in this environment, the second coin is again replaced with probability $0.99$ at each timestep, 
but 
$99\%$ of the coins in the bag have bias $0$ and $1 \%$ have bias $1$. Therefore, }
{According to \eqref{eq:bias_transition_K},} 
the sample $\hat{\theta}_{t,2}$ is close to $0.01$. More specifically, 
{by \eqref{eq:bias_transition_K}, $\mathbb{E}[R_{t, 2} | H_{t-1}, R_{t+1:+\infty}]  = \Pr(p_{t, 2} = 1 | H_{t-1}, R_{t+1:\infty}) \in (0, 0.04)$, 
which together with 
$\hat{\theta}_{t,2} \sim 
\Pr(\mathbb{E}[R_{t, 2} | H_{t-1}, R_{t+1:+\infty}] \in \cdot | H_{t-1})$ implies that $\hat{\theta}_{t,2}$ takes values in $(0, 0.04)$. }
\remove{$\hat{\theta}_{t,2} \in (0, 0.02)$.} 

{Each coin from the third to the $K$-th} is an independent copy of the second coin. Therefore, $\hat{\theta}_{t,i} \in (0, 0.04)$ for all $i \in \{2, ... , K\}$. 
By maximizing among 
$\hat{\theta}_{t, 1}, \hat{\theta}_{t, 2}, ... , \hat{\theta}_{t, K}$, 
PS only ever selects the first coin and executes the optimal policy in this environment. Thus, a PS agent accumulates expected payoffs at a rate of $99$\textcent\ per timestep. This is much higher than the rate of at most $2$\textcent\ of the TS agent described in Section~\ref{sec:motivating_discussion}. 

\subsection{PS Coincides with TS in Stationary Bandits}
\label{sec:ps_ts_equiv}
We show here that in a stationary bandit, PS executes the same policy as TS, if the latter uses the reward distribution as the learning target. \add{Recall that, in stationary bandits, there exists an invariant reward distribution—formally a random variable $P$—that characterizes the long-run behavior of rewards. Conditioned on $P$, the rewards are independent and identically distributed according to $P$.} \addnew{For example, in a Gaussian bandit with independent actions, $P$ can be a normal distribution with mean $\theta$ and co-variance matrix $I$, where $\theta$ is a random variable. Here, $P$ is a distribution-valued random variable and the learning target of $t_s$.} The notion of a policy is defined in Section~\ref{sec:bandits}. 
This is a very useful property because we can thus be assured that PS is guaranteed to succeed also in the type of stationary bandits where TS thrives.

{
Before we proceed, recall that the TS agent who targets to learn the reward distribution $P$ of a stationary bandit is presented in Section~\ref{section:learning_target} with $\hat{\chi}_t = P$ for all $t \in \mathbb{Z}_+$. \addnew{.}
The PS agent is presented in Algorithm~\ref{alg:predictive-sampling}. Both agents are provided with identical information and begin with the same belief about the environment. 
}

\begin{restatable}{proposition}{pstsequivalence}
\label{prop:equivalence}
In any stationary bandit where the reward distribution is $P$, a PS agent and a TS agent 
that takes $\chi_t = P$ for all $t \in \mathbb{Z}_+$ 
execute the same policy. 
\end{restatable} 

The proof is given in Appendix~\ref{appendix:equivalence} and leverages
 the fact that in a stationary bandit, the reward distribution is equivalent to PS's learning target; more precisely, the reward distribution $P$ and PS's learning target of the sequence of future 
 {reward vectors}\remove{rewards} 
 $R_{t+1:\infty}$ are equally informative in predicting the immediate reward $R_{t}$. {For example, if $P$ is a normal distribution with mean $\theta$ and variance 1, then knowing $P$ is equivalent to knowing $\theta$, and observing $R_{t+1:\infty}$, an infinite sequence of i.i.d. samples from $\mathcal{N}(\theta, 1)$, provides the same predictive information about $R_{t}$.}

\section{Regret Analyses}
\label{sec:regret}

The previous sections have presented several desirable properties of PS and showcased their performance implications in some examples. The goal of this section is to provide general theoretical guarantees on the performance of PS for a much wider range of environments. 

To do so, we will first introduce a notion of regret for non-stationary bandits.  
We then establish a theoretical framework for information-theoretic regret analyses, which generalizes that developed by \cite{RussoMOR2014} for stationary bandits. 
Critical to the framework is a new information ratio and the concept of predictive information. 
We establish a regret bound that applies to any agent. 
We specialize the bound to PS and then further to non-stationary Bernoulli bandits. These bounds suggest that PS performs well across a range of such bandits.

\subsection{Performance and Regret}
\label{sec:regret_def} 
Regret is a widely used metric in the bandit learning literature that measures the difference between the rewards collected by an oracle and that by an agent. While traditionally the oracle is one who knows and chooses the optimal action, the notion of an optimal action does not easily extend to a non-stationary environment {because the ``expected reward" at each timestep cannot be unambiguously defined. This issue is discussed extensively in \citep{liu2023definition}. We provide a simple example in Appendix~\ref{appendix:regret_non-stationary}, showing that multiple valid and yet contradicting ``expected rewards" can exist.}

{With the above context in mind,} the first definition of regret that we will consider involves an oracle that sees the entirety of all realizations of past rewards from all actions, $R_{1: {t-1}}$, and selects the action at each timestep to maximize the expected reward $\E[R_{t, a} | R_{1: {t-1}}]$. 

\begin{definition}
[\bf{Hindsight Regret}]
\label{def:regret}
For all policies $\pi$ and $T \in \mathbb{Z}_+$, 
the hindsight regret associated with a policy $\pi$ over $T$ timesteps is
\begin{align}
\mathrm{Regret}_{\mathrm{H}}(T; \pi) = \sum_{t=1}^{T} \E \left[R_{t, *} - R_{t, A_t^{\pi}}\right],
\label{eq:regret}
\end{align}
where   
$R_{t, *} =  
\max_{a \in \actions}\E[R_{t, a} | R_{1:t-1}]$.  
\end{definition}
Because a real agent cannot observe the rewards of the actions that she did not choose in a given timestep, the aforementioned oracle is  ensured to outperform any agent. This fact is formalized in the following result, which indicates that the hindsight regret is always non-negative; the proof is given in Appendix~\ref{appendix:regret_nonnegative}.

\begin{restatable}{proposition}{regretnonnegative}
For all policies $\pi$ and $T \in \mathbb{Z}_{+}$,
\begin{align*}
 \sum_{t=1}^{T} \E \left[R_{t, *}\right] \geq \sum_{t = 1}^{T} \E \left[R_{t, A_t^{\pi}}\right].
\end{align*}
\label{proposition:baseline}
\end{restatable}

Because it can be difficult to directly analyze the hindsight regret in general bandit environments, we will also introduce a second regret definition, which will serve as a more tractable proxy for hindsight regret within the family of non-stationary bandits that we will focus on. Instead of an oracle who sees all past rewards, we consider an oracle that instead has access to 
{all \emph{future} reward vectors,}\remove{the realizations of all \emph{future} reward realizations,} and subsequently chooses the action that maximizes the conditional mean reward $\E[R_{t,a}|R_{t+1:\infty}]$. 
\begin{definition}
[\bf{Foresight Regret}]
\label{def:strong_regret} 
For all policies $\pi$ and $T \in \mathbb{Z}_+$, 
the foresight regret associated with a policy $\pi$ over $T$ timesteps is
\begin{align}
\mathrm{Regret}_{\mathrm{F}}(T; \pi) = \sum_{t=1}^{T} \E \left[R_{t, \overline{*}} - R_{t, A_t^{\pi}}\right],
\label{eq:foresight_regret}
\end{align}
where 
$R_{t, \overline{*}} =  
\max_{a \in \actions}\E[R_{t, a} | R_{t+1:\infty}]$.  
\end{definition}

The foresight regret has an advantage in tractability. Because our main aim in this paper is to characterize the performance of PS, which takes simulated future trajectories as a crucial input, foresight regret is more amenable to analysis because the oracle therein also uses future reward trajectories to drive actions. Furthermore, we will show in the subsequent section that, within a family of non-stationary bandits with a reversibility structure, foresight regret is always no smaller than hindsight regret, thus making it useful as a tool to upper-bound the latter. 

Another benefit of foresight regret is that it generalizes the conventional notion of regret used in stationary bandit learning literature \citep{lai1985asymptotically, neu2022lifting}. To see why, observe that in a stationary bandit {with reward distribution $P$}, an oracle that sees all future {reward vectors $R_{t+1:\infty}$} \remove{rewards}{can recover $P$ from these rewards and} 
will simply pick the action with the largest mean 
{$\mathbb{E}[R_{t,a}|P]$}
at all timesteps. 
{For instance, consider a bandit where $R_{t,a} = 0$ for all $t$ or $R_{t,a} = 1$ for all $t$ with equal probability. In this case, 
$R_{t, \overline{\star}} = \max_{a \in \actions}\E[R_{t,a}|R_{t+1:\infty}] = \max_{a \in \actions}R_{t+1,a} = \max_{a \in \actions} R_{1,a}$ and we recover the conventional benchmark.} As a result,
the upper bounds we derive with foresight regret can then \addnew{be} compared to known bounds on conventional regret when the bandit is stationary.

\subsection{Reversible Bandits}
\label{sec:reversible}
{Many}\remove{The majority} of our theoretical results will focus on the performance of an agent in a class of non-stationary bandits which we will refer to as reversible bandits. 
{We explicitly state the reversibility assumption in results that require it. The reversible bandits}\remove{They}
are defined as follows: 
\begin{definition} [\bf{Reversible Bandit}]
A bandit is reversible if the reward process $\{R_t\}_{t = 1}^{+\infty}$ is reversible in the following sense: for all $T \in \mathbb{Z}_{+}$, $\{R_t\}_{t=1}^T$ and $\{R_{T+1-t}\}_{t=1}^T$ have the same joint distribution. 
\end{definition}

Reversible bandits encompass a wide range of bandit environments. All stationary bandits are reversible, so are all non-stationary bandits discussed in this paper, including the modulated Bernoulli bandits,  AR(1) bandits, and AR(1) logistic bandits. There are certainly non-stationary bandits that are not reversible, whose analyses will be outside the scope of this paper but we believe can be an interesting direction for future work. 

The following result shows that in a reversible bandit, the hindsight regret (Definition \ref{def:regret}) is always bounded from above by the foresight regret (Definition \ref{def:strong_regret}); the proof is provided in Appendix~\ref{appendix:strong_regret}. As a consequence, we will be focusing on characterizing the foresight regret of an agent in the remainder of the paper, with the understanding that any upper bound will also apply to the hindsight regret when the bandit is reversible. 
\begin{restatable}{proposition}{strongregret}
Suppose the bandit is reversible. For all policies $\pi$ and $T \in \mathbb{Z}_+$, 
\begin{align*}
\mathrm{Regret}_{\mathrm{H}}(T; \pi) \leq {\mathrm{Regret}}_{\mathrm{F}}(T; \pi). 
\end{align*}
\label{proposition:strong_regret}
\end{restatable}
\subsection{Predictive Information and a New Information Ratio}
\label{sec:entropy_analysis}

As our primary analytical framework, we will be using a variant of the information-theoretic regret analysis developed in the stationary bandit learning literature. 
{Before we proceed, let us introduce some information-theoretic concepts and notation, specifically the entropy and mutual information. We introduce these concepts as defined in \citep{cover1999elements} for discrete random variables. The continuous generalization is presented in \citep{gray2011entropy}. 

Let $X$, $Y$, and $Z$ be discrete random variables taking values in $\mathcal{X}$, $\mathcal{Y}$, and $\mathcal{Z}$, respectively. Then the entropy of $X$ is defined as: 
\begin{align*}
\H(X) = -\sum_{x \in \mathcal{X}}\Pr(X = x) \log \Pr(X = x).
\end{align*}
Here, the $\log$ is base $2$, and entropy is expressed in bits. Additionally, we use the convention that $0\log 0 = 0$. Entropy measures the uncertainty of a random variable, quantifying information required to learn about it. Entropy is always non-negative. 

The conditional entropy $\H(Y|X)$ is defined as: 
\begin{align*}
\H(Y|X) = \sum_{x \in \mathcal{X}}\Pr(X = x) \H(Y|X = x) = -\sum_{x \in \mathcal{X}} \sum_{y \in \mathcal{Y}}\Pr(X = x, Y = y) \log \Pr(Y = y | X = x). 
\end{align*}
It measures the uncertainty of $Y$ given full knowledge of $X$, or the additional information required to learn about $Y$ given full knowledge of $X$. 

The mutual information between $X$ and $Y$ is defined as: 
\begin{align*}
\I(X; Y) = \sum_{x \in \mathcal{X}} \sum_{y \in \mathcal{Y}}
\Pr(X = x, Y = y) \log \frac{\Pr(X = x, Y = y)}{\Pr(X = x) \Pr(Y = y)}.
\end{align*}
It is noteworthy that mutual information is symmetric around its arguments $\I(X; Y) = \I(Y; X)$. 
It measures the dependency between two random variables. 
The following equality establishes a connection between entropy and mutual information:
$
\I(X; Y) = \H(X) - \H(X | Y).
$
Intuitively, this equality shows that mutual information quantifies the reduction of uncertainty of one random variable due to another or the amount of information acquired about one random variable by acquiring full knowledge about another. Similarly, conditional mutual information can be defined as $
\I(X; Y | Z) = \H(X | Z) - \H(X | Y, Z). 
$
}

{The information-theoretical regret analysis}\remove{This analysis} 
was first used by \citep{ JMLR:v17:14-087} for analyzing TS and subsequently extended to many other settings \citep{bubeck2015bandit, lattimore2019information, lu2021reinforcement, neu2022lifting}. The key idea is to bound the regret in terms of a certain information ratio, $\Gamma$, defined to be the ratio between a function of the immediate regret and the information gain about a learning target. The logic is that a ``good'' agent who optimally balances between information acquisition and reward maximization ought to have a relatively small information ratio, and \emph{vice versa}. While proven extremely versatile in analyzing various stationary bandit learning agents, the aforementioned framework does not apply to non-stationary environments. One of the key obstacles is that the information ratio originally defined for stationary bandits do not generalize easily. 

To overcome this challenge, we introduce in this paper a new information ratio that is better suited to our regret analysis for non-stationary bandits. The first change is that we now measure immediate regret with respect to the benchmark $R_{t,\overline{*}}$ as per Definition \ref{def:strong_regret}, instead of the reward associated with the optimal action as in stationary bandits. The second, and more important, change is that we now measure information gain with respect to a new learning target, the sequence of future
{reward vectors}\remove{rewards}, $R_{t+1:\infty}$, in contrast to using the current reward distribution as the learning target in stationary bandits.  Formally, we define 
the information ratio associated with policy $\pi$ at timestep $t$ be 
\begin{align}
    \Gamma_t^{\pi} =  \frac{\E\left[R_{t, \overline{*}} - R_{t, A_t^{\pi}} \right]^2}{\I\left(R_{t+1:\infty}; A_t^{\pi}, R_{t, A_t^{\pi}} | H_{t-1}\right)}. 
\label{eq:information_ratio}
\end{align} This information ratio measures how an agent trades off between a single-timestep regret and information about $R_{t+1:\infty}$. Since much of the paper studies PS, we simplify the notation in this case and use $\Gamma_t$ to denote the information ratio associated with PS.

As mentioned earlier, the foresight regret coincides with the conventional regret definition in stationary bandits. Similarly, it is easy to show that the information gain defined here also coincides with one for stationary bandits (cf.~\cite{neu2022lifting}).

Next, we introduce the notion of predictive information.
It represents the new information that is being injected to the environment during each timestep, due to non-stationarity. Specifically, for all  $t$, we define the \emph{incremental predictive information} at time $t$ as the mutual information between the next reward $R_{t+1}$ and the sequence of all future 
{reward vectors}\remove{rewards} 
$R_{t+2:\infty}$, conditional on $R_{1:t}$. The definition is formalized below.

\begin{definition} [\bf{Incremental Predictive Information}]
For all $t \in \mathbb{Z}_+$, the incremental predictive information at timestep $t$ is
\begin{align*}
\Delta_t = 
\I(R_{t+1:\infty}; R_{t} | R_{1:{t-1}}). 
\end{align*}
\label{def:predictive_information}
\end{definition}
Intuitively, $\Delta_t$ measures the \emph{incremental} information one can learn about future rewards
$R_{t+1:\infty}$ by observing current reward $R_{t}$, \emph{after} having already observed all past rewards $R_{1:t-1}$. If the environment is highly stationary, then as $t$ increases, there is little additional information gained from observing the current reward, because most of the information about future rewards are already contained in the past rewards $R_{1:t-1}$. On the other hand, when the environment is highly non-stationary, historical data plays a less important role, and observing the immediate reward $R_{t}$ can be extremely helpful in predicting future rewards no matter how large $t$ is. In summary, one would expect that the predictive information $\Delta_t$ to decay as $t$ increases in a stationary environment, but to remain relatively large in a non-stationary one. This behavior is  consistent with what we would expect from a good metric to quantify information. 

Finally, we refer to the cumulative sum of incremental predictive information, $\sum_{t'=1}^{t}\Delta_{t'}$,  as the \emph{cumulative predictive information} at time $t$, representing the total new uncertainty that has been injected into the environment thus far. 
{The cumulative predictive information is bounded for stationary environments. The following proposition establishes this. 
\begin{restatable}{proposition}{predictiveinformations}
Consider a stationary bandit with reward distribution $P$. The cumulative predictive information satisfies 
$
\sum_{t = 1}^{+\infty} \Delta_t = \H(P).$
\label{proposition:predictive_information_stationary}
\end{restatable}
The proof is presented in Appendix~\ref{appendix:predictive_information_stationary}. 
When the reward distribution $P$ is parameterized by a mean reward vector $\theta$, from Proposition~\ref{proposition:predictive_information_stationary} we recover $\sum_{t = 1}^{+\infty} \Delta_t = \H(\theta)$. 

The cumulative predictive information is typically linear in most non-stationary environments we care about. For example, in the coin-tossing game of Figure~\ref{fig:two_coins_replace_prob}, which was formally introduced in Example~\ref{ex:coin_math} with $q = 0.99$, the incremental predictive information at each time $t \in \mathbb{Z}_+$ and $t > 1$ satisfies  
\begin{align*}
\Delta_t = \I(R_{t+1:\infty}; R_{t} | R_{1:t-1})
= \I(R_{t+1:\infty}; R_{t} | R_{t-1})
= \I(R_{t+1}; R_{t} | R_{t-1})
= \I(R_{3}; R_{2} | R_{1}) > 0, 
\end{align*}
where the second equality follows from the fact that $R_{1:t-2} \perp R_{t:\infty}| R_{t-1}$
and the third equality follows from the fact that $R_{t} \perp R_{t+2:\infty} | R_{t-1}, R_{t+1}$. Thus, the cumulative predictive information $\sum_{t^{\prime} = 1}^t \Delta_{t^{\prime}} = (t-1) \I(R_3; R_2|R_1) + \I(R_2; R_1)> 0$ is linear in time $t$.  

}

The cumulative predictive information will play a crucial role in our analysis, and deriving an upper bound on it often allows us to further upper-bounding the regret itself.

\subsection{Regret Bounds for General Agents}
\label{sec:general_regret}
We are now ready to present our regret analysis. We begin with Theorem~\ref{theorem:general_regret}, which establishes a general upper bound on foresight regret that applies to any agent in any bandit.  
In particular, the bound is expressed in terms of the sum of the information ratios $\sum_{t = 1}^{T-1}\Gamma_t^{\pi}$ and the cumulative predictive information $\sum_{t = 1}^{T}\Delta_t$.

\begin{restatable}{theorem}{generalregret}
For all policies $\pi$ and $T \in \mathbb{Z}_+$, 
\begin{align*}
    {\mathrm{Regret}}_{\mathrm{F}}(T; \pi) \leq \sqrt{\sum_{t = 1}^{T} \Gamma_t^{\pi} \sum_{t = 1}^{T}\Delta_t}. 
\end{align*}
\label{theorem:general_regret}
\end{restatable}
The proof is provided in Appendix~\ref{appendix:general_regret}. 
The proof leverages the fact that when an agent incurs regret, it must acquire some amount of information that is relevant for predicting 
future 
{reward vectors}\remove{rewards} 
$R_{t+1:\infty}$. Therefore, we can upper-bound the cumulative regret by how the agent trades off between immediate regret and such information, which is measured by the sum of information ratios $\sum_{t = 1}^{T}\Gamma_t^{\pi}$, and the cumulative predictive information $\sum_{t = 1}^{T} \Delta_t$, which upper-bounds the total information that an agent acquires to predict future rewards. 

We will subsequently use the general regret bound of Theorem~\ref{theorem:general_regret} by separately bounding the two constituent terms, $\sum_{t = 1}^{T} \Gamma_t^{\pi}$ and $\sum_{t = 1}^{T}\Delta_t$, respectively. The former depends on the choice of the agent or algorithm, while the latter is a function only of the underlying bandit environment. Because the cumulative predictive information is agent-independent, we will begin with it and first introduce Lemma~\ref{lemma:markov_info}. The lemma provides an elegant approach towards bounding the cumulative predictive information in any bandit; the proof is provided in Appendix~\ref{appendix:predictive_information}. 

\begin{restatable}{lemma}{markovinfo} 
{Let $\{S_t\}_{t = 1}^{+\infty}$ 
be a Markov process 
such that, \remove{for all $t \in \mathbb{Z}_+$, 
$S_{t+1:\infty} \perp R_{1:t}| S_t$
and $R_{t+2:\infty} \perp R_{t+1} | S_{t+2}, R_{1:t}$.}{for all $t \in \mathbb{Z}_+$, 
conditioned on $S_t$, $R_t$ is independent of the other elements in $\{S_t\}_{t = 1}^{+\infty}$ and $\{R_t\}_{t = 1}^{+\infty}$ associated with different times, i.e.,
\[
R_{t} \perp (S_{1:t-1}, S_{t+1:\infty}, R_{1:t-1}, R_{t+1:\infty}) | S_t.
\]
}} 
{For} all $T \in \mathbb{Z}_+$, the cumulative predictive information satisfies 
\begin{align*}
\sum_{t = 1}^{T} \Delta_t  \leq 
\I(S_2; S_1) + 
\sum_{t = 1}^{T-1} \I(S_{t+2}; S_{t+1} | S_{t}).  
\end{align*}
\label{lemma:markov_info}
\end{restatable}

\remove{The random variable $S_t$ in the above lemma can be thought of as the hidden state of the bandit at timestep $t$. }

The random variable $S_t$ in the above lemma can be thought of as the hidden state of the bandit at timestep $t$. By suitably constructing $\{S_t\}_{t=1}^{+\infty}$, we can apply this lemma to bound predictive information in any bandit. In particular, this can be achieved by letting $S_t = R_{1:t}$ for all $t \in \mathbb{Z}_+$. For some bandits, there is a better choice of $\{S_t\}_{t=1}^{+\infty}$ with which we can derive sharper bounds or more interpretable bounds when applying the lemma. For instance, in Example 1 which models the coin-tossing game, we can define hidden states to be the coin biases by setting $S_t = (p_1, p_{t,2})$ for all $t \in \mathbb{Z}_+$. In addition, in Example 3 of Section~\ref{sec:modulated_bernoulli}, which we will introduce shortly, we can let $S_t = \theta_t$ for all $t \in \mathbb{Z}_+$.

It is worth noting that, by combining Lemma~\ref{lemma:markov_info} and Theorem~\ref{theorem:general_regret}, and specializing the resulting bound to stationary bandits, we can recover existing regret bounds for stationary bandits. See Appendix~\ref{appendix:recover_stationary} for this result. 

\subsection{Regret Bounds for PS}
We now turn our attention to using the general regret bound of Theorem \ref{theorem:general_regret} to obtain sharper regret bounds for PS in various non-stationary bandits.  

We begin by obtaining an upper bound on the information ratio $\Gamma_t = \Gamma_t^{\pi_\mathrm{PS}}$ associated with PS, expressed in terms of the number of actions and the variance of the rewards. The proof of the following result is provided in Appendix~\ref{appendix:info_ratio}. 
\begin{restatable}{lemma}{ir}
\label{lemma:ir}
If for all $t \in \mathbb{Z}_+$ and $a \in \actions$, $\Pr(R_{t, a} \in \cdot|H_{t-1})$ is almost surely $\sigma_{\text{SG}}$-sub-Gaussian, then for all $t \in \mathbb{Z}_+$, the information ratio associated with PS satisfies 
\begin{align*}
\Gamma_t \leq  2|\actions|\sigma_{\mathrm{SG}}^2. 
\end{align*}
\end{restatable}

The following theorem follows directly from combining Theorem~\ref{theorem:general_regret} and Lemma~\ref{lemma:ir}. It establishes a regret bound for PS in terms of the cumulative predictive information. 
\begin{restatable}{theorem}{psregret}
\label{th:ps_regret}
If for all $t \in \mathbb{Z}_+$ and $a \in \actions$, $\Pr(R_{t, a} \in \cdot|H_{t-1})$ is almost surely $\sigma_{\text{SG}}$-sub-Gaussian, then for all $T \in \mathbb{Z}_+$, the regret of PS satisfies 
\begin{align*} 
{\mathrm{Regret}}_{\mathrm{F}}(T) \leq  
    \sqrt{2 |\actions| \sigma_{\mathrm{SG}}^2 T \sum_{t = 1}^{T}\Delta_t}. 
\end{align*}
\end{restatable}
Finally, using Lemma~\ref{lemma:markov_info} to better characterize the cumulative predictive information, we obtain the following refinement. 

\begin{corollary}
\label{cor:regret}
{Let $\{S_t\}_{t = 1}^{+\infty}$ 
be a Markov process 
such that,} {for all $t \in \mathbb{Z}_+$, 
$R_{t} \perp R_{1:t-1, t+1:\infty}, S_{1:t-1, t+1:\infty}| S_t$.}
{for all $t \in \mathbb{Z}_+$, 
conditioned on $S_t$, $R_t$ is independent of the other elements in $\{S_t\}_{t = 1}^{+\infty}$ and $\{R_t\}_{t = 1}^{+\infty}$ associated with different times, i.e., $R_{t} \perp (S_{1:t-1}, S_{t+1:\infty}, R_{1:t-1}, R_{t+1:\infty}) | S_t$.} 
\remove{If the states $\{S_t\}_{t = 1}^{+\infty}$ is a Markov process, and} 
{If for} all $t \in \mathbb{Z}_+$ and $a \in \actions$, $\Pr(R_{t, a} \in \cdot|H_{t-1})$ is almost surely $\sigma_{\text{SG}}$-sub-Gaussian, then for all $T \in \mathbb{Z}_+$, the regret of PS satisfies 
\begin{align*}
{\mathrm{Regret}}_{\mathrm{F}}(T) \leq \sqrt{2|\actions| \sigma_{\mathrm{SG}}^2 T \left[\I(S_2;S_1) + \sum_{t = 1}^{T-1} \I(S_{t+2}; S_{t+1} | S_{t})\right]}.
\end{align*}
\end{corollary}
Note that specializing the bound of PS in Corollary~\ref{cor:regret} to stationary bandits, we can recover existing regret bounds for TS in stationary bandits. See Appendix~\ref{appendix:recover_stationary} for this result. We will also use this corollary  to derive regret bounds for PS in more specialized bandit environments in the next subsection.

\subsection{Regret Bounds in Modulated Bernoulli Bandits}
\label{sec:modulated_bernoulli}

Building on the results from the previous section, we now focus on a specific model of non-stationary reversible bandits that will enable us to derive a sharper and more interpretable regret bound. 

A majority of the existing literature on non-stationary bandit learning focuses on  Bernoulli bandits, and for this reason we will focus our analysis on this family as well. Below, we consider a family of non-stationary Bernoulli bandits that generalizes an abrupt switching model introduced by \cite{pmlr-v31-mellor13a}. 
It is easy to verify that bandits in  family are reversible, and also  encompass both coin-tossing environments introduced in Section~\ref{sec:motivating_discussion}.

\begin{example} [\bf{Modulated Bernoulli Bandit}]
\label{ex:modulated_bernoulli}
\noindent Consider a Bernoulli bandit with independent actions. 
For all $a \in \actions$, let $\{\theta_{t,a}\}_{t = 1}^{+\infty}$ be a sequence of random variables.  
We refer to $\theta_{t,a}$ as the mean reward associated with action $a$ at timestep $t$; 
conditioning on $\theta_{t,a}$, the reward $R_{t, a} \sim$ Bernoulli($\theta_{t,a}$), independent of the rewards and mean rewards associated with other timesteps or actions. 
Each mean reward sequence $\{\theta_{t,a}\}_{t = 1}^{+\infty}$
transitions according to 
\begin{align*}
    \theta_{t+1, a} = 
    \begin{cases}
    \sim \Pr(\theta_{1,a} \in \cdot),  &\ \text{with probability $q_a$}, \\
    \theta_{t,a}, &\ \text{otherwise}, 
    \end{cases}
\end{align*}
where $q_a \in [0, 1]$ is deterministic and known. 
At each timestep, the mean reward 
$\theta_{t,a}$
can be thought of as ``redrawn" from its initial distribution independently with probability $q_a$.  
\end{example}

{Modulated Bernoulli bandits serve as a stylized model for real applications where non-stationarity manifests as abrupt changes. In a modulated Bernoulli bandit, each $q_a$ determines the frequency of abrupt changes associated with action $a$. 
As we have discussed, modulated Bernoulli bandits generalize a model introduced in \citep{mellor2013thompson}; various other models incorporating abrupt changes have been explored in \citep{abbasi2022new, pmlr-v99-auer19a, besbes2019optimal, besson2019generalized, cheung2019learning, chen2024non, ghatak2021kolmogorov, gupta2011thompson, hartland2006multi, luo2018efficient, mellor2013thompson, raj2017taming, wei2018abruptly, viappiani2013thompson, zhao2020simple}. }

We are now ready to present the main result of this section: an upper bound on the regret of PS in a modulated Bernoulli bandit. The proof, presented in Appendix~\ref{appendix:modulated_bernoulli}, leverages Corollary~\ref{cor:regret} by carefully constructing a sequence $\{S_t\}_{t = 1}^{+\infty}$ where we set $S_t = \theta_{t}$ for all $t \in \mathbb{Z}_+$. 

\remove{\begin{restatable}{theorem}{regretbernoulli}
\label{theorem:regret_bernoulli}
In a modulated Bernoulli bandit, for all $T \in \mathbb{Z}_+$, the regret of PS satisfies 
 $${\mathrm{Regret}}_{\mathrm{F}}(T) \leq \sqrt{\frac{1}{2} |\actions| T\left\{ \sum_{a \in \actions} (1 - q_a) \H(\theta_{0,a})
+ (T-1) \sum_{a \in \actions} \left[2 \H(q_a) + q_a(1 - q_a) \H(\theta_{0,a})\right]\right\}}.$$
\end{restatable}
}
\begin{restatable}{theorem}{regretbernoulli}
\label{theorem:regret_bernoulli}
In a modulated Bernoulli bandit, for all $T \in \mathbb{Z}_+$, the regret of PS satisfies 
 \begin{align*}
 {\mathrm{Regret}}_{\mathrm{F}}(T) \leq &\ 
\sqrt{\frac{1}{2} |\actions| T\left\{ \I(\theta_2; \theta_1) + (T - 1) \I(\theta_3; \theta_2 | \theta_1) \right\}} \\
\leq &\ \sqrt{\frac{1}{2} |\actions| T\left\{ \sum_{a \in \actions} (1 - q_a) \H(\theta_{1,a})
+ (T-1) \sum_{a \in \actions} \left[2 \H(q_a) + q_a(1 - q_a) \H(\theta_{1,a})\right]\right\}}.
\end{align*}
\end{restatable}
\remove{A first observation is that the above regret upper bound scales linearly in horizon, $T$. Interestingly, this scaling matches the next regret lower bound that we present, suggesting that the linear scaling in $T$ in our upper bound is tight. To the best of our knowledge, this is the first known linear regret lower bound for modulated Bernoulli bandits; a proof is provided in Appendix~\ref{appendix:lower_bound}.}
{We now present Theorem~\ref{theorem:lower_bound}, which establishes a regret lower bound in modulated Bernoulli bandits.}

\begin{restatable}{theorem}{regretlowerbound}
There exists a modulated Bernoulli bandit and a constant $C \in \mathbb{R}_{+}$ 
such that, for all policies $\pi$ and $T \in \mathbb{Z}_{+}$, the \addnew{foresight} regret satisfies 
\begin{align*}
   {\mathrm{Regret}}_{\mathrm{F}}(T; \pi) \geq CT.
\end{align*}
\label{theorem:lower_bound}
\end{restatable}

{A key observation is that both the regret upper bound for PS and the regret lower bound scale linearly with horizon $T$. In a typical non-stationary environment, such as a modulated Bernoulli bandit, regret grows linearly because the learning target continuously evolves. In this setting, the leading constant represents the average regret of the bandit policy in non-stationary environments, providing a useful measure for distinguishing weak algorithms from strong ones.

The regret upper bound for PS established in Theorem~\ref{theorem:regret_bernoulli} is meaningful for two main reasons. First, its leading term depends on simple and interpretable primitives in a natural way—namely, predictive information, represented as the sum of mutual information between parameters. Second, in certain cases, the regret upper bound for PS is provably and significantly better than what can be achieved by TS and certain other algorithms or bounds in the literature. Below, we illustrate the second point by analyzing how the regret upper bound suggests that PS performs competitively in extreme cases where $q_a = 0$ or $q_a = 1$, while some existing algorithms and bounds fail to do so. Additionally, through an example, we demonstrate that the regret upper bound provides meaningful performance guarantees and qualitative insights into PS across a wider range of parameter values beyond these extremes.}

\remove{Together, Theorems \ref{theorem:regret_bernoulli} and \ref{theorem:lower_bound} serve as encouraging evidence that PS performs competitively in modulated Bernoulli bandits.}
{The regret upper bound of PS suggests that PS performs competitive in extreme cases where $q_a$ is close to $0$ or $1$. }\remove{Furthermore, we can use Theorem~\ref{theorem:regret_bernoulli} to investigate how the performance of PS depends on various key parameters of the environment. 
Crucially, the bound exhibits a graceful dependence on $q_a$, the probability that a coin is redrawn in each timestep. }
On one hand, when $q_a = 0$ for all $a$, i.e., when the environment is stationary, the bound becomes ${\mathrm{Regret}}_{\mathrm{F}}(T) = \sqrt{\frac{1}{2}|\actions|T \H(\theta_1)}$, which recovers a current best known regret bound for TS \citep{neu2022lifting}. On the other hand, as the $q_a$'s approach $1$, the regret bound approaches $0$, suggesting that PS performs well at the other extreme. This corresponds to the cases where coins are replaced very frequently. When $q_a = 0$ for some actions $a \in \actions_1$, and $q_a = 1$ for the remaining actions $a \in \actions \backslash \actions_1$, the regret bound becomes ${\mathrm{Regret}}_{\mathrm{F}}(T) = \sqrt{\frac{1}{2}|\actions|T \sum_{a \in \actions_1}\H(\theta_{1,a})}$, a value that is small and further upper-bounded by the regret bound established for the case where $q_a = 0$ for all $a \in \actions$. \remove{In summary, our regret bounds suggest that PS performs competitively at the extremes where each $q_a$ is close to either $0$ or $1$.}

{It is noteworthy that existing results, when applied to the modulated Bernoulli bandits of Example~\ref{ex:modulated_bernoulli}, often result in {less competitive performance or} significantly larger regret bounds when the environment exhibits high levels of non-stationarity (i.e., $q_a$ approaching $1$). For example, \cite{besbes2019optimal} establish a regret upper bound for an algorithm, Rexp3, under a different regret metric, in a frequentist setting. When applied to modulated Bernoulli bandits, this bound is linear in $T$ and increases as each $q_a$ increases from $0$ to $1$.  Further details are available in Appendix~\ref{appendix:existing_bounds}.} {Additionally, recall that TS does not perform well in the coin-tossing game introduced in Section~\ref{sec:random_coins}, which can be formulated as a modulated Bernoulli bandit with $q_1 = 0$ and $q_2 = 0.99$. TS also struggles when $q_1 = 0$ and $q_2$ is close to 1, though not necessarily $0.99$. 
Thus, TS serves as another example of an algorithm that fails to perform well in a modulated Bernoulli bandit under extreme conditions. 
}

{We will use a simple family of environments to demonstrate how to leverage the regret upper bound to derive meaningful performance guarantees and qualitative insights on PS in more general parameter ranges beyond the extremes.

\begin{restatable}{example}{modulatedBernoulliBandit} {\bf{(A Class of Modulated Bernoulli Bandit with Two Actions)}}
Consider a modulated Bernoulli bandit with two actions $\mathcal{A} = \{1, 2\}$. Let $\theta_{t, 1} = 0.9$ for all $t \in \mathbb{Z}_+$. Let $\theta_{1, 2} \sim \mathrm{unif}(\{0, 1\})$. 
\label{ex:modulated_bernoulli_example}
\end{restatable}

This example represents a class of modulated Bernoulli bandits, each characterized by the parameter $q_2$. This also describes a set of coin-tossing games, where the first coin has a known bias of $0.9$, and the second coin is replaced at each timestep with probability $q_2$. 

If PS outperforms both a random policy with uniform action selection and TS, it indicates that PS exhibits intelligent action selection.  
\remove{We proceed to plot the first regret upper bound of PS established by Theorem~\ref{theorem:regret_bernoulli}, alongside with regret lower bound of TS when $q_2 > 0.2$, and a regret lower bound of a random (uniform) policy. (For detailed information on the regret lower bounds, please refer to Appendix~\ref{appendix:example_random_TS}.)} {We plot a regret upper bound of PS, alongside a regret lower bound of TS when $q_2 > 0.2$ and a regret lower bound of a random (uniform) policy. 
The regret bounds mentioned above pertain to the average regret, which at time $T$ is defined as the difference between the expected rewards accumulated by an agent and those accumulated by an optimal agent, divided by $T$. 
To derive an upper bound on the regret for PS, we divide the first inequality in Theorem~\ref{theorem:regret_bernoulli} by $T$, as forward regret divided by $T$ upper-bounds the average regret. For details on the regret lower bounds, see Appendix~\ref{appendix:example_random_TS}.} 

Specifically, Figure~\ref{fig:avg_regret} plots these regret bounds in Example~\ref{ex:modulated_bernoulli_example} across different values of $q_2$.  
The plot indicates that when $q_2 > 0.78$, Theorem~\ref{theorem:regret_bernoulli} establishes that PS provably outperforms a random policy, and when $q_2 > 0.85$, PS also outperforms TS. In summary, these experiments indicate that our regret bounds establish that PS exhibits intelligent action selection in a range of environments as $q_2$ moves away from $1$. 
\begin{figure} [!ht]
\centering
\begin{subfigure}[b]{0.45\textwidth}
\centering
\includegraphics[width=\textwidth]{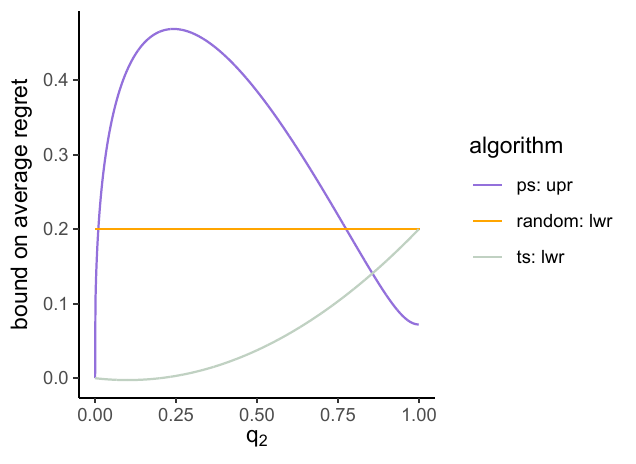}
\caption{When $q_2 \in [0, 1]$}
\label{fig:avg_regret_wide}
\end{subfigure}
\hfill
\begin{subfigure}[b]{0.45\textwidth}
\centering
\includegraphics[width=\textwidth]{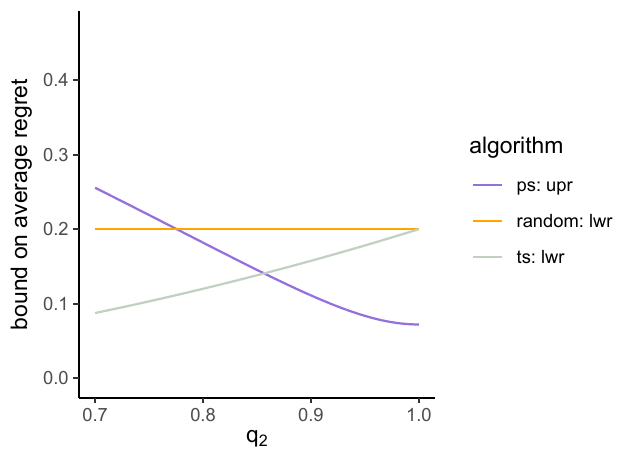}
\caption{When $q_2 \in [0.7, 1]$}
\label{fig:avg_regret_zoom}
\end{subfigure}
\caption{The regret upper bound of PS {('ps:upr')} and regret lower bounds of TS {('ts:lwr')} and a random (uniform) policy {('random:lwr')} in Example~\ref{ex:modulated_bernoulli_example} with various values of $q_2$}
\label{fig:avg_regret}
\end{figure}
}

\section{Efficient Implementations and Experiments}
\label{sec:experiments}
While the PS procedure is very easy to state, deriving the conditional probability distribution and subsequently sampling from the distribution can be computationally challenging. To address this, this section introduces efficient procedures to execute PS 
\remove{in some bandits 
and useful techniques to execute an approximation of the algorithm in others. 
In particular, we develop an efficient implementation of PS} in a class of Gaussian bandits. 
{Specifically, the execution of PS circumvents explicitly sampling of future reward sequences and direct computation of $\hat{\theta}_t$. Instead, $\hat{\theta}_t$ is sampled from a Gaussian distribution with a mean and variance that are updated iteratively. }
We also design efficient procedures to execute an approximation of PS in a class of logistic bandits, in Appendix~\ref{appendix:ar_logistic}. To examine the advantage of PS over TS, we conduct experiments\remove{in the aforementioned class of Gaussian bandits and logistic bandits}. The results suggest that PS consistently outperforms TS over time in environments with varying information durability. Additionally, we compare PS with other representative non-stationary bandit learning algorithms and show that PS consistently outperforms them.

\subsection{PS in AR(1) Bandits}
\label{sec:ps_ar1}
We consider a class of Gaussian bandits, which we refer to as AR(1) bandits, where each bandit is determined by a sequence $\{\alpha_{t, a}\}_{t = 1}^{+\infty}$ that independently transitions according to a first-order autoregressive (AR(1)) process. 
We study AR(1) bandits because AR(1) processes have been commonly used to model \add{time-varying} processes in fields such as nature and economics. {Similar formulations of non-stationary bandits have been examined by} 
\cite{bacchiocchi2022autoregressive, chen2024non, gupta2011thompson, gaussianar1, kuhn2015wireless,
slivkins2008adapting}. 

\begin{example} [\bf{AR(1) Bandit}]
\label{ex:autoregressive} 
In an AR(1) bandit, each reward $R_{t,a}$ is  distributed according to a Gaussian distribution with a random mean $\theta_{t,a} = \alpha_{t,a}$ and a deterministic variance $\sigma_a^2$. Each realized reward can be interpreted as a sum $R_{t,a} = \theta_{t,a} + Z_{t,a}$, where $Z_{t,a}$ is independent zero-mean noise with deterministic variance $\sigma_a^2$. The variable $\alpha_{t, a}$ \remove{changes}{evolves} over time\remove{, evolving} according to
$$\alpha_{t+1, a} = (1 - \gamma_a) c_a + \gamma_a \alpha_{t,a} + W_{t+1,a},$$ 
for each action $a \in \actions$. The coefficients $c_a$ and 
$\gamma_a$ are deterministic, and each takes value in $\mathbb{R}$ and 
$[0, 1]$, respectively; $W_{t+1,a}$ is independent zero-mean Gaussian noise with deterministic variance $\delta_a^2$, with $\delta_a \in \mathbb{R}_{+}$. When $\gamma_a = 1$, we require $\delta_a = 0$ and $\theta_{1,a}$ to be Gaussian. We assume that the sequence  $\{\alpha_{t,a}\}_{t=1}^{+\infty}$  is in steady-state: when $\gamma_a < 1$, this steady-state distribution is
$\mathcal{N}(c_a,\delta_a^2 / (1-\gamma_a^2))$. 
\end{example}

Note that the formulation of AR(1) bandits accommodates stationary Gaussian bandits with independent actions as a special case. Specifically, if we let $\gamma_a = 1$ and $\delta_a = 0$ for all $a \in \actions$, then $\alpha_{t+1, a} = \alpha_{1,a}$ for all $t \in \mathbb{Z}_{+}$ and $a \in \actions$.
Since $\alpha_{1,a}$ is a Gaussian random variable for all $a \in \actions$, 
we recover a stationary Gaussian bandit with independent actions. We can model any stationary Gaussian bandit with independent actions using an AR(1) bandit with 
$\gamma_a = 1$ and $\delta_a = 0$ for $a \in \actions$ and 
suitably-chosen $\alpha_{1,a}$ for each $a \in \actions$.

\remove{When interacting with an AR(1) bandit, we assume that an agent knows a priori 
$c_a$, $\gamma_a$, $\delta_a$, $q_a$, $\Pr(\theta_{0,a} \in \cdot)$, and $\sigma_a$ for all $a \in \actions$. 
In fact, this assumption is not necessary: an agent can learn these parameters. In other words, an agent is able to estimate these parameters quite accurately after a finite number of timesteps that is large enough. Since we are interested in the performance of an agent in the regime $T \rightarrow +\infty$, the performance of the agent in the learning phase of the first finite number of timesteps is irrelevant. Therefore, we can focus on investigating the performance of an agent after it learns these parameters. Without loss of generality, we can assume that the parameters are known to the agent a priori. Similar assumptions appear in \citep{slivkins2008adapting} and \citep{pmlr-v31-mellor13a} and techniques to estimate such parameters based on historical data have been discussed in \citep{wilson2010bayesian, turner2009adaptive}. }

{When interacting with an AR(1) bandit, we assume that the agent knows {\emph{a priori}} $c_a$, $\gamma_a$, $\delta_a$, $\Pr(\theta_{1,a} \in \cdot)$, and $\sigma_a$ for all $a \in \actions$. However, this assumption is not strictly necessary if these parameters can be accurately estimated from past data. Indeed, parameter estimation remains an important topic, and prior work \citep{wilson2010bayesian, turner2009adaptive} have developed techniques for learning parameters from historical data, which could be applied in this setting. Since we are primarily interested in the agent’s performance as $T \rightarrow +\infty$, we can focus on the agent’s performance after it learns these parameters. For the purpose of our analysis, we assume that these parameters are known a priori, while leaving further exploration of hyperparameter learning for future work. Similar assumptions regarding known parameters have also appeared in \citep{slivkins2008adapting} and \citep{pmlr-v31-mellor13a}. }

\subsubsection{Efficient Implementation of PS}
\label{sec:ar1_implementation}
While we are interested in developing efficient procedures to execute PS in an AR(1) bandit, as a stepping stone, we first focus on implementing TS that takes $\chi_t = \theta_t$ as its learning target. 
In an AR(1) bandit, TS  
samples at each timestep $\hat{\theta}_{t}^{\pi_{\mathrm{TS}}}$ from $\Pr(\theta_{t} \in \cdot | H_{t-1}^{\pi_{\mathrm{TS}}})$, 
and selects an action that maximizes $\hat{\theta}_{t,a}^{\pi_{\mathrm{TS}}}$. 
Recall that in an AR(1) bandit,  $\Pr(\theta_{1} \in \cdot)$ is Gaussian. When action $a$ is selected at timestep $t$, the agent observes $R_{t, a} \sim \mathcal{N}(\theta_{t,a}, \sigma_a^2)$, where $\sigma_a$ is deterministic and known. Therefore, $\Pr(\theta_t \in \cdot | H_{t-1}^{\pi_{\mathrm{TS}}})$ is Gaussian. We use $\mu_{t}^{\pi_{\mathrm{TS}}}$ and $\Sigma_{t}^{\pi_{\mathrm{TS}}}$ to denote the mean and covariance of it; they can be derived using Kalman filter. 
Algorithm~\ref{alg:Thompson-sampling-ar1} provides an implementation of TS in an AR(1) bandit; for the sake of simplicity, we drop the superscript $\pi_{\mathrm{TS}}$.

{We develop an efficient implementation of PS in Algorithm~2, which circumvents explicitly sampling future rewards $R_{t+1:\infty}$. This implementation is based on the following key observation from Section~\ref{sec:ps_information}: Steps~2 and~3 of Algorithm~\ref{alg:predictive-sampling}—sampling from the distribution of future rewards $R_{t+1:\infty}$ and deriving $\hat{\theta}_{t}$ based on the sample—are equivalent to sampling $\hat{\theta}_{t}$ from 
\begin{equation}
    \Pr(\E[R_{t}|H_{t-1}, R_{t+1:\infty}] \in \cdot | H_{t-1})= \Pr(\E[\theta_{t}|H_{t-1}, R_{t+1:\infty}] \in \cdot | H_{t-1}). 
\label{eq:sample_ps}
\end{equation}
Since rewards are jointly Gaussian in AR(1) bandits, the above distribution is Gaussian. We use $\tilde{\mu}_{t}$ and $\tilde{\Sigma}_{t}$ to denote its mean and covariance. Algorithm~\ref{alg:predictive-sampling-ar1} presents an implementation of PS in an AR(1) bandit setting, in which we sample $\hat{\theta}_t$ from $\mathcal{N}(\tilde{\mu}_{t}, \tilde{\Sigma}_{t})$ instead of explicitly sampling future rewards $R_{t+1:\infty}$ and computing $\hat{\theta}_{t}$ from them.} 

\begin{figure}[!ht]
\centering
\begin{minipage}{0.48\textwidth}
\begin{algorithm}[H]
\caption{PS in an AR(1) bandit}
\label{alg:predictive-sampling-ar1}
\For{$t = 1, 2, \ldots, T$}{
\textbf{sample}: $\hat{\theta}_t \sim \mathcal{N}(\tilde{\mu}_{t}, \tilde{\Sigma}_{t})$ \\
\textbf{execute}: $A_t \in \argmax_{a \in \actions} \hat{\theta}_{t, a}$ \\
\textbf{observe}: $R_{t, A_t}$ \\
\textbf{update}:
$\tilde{\mu}_{t+1} \leftarrow \E[\theta_{t+1}|H_{t}]$\\
\qquad \ \ \ \ \ \ $\tilde{\Sigma}_{t+1} \leftarrow \V(\E[\theta_{t+1}|H_t, R_{t+2:\infty}] | H_{t})$
}
\end{algorithm}
\end{minipage}
\begin{minipage}{0.48\textwidth}
\begin{algorithm}[H]
\caption{TS in an AR(1) bandit}
\label{alg:Thompson-sampling-ar1}
\For{$t = 1, 2, \ldots, T$}{
\textbf{sample}: $\hat{\theta}_{t}^{} \sim \mathcal{N}(\mu_{t}^{}, \Sigma_{t}^{})$ \\
\textbf{execute}: $A_t \in \argmax_{a \in \actions} \hat{\theta}_{t, a}$ \\
\textbf{observe}: $R_{t,A_t}$ \\
\textbf{update}: 
$\mu_{t+1}\! \leftarrow\! \E[\theta_{t+1} | H_{t}]$ \\ \qquad \qquad $\Sigma_{t+1}\! \leftarrow\! \V(\theta_{t+1} | H_{t})$
}
\end{algorithm}
\end{minipage}
\label{figure:implementations}
\end{figure}

{Deriving $\tilde{\mu}_{t+1}$ and $\tilde{\Sigma}_{t+1}$ in Step~5 of Algorithm~\ref{alg:predictive-sampling-ar1} is the most technically involved part of the implementation. Below, we present a straightforward and efficient procedure for computing them. }Observe that the distribution $\Pr(\theta_t \in \cdot | H_{t-1})$ is Gaussian. We use $\mu_{t}$ and $\Sigma_{t}$ to denote the mean and covariance of it, and  they can be derived analytically using Kalman filter. The values of $\tilde{\mu}_{t}$ and $\tilde{\Sigma}_{t}$ can be derived using $\mu_t$ and $\Sigma_t$. Specifically, because both $\Sigma_{t}$ and $\tilde{\Sigma}_t$ are diagonal, we use $\sigma_{t,a}^2$ and $\tilde{\sigma}_{t,a}^2$ to denote each entry along their diagonals, respectively. The following result establishes the exact analytical forms for $\tilde{\mu}_{t,a}$ and $\tilde{\sigma}^2_{t,a}$, as function of $\mu_{t,a}$ and $\sigma^2_{t,a}$. The proof is presented in Appendix~\ref{appendix:ar1}. 

\begin{restatable}{proposition}{arderivation}
\label{proposition:ar1}
In an AR(1) bandit, for all $t \in \mathbb{Z}_+$ and $a \in \actions$, conditioned on $H_{t-1}$, $\hat{\theta}_{t, a}$ is Gaussian with mean and variance 
\begin{align*}
&\ \tilde{\mu}_{t,a} = \mu_{t, a} \text{ and } 
\tilde{\sigma}_{t,a}^2 = \frac{\gamma_a^2 \sigma_{t, a}^4}{\gamma_a^2 \sigma_{t, a}^2 + x_a^*},
\end{align*}
where 
$x_a^* = \frac{1}{2} \left(\delta_a^2 + \sigma_a^2 - \gamma_a^2 \sigma_a^2 + \sqrt{(\delta_a^2 + \sigma_a^2 - \gamma_a^2 \sigma_a^2)^2 + 4 \gamma_a^2 \delta_a^2 \sigma_a^2}\right)$.
\end{restatable}

{Both PS and TS are sampling algorithms, 
each begins by sampling an estimate, $\hat{\theta}_t$ for PS and $\hat{\theta}_t^{\mathrm{TS}}$ for TS, followed by the selection of an action that maximizes the sample.

The key distinction between PS and TS lies in their respective sampling distributions, which of PS is defined in \eqref{eq:sample_ps}. According to Proposition~\ref{proposition:ar1}, for all $t \in \mathbb{Z}_+$ and $a \in \actions$, it holds that $\tilde{\mu}_{t,a} = \mu_{t,a}$. This indicates that when presented with the same history $H_{t-1}$, the sampling distributions of PS and TS share the same mean. In this scenario, the ratio of the variances of the sampling distributions is given by  $\frac{\tilde{\sigma}^2_{t,a}}{\sigma^2_{t,a}}
= \frac{\gamma_a^2 \sigma_{t,a}^2}{\gamma_a^2 \sigma_{t,a}^2 + x_a^*} \in [0, 1]
$. The ratio increases from $0$ to $1$ as $\gamma_a$ increases from $0$ to $1$. This implies that as information becomes more durable, 
$\gamma_a$ increases, so does the variance of the sampling distribution of PS, and thus PS explores more. This suggests that PS typically explores less compared to TS, and how much it explores varies according to the durability of information. 
}

\add{Different from the coin-tossing games and its generalization of modulated Bernoulli bandits, the AR(1) bandit exhibit continuous dynamics. While durability is less explicit than in the earlier examples, it is captured by the parameter $\gamma$. As we show, each $\gamma_a$ influences the variance of the sampling distribution and thereby the algorithm’s behavior.}

\subsubsection{Experiments}
\label{sec:ar1_experiments}
We next conduct experiments in a sequence of AR(1) bandits where the actions are associated with varying degrees of information durability and compare the performance of PS against that of TS. Note that in an AR(1) bandit, $\gamma_a$ captures the durability of information associated with action $a \in \actions$, with $\gamma_a = 1$ indicating that the information durability is infinite and $\gamma_a = 0$ indicating that the information durability is zero. 
Then we conduct experiments in bandits with varying $\gamma_a$ for $a \in \actions$. 

In particular, we let $\actions = \{1, 2\}$, the stationary distribution of each arm's mean reward be $\mathcal{N}(0, 1)$, and $\gamma_1, \gamma_2 \in \{0.1, 0.3, 0.5, 0.7, 0.9\}$. This defines a range of two-armed AR(1) bandits where each mean reward distribution is standard normal, and $\gamma_1$ and $\gamma_2$ each varies in a discrete grid in $(0, 1)$. By symmetry in actions, we examine bandits where $\gamma_1 \in \{0.1, 0.3, 0.5\}$
and $\gamma_2 \in \{0.1, 0.3, 0.5, 0.7, 0.9\}$\remove{, respectively}.

\paragraph{PS Outperforms TS Across Bandits}
Figure~\ref{fig:ar1_scatter} plots the {average reward} collected by PS and that collected by TS over a long duration of $T = 1000$ timesteps, which serve as good approximations to the long-run average rewards collected by the two agents. 
The plot shows that PS consistently outperforms TS, regardless of information durability associated with the actions. 
This provides additional evidence, complementing our theoretical analyses and examples, to support that PS has advantage over TS in non-stationary bandits. 

The plot also suggests that both PS and TS perform better in bandits with a larger value of $\gamma_1$ or $\gamma_2$. 
This is consistent with our intuition that any given agent tends to perform better in bandits with better information durability because they can put the information learned so far to better use for longer periods of time. 

\paragraph{PS Outperforms TS Across Time}  
Now that we have examined the long-run performance of PS, we next investigate if PS sacrifices its short-term performance for long-term benefits. In particular, we focus on one example of the aforementioned AR(1) bandits, with $\gamma_1 = 0.1$ and $\gamma_2 = 0.9$ and plot the mean rewards across a number of timesteps. 
Figure~\ref{fig:ar1_example} plots the average reward collected by PS and that collected by TS over $t \in \{1, 2, ... , 200\}$ timesteps, with the error bars representing 95\% confidence intervals. We observe that PS consistently outperforms TS across time.  

The phenomenon that PS outperforms TS across time can be observed beyond this example. 
We present additional plots in Appendix~\ref{appendix:ar1_experiments} that illustrate this. The plots suggest that the phenomenon is persistent across a diverse set of AR(1) bandit instances.

\begin{figure} [!ht]
\centering
\begin{subfigure}[b]{0.45\textwidth}
\centering
\includegraphics[width=\textwidth]{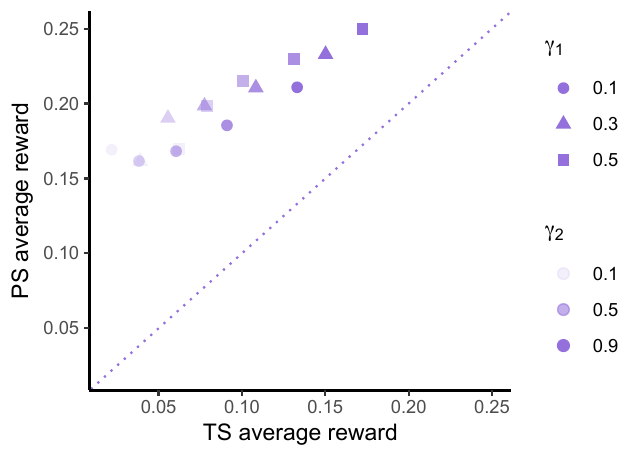}
\caption{Average rewards collected over $T = 1000$ timesteps where $\gamma_1 \in \{0.1, 0.3, 0.5\}$, $\gamma_2 \in \{0.1, 0.3, 0.5, 0.7, 0.9\}$}
\label{fig:ar1_scatter}
\end{subfigure}
\hfill
\begin{subfigure}[b]{0.45\textwidth}
\centering
\includegraphics[width=\textwidth]{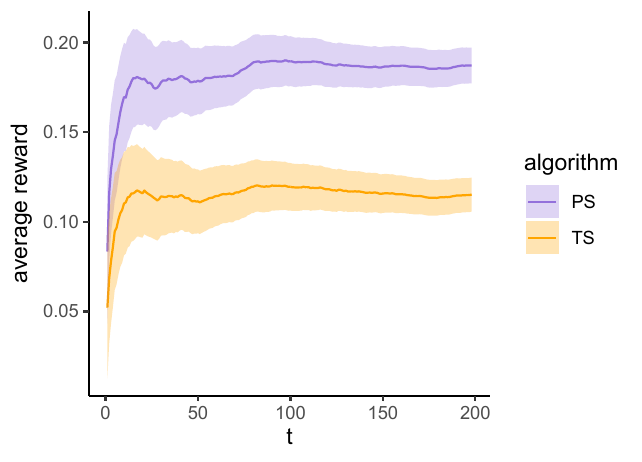}
\caption{An example: the average rewards collected over $t \in \{1, ... , 200\}$ timesteps in an environment where $\gamma_1 = 0.1$, $\gamma_2 = 0.9$}
\label{fig:ar1_example}
\end{subfigure}
\caption{The average rewards collected by PS and that collected by TS in AR(1) bandits}
\label{figure:ar1_plot}
\end{figure}

\subsection{Comparison with Other Algorithms}
\label{section:algorithms}
We conduct experiments to compare PS with non-stationary bandit algorithms 
beyond TS. 
\subsubsection{Algorithms and Environments}
A large set of algorithms 
\citep{besbes2019optimal, besson2019generalized, cheung2019learning, garivier2008upper, 9194367, gupta2011thompson, hartland2006multi, kocsis2006discounted, pmlr-v31-mellor13a, raj2017taming, trovo2020sliding, viappiani2013thompson} designed for non-stationary bandit learning  
focuses on making better inference about current mean reward; given what is inferred, these algorithms usually apply action-selection schemes that are designed for stationary bandits, such as TS, upper-confidence-bound methods \citep{lai1985asymptotically}, and the exponential-weight algorithms \citep{auer2002nonstochastic, freund1997decision}. In this sense, one can think of these algorithms as taking the current mean reward as the learning target; in particular, to make it more concrete, when applied to the coin-tossing environments of Section~\ref{sec:motivating_discussion}, this learning target corresponds to the coin biases. Therefore, these algorithms, with TS and its existing variants as special cases, do not intelligently account for the information durability when selecting actions. 
\remove{, and we believe that PS has advantages over these algorithms in non-stationary bandit learning.}

\paragraph{Algorithms} 
We conduct experiments with a representative subset of these non-stationary bandit learning algorithms. 
Since there are three popular heuristics on making inference on current mean reward, i.e., using a fixed-length sliding-window, weighing data by recency, and periodic restarts, we choose one algorithm focusing on each of the three heuristics.  
In addition, we take a naive TS agent, who executes TS while pretending that the bandit is stationary, as a baseline. Below we briefly describe each of the four representative algorithms with which we conduct experiments. 
\begin{itemize}
\item  {Rexp3} \citep{besbes2019optimal} uses Exp3 as action-selection subroutine and restarts it periodically. 
\item  {Discounted UCB} \citep{garivier2008upper} uses UCB1 as subroutine and discounts the effect of past rewards on estimating current reward by weighing past data according to recency. 
\item  {Sliding-window UCB} \citep{garivier2008upper} maintains a sliding-window of fixed size and uses UCB1 as a subroutine. 
\item  {Naive TS} pretends that the bandit is stationary and proceeds with inference. 
\end{itemize}
{In the plots in this section, the legends use `dUCB' for discounted UCB, `sUCB' for sliding-window UCB, and `nTS' for naive TS.}

\paragraph{Environments} We next introduce the class of bandits in which we conduct the experiments and additional details in implementing the algorithms. 
Because past work that introduced Rexp3, discounted UCB, and sliding-window UCB conduct experiments in bandits \remove{where rewards are bounded}{with bounded rewards}, we restrict our attention to such bandits. 
In addition, to analyze how the information durability affect the performance of the agents, we also would like to conduct experiments in bandits where the information durabilities are determined by particular environment parameters, the adjustments of which adjust the information durabilities. 
To accommodate both needs, we design a set of bandits that differ from the AR(1) bandits only in that the rewards are truncated to $[0, 1]$. 

In implementating the algorithms, the parameters of Rexp3 are chosen according to Theorem~2 of \citep{besbes2019optimal}, where the ``variation budget'' is assumed to be known in advance for each simulation; the parameters in discounted UCB and sliding-window UCB are chosen according to Remark 3 and Remark 9 of \citep{garivier2008upper}, respectively, where ``the number of breakpoints'' is assumed to be known in advance; {PS} pretends that the rewards are not truncated.

\subsubsection{Experiments} 
\label{sec:ar1_allalgorithms_experiments}
We conduct two sets of experiments. 
{The first set of experiments show that PS explores less in a non-stationary environment, and outperforms other algorithms.} \remove{The first one in}{We consider} a bandit where $\actions = \{1, 2\}$, $c_1 = c_2 = 0.5$, $\gamma_1 = \gamma_2 = 0.85$, $\delta_a = 0.15(1 - \gamma_a^2)$ for $a \in \actions$, and $\sigma_1 = \sigma_2 = 0.1$. 
This describes a bandit with two symmetric actions.

Figure~\ref{fig:comparison_1_reward} plots the average reward collected by the agents over $t \in \{1, 2, ... , 2000\}$ timesteps, with the error bars representing 95\% confidence intervals. The results 
suggest that a majority of the non-stationary bandit learning algorithms outperforms naive TS, and PS outperforms all others. 
This is consistent with our theoretical results that PS
\remove{selects actions intelligently accounting for information durability, and as such} has advantages in non-stationary bandits. 

Figure~\ref{fig:comparison_1_action} plots the average frequency of selecting action $2$ over $t 
\in \{1, 2, ... , 2000\}$ timesteps. 
As expected, since the actions are symmetric, all agents select each action half of the time in the long run. 
{
To demonstrate when PS selects actions differently, we visualize the amount of exploration over $t \in \{1, 2, ... , 2000\}$ timesteps in Figure~\ref{fig:amount_exploration_1}. Exploration occurs when the selected action does not maximize $\mathbb{E}[R_{t,a}|H_t]$, hence, we quantify exploration by the frequency of selecting such actions. Figure~\ref{fig:amount_exploration_1} illustrates that in this bandit, PS explores significantly less compared to other policies. This observation aligns with our intuition that PS intelligently reduces exploration in non-stationary environments.
}
\begin{figure} [!ht]
\centering
\begin{subfigure}[b]{0.32\textwidth}
\centering
\includegraphics[width=\textwidth]{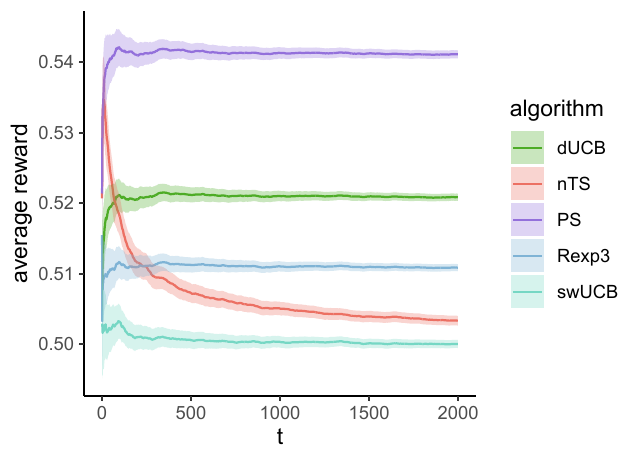}
\caption{Average rewards collected by each agent}
\label{fig:comparison_1_reward}
\end{subfigure}
\hfill
\begin{subfigure}[b]{0.32\textwidth}
\centering
\includegraphics[width=\textwidth]{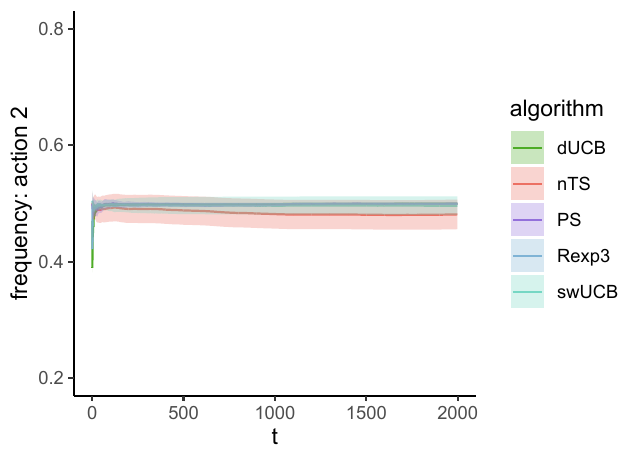}
\caption{Frequencies of selecting action $2$ }
\label{fig:comparison_1_action}
\end{subfigure}
\hfill
\begin{subfigure}[b]{0.31\textwidth}
\centering
\includegraphics[width=\textwidth]{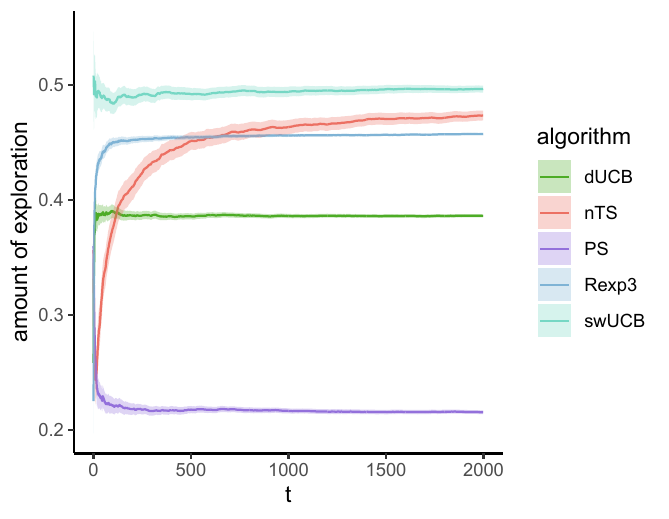}
\caption{Frequencies of engaging in {(amount of)} exploration 
}
\label{fig:amount_exploration_1}
\end{subfigure}
\caption{PS and representative algorithms{, i.e.,  discounted UCB (`dUCB'), sliding-window UCB (`sUCB'), Rexp3 (`Rexp3'), and naive TS (`nTS'),} in a bandit with two actions, where $c_1 = c_2 = 0.5$, $\gamma_1 = \gamma_2 = 0.85$, $\delta_a^2 = 0.15^2(1 - \gamma_a^2)$ for $a \in \actions$, and $\sigma_1 = \sigma_2 = 0.1$}
\label{fig:ar_comparison}
\end{figure}

We conduct our second set of experiments 
in a {slightly more complicated environment}\remove{bandit} where $\actions = \{1, 2\}$, $c_1 = 0.65, c_2 = 0.55$, $\gamma_1 = 0.1, \gamma_2 = 0.99$, $\delta_a = 0.15(1 - \gamma_a^2)$ for $a \in \actions$, and $\sigma_1 = \sigma_2 = 0.1$. This corresponds to a bandit with two asymmetric actions: action $1$ is associated with a mean reward that has a larger mean $c_1 = 0.65$ and changes very quickly, with $\gamma_1 = 0.1$; action $2$ is associated with a mean reward that has a smaller mean $c_2 = 0.55$ and changes very slowly, with$\gamma_2 = 0.99$. 

\remove{Returning to the averaged results,} Figure~\ref{fig:comparison_2_reward} plots the average rewards accumulated by the agents, with the error bars representing 95\% confidence intervals. 
The results reveal that PS accumulates more rewards compared with other agents. 
This {again} suggests that PS has advantages in non-stationary bandits\remove{because it intelligently account for the durability of information and as a result selects action $2$ less compared to other algorithms}.  {Figure~\ref{fig:amount_exploration_2} plots the amount of exploration. As expected, the plot indicates that naive TS explores the least, and PS engages in relatively less exploration compared to other non-stationary algorithms. }
Figure~\ref{fig:comparison_2_action} displays the average action selection frequency. 
{This time, with asymmetric actions, we observe different frequencies of selecting the same action.}
As expected, we observe that the average frequency of selecting action $2$ by naive TS converges to zero, because $c_2 < c_1$. 
 { 
The reduced exploration by PS compared to other non-stationary bandit learning algorithms implies that PS selects the action with lower mean reward estimate less frequently. In most cases, this action corresponds to action $2$. So PS selects action $2$ less frequently compared to other non-stationary algorithms, as corroborated by the consistent behavior observed in Figure~\ref{fig:comparison_2_action}. 
}

{To gain insight into when PS selects actions differently from other algorithms and why this is advantageous, let us examine a specific realization of the sample paths of $\theta_{t}$. Figure~\ref{fig:theta_500} displays $\theta_{1,t}$ and $\theta_{2,t}$ over $t \in \{1, 2, ... , 2000\}$ timesteps, along with the long-term average of $\theta_{1,t}$, which is 0.65 {(denoted as ‘benchmark’ in the legend)}. 
Figure~\ref{fig:action_freq_500} plots the frequencies of selecting action $2$. 
We observe that, in line with our earlier discussions, the naive TS agent progressively decreases its selection of action $2$ over time; the non-stationary bandit learning algorithms other than PS consistently select action $2$ with high frequency, irrespective of the value of $\theta_{t,2}$; PS selects action $2$ less frequently compared to other algorithms. Furthermore, Figure~\ref{fig:action_freq_500} indicates that PS opts for action $2$ more frequently when $\theta_{t,2}$ is relatively large, which explains the superior performance of PS.
}

\begin{figure} [!ht]
\centering
\begin{subfigure}[b]{0.32\textwidth}
\centering
\includegraphics[width=\textwidth]{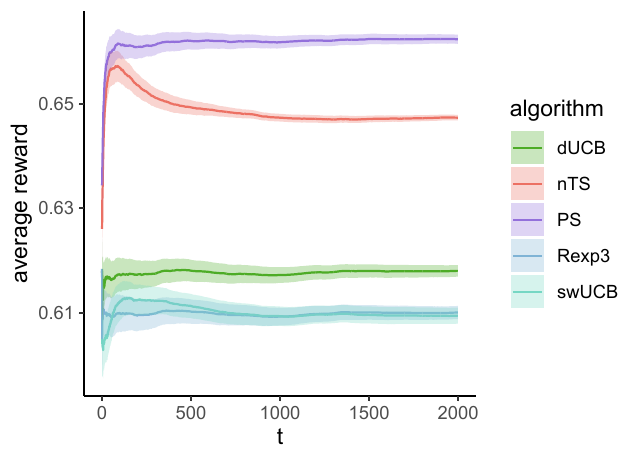}
\caption{Average rewards collected by each agent}
\label{fig:comparison_2_reward}
\end{subfigure}
\hfill
\begin{subfigure}[b]{0.31\textwidth}
\centering
\includegraphics[width=\textwidth]{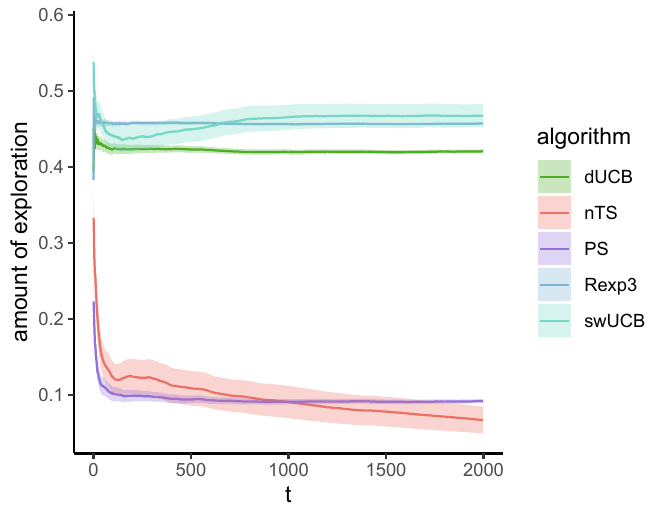}
\caption{Frequencies of engaging in exploration}
\label{fig:amount_exploration_2}
\end{subfigure}
\hfill
\begin{subfigure}[b]{0.31\textwidth}
\centering
\includegraphics[width=\textwidth]{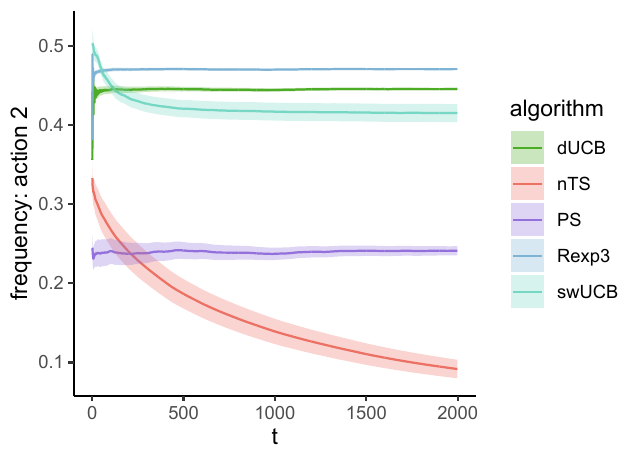}\caption{Frequencies of selecting action $2$}
\label{fig:comparison_2_action}
\end{subfigure}
\caption{PS and representative algorithms{, i.e.,  discounted UCB (`dUCB'), sliding-window UCB (`sUCB'), Rexp3 (`Rexp3'), and naive TS (`nTS'),} in a bandit with two actions, where $c_1 = 0.65, c_2 = 0.55$, $\gamma_1 = 0.1, \gamma_2 = 0.99$, $\delta_a^2 = 0.15^2(1 - \gamma_a^2)$ for $a \in \actions$, and $\sigma_1 = \sigma_2 = 0.1$}
\label{fig:ar_comparison_2}
\end{figure}
{  
\begin{figure} [!ht]
\centering
\begin{subfigure}[b]{0.45\textwidth}
\centering
\includegraphics[width=\textwidth]{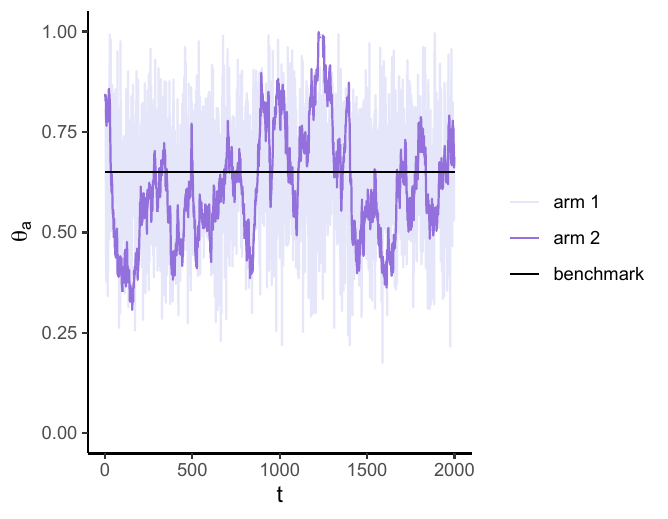}\caption{$\theta_a$ for $a \in \{1, 2\}$
}
\label{fig:theta_500}
\end{subfigure}
\hfill
\begin{subfigure}[b]{0.45\textwidth}
\centering
\includegraphics[width=\textwidth]{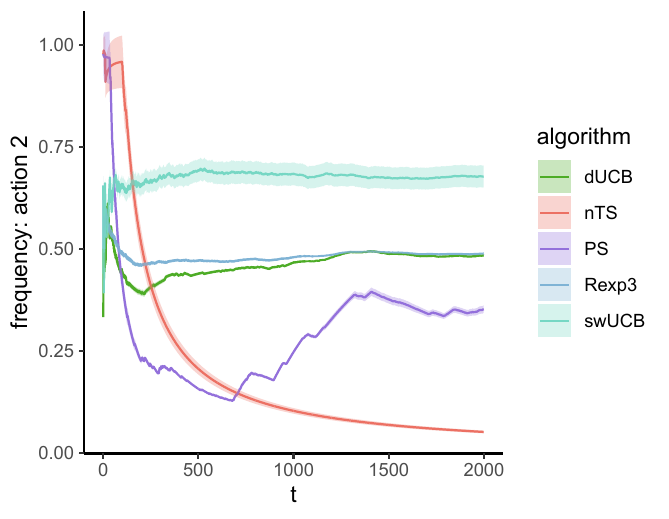}
\caption{Frequencies of selecting action $2$}
\label{fig:action_freq_500}
\end{subfigure}
\caption{PS and representative algorithms{, i.e.,  discounted UCB (`dUCB'), sliding-window UCB (`sUCB'), Rexp3 (`Rexp3'), and naive TS (`nTS'),} in a bandit with two actions, where $c_1 = 0.65, c_2 = 0.55$, $\gamma_1 = 0.1, \gamma_2 = 0.99$, $\delta_a^2 = 0.15^2(1 - \gamma_a^2)$ for $a \in \actions$, and $\sigma_1 = \sigma_2 = 0.1${---a specific realization}}
\label{fig:instance}
\end{figure}
}

\section{Concluding Remarks}
\label{section:conclusion}
This paper demonstrates that TS and its variants
\remove{that were proposed in the
literature}often do not perform well in non-stationary bandits, because they fail to intelligently account
for the durability of information when selecting actions. To address this, we propose PS, an algorithm that can be viewed as a version of TS that takes the sequence of future rewards as the learning target.
We develop efficient procedures to execute PS in AR(1) bandits\remove{ and a practical approximation of it in AR(1) logistic bandits}.
We demonstrate the efficacy of PS through coin-tossing examples, regret bounds, and numerical experiments.

{While PS intelligently accounts for the durability of information, it relies on prior knowledge of the underlying environment and its ability to update based on observed rewards. In some cases, this is relatively straightforward; in others, it requires careful tuning of hyperparameters, and in yet others, the process becomes more complex.  Certain non-stationary bandit learning algorithms, such as discounted UCB and sliding-window UCB, rely on hyperparameter estimation. A promising direction for future research is to refine the implementation of PS or its approximations, enabling the algorithm to consistently rely on simpler procedures, such as tuning a small number of hyperparameters.}

At a high level, our paper illustrates how we can improve an existing algorithm (TS) to arrive at a new algorithm (PS) that is better suited for non-stationary bandits, by changing its learning target.
As discussed in Sections~\ref{sec:related_work} and~\ref{section:algorithms}, a number
of other existing algorithms also do not account for the durability of information when selecting actions.
{An interesting direction for future work would be to explore whether other algorithms can be augmented with an awareness of information durability to help them better navigate non-stationary environments. For instance, one could conceivably design a UCB variant that assigns each arm its own confidence interval and padding function, and explicitly modulates both based on the arm’s information durability. As another example, we could potentially enhance information-directed sampling (IDS) to better handle non-stationarity, where the new agent minimizes \eqref{eq:information_ratio} at each timestep.}

\addnew{While the notion of information durability discussed in the Introduction helps to explain the superiority of PS over TS in many bandit instances, PS is not guaranteed to always react to information durability relative to TS. For instance, suppose an arm $a$ has a mean reward $\theta_{t,a}$ that is re-sampled from $\mathcal{N}(0,1)$ every $\tau$ steps, but is fixed in between, and the reward observations of this arm $R_{t,a}$ are noiseless and always equal to $\theta_{t,a}$. In most timesteps, PS and TS would sample arm $a$ with the same probability at each step independently of the value of $\tau$, despite it capturing the information durability of the arm.\footnote{We thank the referee who suggested this counterexample during the review process of this manuscript.} We suspect that PS's insensitivity to information durability in this example might be due to the particular assumption of noiseless rewards, but regardless, a more thorough understanding of what constitutes a rigorous definition of ``information durability'' beyond an explanatory device, and whether PS takes advantage of it in some form of near-optimal manner, could be an interesting direction for future research.}

While the sequence of future rewards has proven to be an effective learning target for our purpose,
it
would be interesting to investigate whether there are other potentially more powerful learning targets.
For example, by suitably defining the ``optimal action" at each timestep, we may alternatively take the sequence of future optimal actions as the learning target. It remains an open question how these alternative learning targets would perform in non-stationary environments, how to design algorithms with efficient implementations with respect to these learning targets, and how the current framework needs to adapt for their analyses. \add{Finally, while the current paper primarily focuses on understanding how PS helps improve performance relative to TS in non-stationary environments, it would be interesting to more thoroughly understand how PS performs relative to other bandit policies.}

\bibliographystyle{apalike}
\bibliography{arxiv/references}

\appendix
\section{Probabilistic Framework}
\label{sec:probability}
Probability theory emerges from an intuitive set of axioms, and this paper builds on that foundation.  Statements and arguments we present have precise meaning within the framework of probability theory.  However, we often leave out measure-theoretic formalities for the sake of readability.  It should be easy for a mathematically-oriented reader to fill in these gaps.

We will define all random quantities with respect to a probability space $(\Omega, \mathcal{F}, \Pr)$.  The probability of an event $F \in \mathcal{F}$ is denoted by $\Pr(F)$.  For any events $F, G \in \mathcal{F}$ with $\Pr(G) > 0$, the probability of $F$ conditioned on $G$ is denoted by $\Pr(F | G)$.

A random variable is a function with the set of outcomes $\Omega$ as its domain.  For any random variable $Z$, $\Pr(Z \in \mathcal{Z})$ denotes the probability of the event that $Z$ lies within a set $\mathcal{Z}$.  The probability $\Pr(F | Z = z)$ is of the event $F$ conditioned on the event $Z = z$.  When $Z$ takes values in $\R$ and has a density $p_Z$, though $\Pr(Z=z)=0$ for all $z$, conditional probabilities $\Pr(F| Z=z)$ are well-defined and denoted by $\Pr(F | Z = z)$.
For fixed $F$, this is a function of $z$.  We denote the value, evaluated at $z=Z$, by $\Pr(F | Z)$, which is itself a random variable.  Even when $\Pr(F | Z = z)$ is ill-defined for some $z$, $\Pr(F | Z)$ is well-defined because problematic events occur with zero probability.  

For each possible realization $z$, the probability $\Pr(Z=z)$ that $Z = z$ is a function of $z$.  We denote the value of this function evaluated at $Z$ by $\Pr(Z)$.  Note that $\Pr(Z)$ is itself a random variable because it depends on $Z$.  For random variables $Y$ and $Z$ and possible realizations $y$ and $z$, the probability $\Pr(Y=y|Z=z)$ that $Y=y$ conditioned on $Z=z$ is a function of $(y, z)$.  Evaluating this function at $(Y,Z)$ yields a random variable, which we denote by $\Pr(Y|Z)$.

We denote independence of random variables $X$ and $Y$ by $X \perp Y$ and conditional independence, conditioned on another random variable $Z$, by $X \perp Y | Z$.

\section{Concepts, Notations, and Relations}
\label{appendix:information}
We review some standard information-theoretic concepts and associated notations, and introduce a change-of-measure notation in this section. 
\subsection{Information-Theoretic Concepts, Notations, and Relations}
\remove{A central concept is the entropy $\H(X)$, which quantifies the information content or, equivalently, the uncertainty of a random variable $X$.  For a random variable $X$ that takes values in a countable set $\mathcal{X}$, we will define the entropy to be $\H(X) = -\E[\ln \Pr(X)]$, with a convention that $0 \ln 0 = 0$.  Note that we are defining entropy here using the natural rather than binary logarithm.  As such, our notion of entropy can be interpreted as the expected number of nats---as opposed to bits---required to identify $X$.  The realized conditional entropy $\H(X | Y = y)$ quantifies the uncertainty remaining after observing $Y=y$.  If $Y$ takes on values in a countable set $\mathcal{Y}$ then $\H(X | Y = y) = - \E[\ln \Pr(X | Y) | Y = y]$.
This can be viewed as a function $f(y)$ of $y$, and we write the random variable $f(Y)$ as $\H(X|Y = Y)$.  The conditional entropy $\H(X | Y)$ is its expectation
$\H(X | Y) = \E[\H(X|Y=Y)]$.

The mutual information $\I(X; Y) = \H(X) - \H(X|Y)$ quantifies information common to random variables $X$ and $Y$, or equivalently, the information about $Y$ required to identify $X$.  If $Z$ is a random variable taking on values in a countable set $\mathcal{Z}$ then the realized conditional mutual information $\I(X; Y| Z=z)$ quantifies remaining common information after observing $Z = z$, defined by $\I(X; Y| Z=z) = \H(X|Z=z) - \H(X|Y, Z=z)$.
The conditional mutual information $\I(X; Y | Z)$ is its expectation $\I(X; Y | Z) = \E[\I(X; Y | Z = Z)]$.

For random variables $X$ and $Y$ taking on values in (possibly uncountable) sets $\mathcal{X}$ and $\mathcal{Y}$, mutual information is defined by
$\I(X; Y) = \sup_{f \in \mathcal{F}_{\text{finite}}, g \in \mathcal{G}_{\text{finite}}} \I(f(X); g(Y))$, 
where $\mathcal{F}_{\text{finite}}$ and $\mathcal{G}_{\text{finite}}$ are the sets of functions mapping $\mathcal{X}$ and $\mathcal{Y}$ to finite ranges.  Specializing to the case where $\mathcal{X}$ and $\mathcal{Y}$ are countable recovers the previous definition.  The generalized notion of entropy is then given by $\H(X) = \I(X; X)$.  Conditional counterparts to mutual information and entropy can be defined in a manner similar to the countable case.}

We will make use of
KL-divergence as measures of difference between distributions.
We denote KL-divergence by
$$\KL(P \| P') = \int P(dx) \ln\frac{dP}{dP'}(x).$$
Gibbs' inequality asserts that $\KL(P\|P') \geq 0$, with equality if and only if $P$ and $P'$ agree almost everywhere with respect to $P$.

The following result is established by Theorem 5.2.1 of \citep{gray2011entropy}.

\begin{lemma}[Variantal Form of the KL-Divergence]
\label{lemma:var_KL}
For any probability distribution $P$ and real-valued random variable $X$, both defined with respect to a measureable space $(\Omega', \mathbb{F}')$, let $\E_P[X] = \int_{x \in \Re} x P(dx)$.  For probability distributions $P$ and $P'$ on a measureable space $(\Omega', \mathbb{F}')$ such that $P$ is absolutely continuous with respect to $P'$, 
\begin{equation}
\label{eq:KL-variantal}
\KL(P\|P') = \sup_X (\E_P[X] - \ln \E_{P'}[\exp(X)]),
\end{equation}
where the supremum is taken over real-valued random variables on $(\Omega', \mathbb{F}')$ for which $\E_Q[\exp(X)] < \infty$.  
\end{lemma}

Mutual information and KL-divergence are intimately related.  For any probability measure $P(\cdot) = \Pr((X,Y) \in \cdot)$ over a product space $\mathcal{X} \times \mathcal{Y}$ and probability measure $P'$ generated via a product of marginals $P'(dx \times dy) = P(dx) P(dy)$, mutual information can be written in terms of KL-divergence:
\begin{equation}
\label{eq:mutual-information-marginal-distribution}
\I(X; Y) = \KL(P \| P').
\end{equation}
Further, the following lemma presents an alternative representation of mutual information. 
\begin{lemma} [KL-Divergence Representation of Mutual Information]
\label{lemma:KL_MI}
For any random variables $X$ and $Y$, 
\begin{equation}
\label{eq:mutual-information-conditional-distribution}
\I(X; Y) = \E[\KL(\Pr(Y \in \cdot|X) \| \Pr(Y \in \cdot))].
\end{equation}
\end{lemma}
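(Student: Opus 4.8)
The plan is to verify the identity first in the discrete case, directly from the definitions of entropy and mutual information, and then bootstrap to the general case via the representation $\I(X;Y) = \KL(P\|P')$ supplied by \eqref{eq:mutual-information-marginal-distribution} together with a disintegration (chain rule) for the KL-divergence.

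First, suppose $X$ and $Y$ take values in countable sets. Starting from $\I(X;Y) = \H(Y) - \H(Y\mid X)$ and expanding $\H(Y) = -\E[\ln \Pr(Y)]$ and $\H(Y\mid X) = -\E[\ln \Pr(Y\mid X)]$, linearity of expectation gives $\I(X;Y) = \E[\ln(\Pr(Y\mid X)/\Pr(Y))]$. Conditioning on $X$ via the tower property, the inner conditional expectation evaluated at $X=x$ is $\sum_y \Pr(Y=y\mid X=x)\ln\frac{\Pr(Y=y\mid X=x)}{\Pr(Y=y)}$, which is exactly $\KL(\Pr(Y\in\cdot\mid X=x)\,\|\,\Pr(Y\in\cdot))$; taking the outer expectation over $X$ yields the claim. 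Absolute continuity is automatic here, since $\Pr(Y=y)=0$ forces $\Pr(Y=y\mid X=x)=0$ for $\Pr$-almost every $x$, and the usual convention $0\ln 0 = 0$ handles the boundary terms.

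For general (possibly uncountable) $X$ and $Y$, let $P(\cdot) = \Pr((X,Y)\in\cdot)$, let $P_X$ and $P_Y$ be the marginals, and let $P' = P_X\otimes P_Y$, so that $\I(X;Y) = \KL(P\|P')$ by \eqref{eq:mutual-information-marginal-distribution}. Introduce a regular conditional distribution $P_{Y\mid X}(\cdot\mid x)$, giving $P(dx\,dy) = P_X(dx)\,P_{Y\mid X}(dy\mid x)$, while $P'$ factors as $P'(dx\,dy) = P_X(dx)\,P_Y(dy)$ with the \emph{same} $X$-marginal. When $\KL(P\|P')<\infty$ we have $P\ll P'$, and since the two measures agree on $X$ the Radon--Nikodym derivative acts only on the conditionals: $\frac{dP}{dP'}(x,y) = \frac{dP_{Y\mid X}(\cdot\mid x)}{dP_Y}(y)$ for $P$-a.e.\ $(x,y)$. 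Substituting into the definition of $\KL(P\|P')$ and applying Tonelli's theorem,
$$\KL(P\|P') = \int P_X(dx)\int P_{Y\mid X}(dy\mid x)\,\ln\frac{dP_{Y\mid X}(\cdot\mid x)}{dP_Y}(y) = \int P_X(dx)\,\KL\big(P_{Y\mid X}(\cdot\mid x)\,\big\|\,P_Y\big),$$
which is $\E[\KL(\Pr(Y\in\cdot\mid X)\,\|\,\Pr(Y\in\cdot))]$. The degenerate case $\KL(P\|P')=+\infty$ follows by running the discrete computation along an increasing sequence of finite measurable partitions of $\mathcal{X}\times\mathcal{Y}$ and passing to the limit using the supremum-over-finite-partitions definition of mutual information, so that both sides equal $+\infty$.

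I expect the only genuine obstacle to be the measure-theoretic bookkeeping in the general case: existence of the regular conditional distribution $P_{Y\mid X}$ and the factorization of the Radon--Nikodym derivative through the conditionals. This is standard under the mild regularity implicitly in force, and in keeping with the paper's convention of suppressing such formalities it can be stated without a detailed proof; alternatively one can sidestep it by invoking the variational form of Lemma~\ref{lemma:var_KL} partition-by-partition. Everything else is a direct unwinding of the definitions.
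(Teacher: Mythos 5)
The paper states this lemma without proof, treating it as a standard information-theoretic identity (it appears in Appendix~B alongside other quoted facts such as the chain rule and the data-processing inequality), so there is no in-paper argument to compare against. Your proof is correct and is the standard one: the discrete case is a direct unwinding of $\I(X;Y)=\H(Y)-\H(Y\mid X)$ plus the tower property, and the general case follows from the product-of-marginals representation $\I(X;Y)=\KL(P\|P')$ together with the disintegration $\frac{dP}{dP'}(x,y)=\frac{dP_{Y\mid X}(\cdot\mid x)}{dP_Y}(y)$. Two small points of care, neither fatal: the interchange of integrals should be justified via the integrability of the negative part of $\ln\frac{dP}{dP'}$ (Tonelli alone does not apply to a signed integrand, though $(\ln u)_-$ is always $P$-integrable here since $u\ln u\geq -e^{-1}$); and in the infinite case, passing the supremum over finite partitions through the outer expectation needs an increasing sequence of partitions plus monotone convergence rather than a bare swap of $\sup$ and $\E$. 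Both are consistent with the level of measure-theoretic informality the paper itself adopts.
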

In other words, the mutual information between $X$ and $Y$ is the KL-divergence between the distribution of $Y$ with and without conditioning on $X$.

Mutual information satisfies the chain rule and the data processing inequality. 
\begin{lemma} [Chain Rule for Mutual Information]
\label{lemma:chain_MI}
\begin{align*}
\I(X_1, X_2, ... , X_n; Y) = \sum_{i = 1}^n \I(X_i; Y| X_1, X_2, ... , X_{i-1}).
\end{align*}
\end{lemma}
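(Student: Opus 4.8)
The plan is to obtain the chain rule for mutual information as an immediate consequence of the chain rule for entropy. Starting from the definition $\I(X_{1:n};Y) = \H(X_{1:n}) - \H(X_{1:n}\mid Y)$, the entire content of the lemma is to decompose each of these two joint (conditional) entropies into a telescoping sum over the coordinates $X_1,\dots,X_n$ and then subtract term by term.

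First I would record the elementary two-variable identity $\H(U,V) = \H(U) + \H(V\mid U)$, together with its conditional version $\H(U,V\mid W) = \H(U\mid W) + \H(V\mid U,W)$; both follow by taking expectations in the pointwise identity $\ln\Pr(U,V) = \ln\Pr(U) + \ln\Pr(V\mid U)$, which holds outside a $\Pr$-null set. Iterating this identity $n-1$ times gives $\H(X_{1:n}) = \sum_{i=1}^n \H(X_i\mid X_{1:i-1})$, and carrying an extra conditioning on $Y$ through every step gives $\H(X_{1:n}\mid Y) = \sum_{i=1}^n \H(X_i\mid X_{1:i-1},Y)$.

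Subtracting the second display from the first coordinate by coordinate and invoking the definition of conditional mutual information, $\I(X_i;Y\mid X_{1:i-1}) = \H(X_i\mid X_{1:i-1}) - \H(X_i\mid X_{1:i-1},Y)$, yields $\I(X_{1:n};Y) = \sum_{i=1}^n \I(X_i;Y\mid X_{1:i-1})$, which is the claim. An equivalent route is induction on $n$: the $n=1$ case is the statement with empty conditioning set, and the inductive step uses the $n=2$ instance $\I(X_{1:n};Y) = \I(X_{1:n-1};Y) + \I(X_n;Y\mid X_{1:n-1})$ followed by the inductive hypothesis applied to $\I(X_{1:n-1};Y)$.

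The only real subtlety — and the one place I expect bookkeeping rather than new ideas — is that several of the random variables to which this lemma is applied in the paper (rewards, and $\theta_t$ in the Gaussian and AR(1) examples) are continuous-valued, so $\H$ must be read through the supremum-over-finite-quantizations definition given in Appendix~\ref{appendix:information}, or, when densities exist, through the differential-entropy representation $\I(X;Y) = \diffentropy(X) - \diffentropy(X\mid Y)$. To make the argument fully rigorous one first proves the chain rule for finitely valued random variables and then passes to the limit along a sequence of jointly refining quantizations of $(X_1,\dots,X_n,Y)$ so that all $n+1$ entropy terms converge simultaneously; the differential-entropy form then goes through verbatim whenever the relevant joint densities exist. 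Since this is a standard fact from any information-theory reference, and every invocation in this paper concerns random variables that are discrete or jointly Gaussian, for the purposes of the paper I would simply cite it rather than reproduce the limiting argument.
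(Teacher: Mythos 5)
Your proof is correct: the telescoping decomposition of $\H(X_{1:n})$ and $\H(X_{1:n}\mid Y)$ via the chain rule for entropy, followed by termwise subtraction, is the standard argument, and your remarks on handling continuous-valued variables through the supremum-over-quantizations definition are appropriate. Note that the paper itself states this lemma as a known fact and gives no proof, so there is nothing to compare against; your suggestion to simply cite a standard reference is exactly what the authors do.
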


\begin{lemma} [Data Processing Inequality for Mutual Information]
\label{lemma:dp_MI}
If $X$ and $Z$ are independent conditioned on $Y$, then 
\begin{align*}
\I(X; Y) \geq \I(X; Z). 
\end{align*}
\end{lemma}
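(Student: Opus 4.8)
The plan is to derive the inequality from the chain rule for mutual information (Lemma~\ref{lemma:chain_MI}) together with the nonnegativity of conditional mutual information. First I would apply the chain rule to the pair $(Y,Z)$ in both orderings, using symmetry of $\I$, to obtain
\begin{align*}
\I(X; Y, Z) = \I(X; Y) + \I(X; Z \mid Y) = \I(X; Z) + \I(X; Y \mid Z).
\end{align*}
This is just Lemma~\ref{lemma:chain_MI} written two ways, so it is valid in the general (possibly uncountable) setting adopted in this appendix.

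Next I would use the hypothesis $X \perp Z \mid Y$. By the conditional analogue of Lemma~\ref{lemma:KL_MI}, $\I(X; Z \mid Y) = \E[\KL(\Pr(Z \in \cdot \mid X, Y) \,\|\, \Pr(Z \in \cdot \mid Y))]$, and conditional independence forces $\Pr(Z \in \cdot \mid X, Y) = \Pr(Z \in \cdot \mid Y)$ almost surely, so the integrand vanishes almost surely and $\I(X; Z \mid Y) = 0$. Substituting into the identity above yields $\I(X; Y) = \I(X; Z) + \I(X; Y \mid Z)$. Finally, again by the KL representation, $\I(X; Y \mid Z) = \E[\KL(\Pr(Y \in \cdot \mid X, Z) \,\|\, \Pr(Y \in \cdot \mid Z))] \geq 0$ by Gibbs' inequality, which gives $\I(X; Y) \geq \I(X; Z)$.

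I do not expect a genuine obstacle here; this is the textbook data-processing argument. The only point needing mild care is that every manipulation should be read in terms of the supremum-over-finite-quantizations definition of (conditional) mutual information used in this appendix. Since both the chain rule and the KL-divergence representation of conditional mutual information are already recorded as lemmas, it suffices to cite them; an alternative self-contained route would instead take finite-range $f(X)$, $g(Z)$, note that $f(X)\perp g(Z)\mid Y$ is inherited from $X\perp Z\mid Y$, apply the countable-case identity, and pass to the supremum, but this is strictly more bookkeeping and not the crux.
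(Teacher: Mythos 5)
Your argument is correct and is the standard textbook derivation: expand $\I(X;Y,Z)$ by the chain rule in both orders, use $X \perp Z \mid Y$ to kill $\I(X;Z\mid Y)$, and drop the nonnegative term $\I(X;Y\mid Z)$. The paper itself states Lemma~\ref{lemma:dp_MI} without proof as a standard fact, so there is nothing to compare against; your only point of care --- that the identities must be read through the supremum-over-finite-quantizations definition in the general case --- is the right one, and your sketch of how to handle it (conditional independence is inherited by finite-range functions $f(X)$, $g(Z)$) is sound.
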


The following lemma presents one useful property of mutual information. 
\begin{lemma}
\label{lemma:conditional_mi}
Let $A$, $B$, and $C$ be three random variables. If $A \perp C|B$ then 
\begin{align*}
 \I(A; B | C) \leq \I(A; B). 
\end{align*}
\end{lemma}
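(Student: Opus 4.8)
The plan is to express the joint mutual information $\I(A; B, C)$ in two different ways via the chain rule (Lemma~\ref{lemma:chain_MI}) and to use the hypothesis $A \perp C \mid B$ to annihilate one of the resulting terms.

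First I would apply the chain rule with the pair $(B,C)$ as the first argument and $A$ as the second, listing $B$ first, and use the symmetry of (conditional) mutual information:
\[
\I(A; B, C) = \I(B,C; A) = \I(B; A) + \I(C; A \mid B) = \I(A; B) + \I(A; C \mid B).
\]
Because $A \perp C \mid B$, the realized conditional mutual information $\I(A; C \mid B = b)$ is zero for (almost) every value $b$, and taking the expectation over $B$ gives $\I(A; C \mid B) = 0$; hence $\I(A; B, C) = \I(A; B)$. Next I would expand the same quantity listing $C$ first:
\[
\I(A; B, C) = \I(C,B; A) = \I(C; A) + \I(B; A \mid C) = \I(A; C) + \I(A; B \mid C).
\]
Equating the two expansions yields $\I(A; B) = \I(A; C) + \I(A; B \mid C)$. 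Finally, mutual information is non-negative — indeed $\I(A; C) = \E[\KL(\Pr(C \in \cdot \mid A) \,\|\, \Pr(C \in \cdot))] \ge 0$ by Lemma~\ref{lemma:KL_MI} and Gibbs' inequality — so $\I(A; B \mid C) = \I(A;B) - \I(A;C) \le \I(A; B)$, which is the claim.

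I do not expect a genuine obstacle: the argument is a two-line application of the chain rule. The only points needing a bit of care are (i) deducing $\I(A; C \mid B) = 0$ from the conditional independence hypothesis — this follows by writing the conditional mutual information as the expectation over $B$ of its realized version and noting each realized term vanishes; and (ii) for random variables with uncountable ranges, appealing to the general definition of mutual information (supremum over finite quantizations) so that the chain rule and non-negativity remain valid. In the countable-range case both are immediate.
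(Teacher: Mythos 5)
Your proof is correct, but it takes a different route from the paper's. You expand $\I(A;B,C)$ twice via the chain rule (Lemma~\ref{lemma:chain_MI}), kill the term $\I(A;C\mid B)$ using the hypothesis, and conclude from non-negativity of $\I(A;C)$; along the way you actually obtain the stronger identity $\I(A;B\mid C) = \I(A;B) - \I(A;C)$. The paper instead works at the level of entropies: it writes $\I(A;B\mid C) = \H(A\mid C) - \H(A\mid B,C)$, uses the conditional independence to replace $\H(A\mid B,C)$ by $\H(A\mid B)$, and then bounds $\H(A\mid C)\leq \H(A)$ (conditioning reduces entropy) to arrive at $\I(A;B)$. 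The paper's argument is shorter but requires a caveat about finite versus differential entropy to cover continuous-valued variables; your chain-rule argument stays entirely within mutual information, which the paper defines in general via suprema over finite quantizations, so it sidesteps that caveat and is arguably the more robust formulation in the general case --- at the modest cost of invoking the chain rule twice rather than a single entropy inequality.
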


\begin{proof}
We prove for the case where $B$ has finite entropy. For the case where $B$ has infinite entropy, we use differential entropy instead of entropy in the analysis. 
\begin{align*}
  \I(A; B | C) = &\ \H(A | C) - \H(A | B, C)\\
  = &\ \H(A | C) - \H(A | B)\\
  \leq &\ \H(A) - \H(A | B) \\
  = &\ \I(A; B).  
\end{align*}
\end{proof}

\subsection{Change-of-Measure Notation}
\label{appendix:change-of-measure}
To characterize PS, let us define a change-of-measure notation. Consider random variables $X$ and $Y$ and a conditional probability $\Pr(Y \in \cdot | X = x)$ for all $x$ in the image of $X$. Let $f(x) \equiv \Pr(Y \in \cdot |X = x)$ and $g(x) \equiv \E[Y|X=x]$, for all $x$ in the image of $X$.  Given a random variable $Z$ with the same image as $X$, $f(Z)$ and $g(Z)$ are random variables. We use the notation $\Pr(Y \in \cdot |X \leftarrow Z)$ for $f(Z)$ and $\E[Y|X \leftarrow Z]$ for $g(Z)$. Note that we use the symbol $\leftarrow$ to distinguish a change of measure from conditioning on an event: in general, 
\begin{align*}
 \Pr \left(Y \in \cdot | X \leftarrow Z \right)
 \neq 
 \Pr(Y \in \cdot |X = Z),
\end{align*}
because the former represents a change of measure from $X$ to $Z$, whereas the latter represents the conditional distribution conditioning on the event $\{\omega: X = Z\}$. Similarly, in general, $\E \left[Y | X \leftarrow Z \right] \neq \E[Y |X = Z]$, and we need the former to represent a change of measure. 

{
\section{Derivation for the Coin-Tossing Games in Section~\ref{sec:motivating_discussion}}
\label{appendix:motivating_example}
First, we introduce notation that will be useful throughout this section. Our notation closely follows that introduced in Section~\ref{sec:bandits}, except that we omit the subscript $\mathrm{TS}$ for certain random variables, as it is clear from context that we are analyzing TS.

A history consists of a finite number of action-reward pairs, and we use $\mathcal{H}$ to denote the set of all histories. For all $t \in \mathbb{Z}_+$, let $A_t$ denote the action selected by TS at timestep $t$, and let $H_{t}$ represent the history generated after selecting $A_t$ and observing the corresponding reward.

\subsection{Two Coins}
\label{appendix:motivating_example_two}
Below, we show that in the coin-tossing game with two coins, introduced in Section~\ref{sec:random_coins}, TS accumulates payoffs at an expected rate of at most 75\textcent\ per timestep.

First, observe that for all $t \in \mathbb{Z}_+$, 
\begin{align*}
\mathrm{expected\ payoff\ at\ time\ } t = \mathbb{P}(p_{t,2} = 1 | A_t = 2)\mathbb{P}(A_t = 2) + 0.99\mathbb{P}(A_t = 1). 
\end{align*}

For all $t \in \mathbb{Z}_+$, the conditional probability 
\begin{align*}
\mathbb{P}(p_{t,2} = 1 | A_t = 2) = &\ \sum_{h \in \mathcal{H}} \mathbb{P}(p_{t,2} = 1 | H_{t-1} = h, A_t = 2) \mathbb{P}(H_{t-1} = h | A_t = 2)\\
= &\ \sum_{h \in \mathcal{H}} \mathbb{P}(p_{t,2} = 1 | H_{t-1} = h) \mathbb{P}(H_{t-1} = h | A_t = 2)\\
\leq &\ \max_{h \in \mathcal{H}} \mathbb{P}(p_{t,2} = 1 | H_{t-1} = h) \\
\leq &\ 1 - 0.99 \times \frac{1}{2} = 0.505. 
\end{align*}
where the second equality follows from the independence of $A_t$ and $p_{t,2}$ given $H_{t-1}$, 
and the second inequality follows from the replacement rate of $0.99$ and the fact that half of the coins in the bag have a bias of $0$. 

Therefore, combining the above equation and inequality, and note that the TS agent selects the second coin with probability at least $0.495$ (i.e., $\mathbb{P}(A_t = 2) \geq 0.495$), 
we have for all $t \in \mathbb{Z}_+$, 
\begin{align*}
\mathrm{expected\ payoff\ at\ time\ } t = 
&\ \mathbb{P}(p_{t,2} = 1 | A_t = 2)\mathbb{P}(A_t = 2) + 0.99\mathbb{P}(A_t = 1)\\
\leq &\ 0.505 \mathbb{P}(A_t = 2) + 0.99\mathbb{P}(A_t = 1)\\
\leq &\ 0.505 \times 0.495 + 0.99 \times (1 - 0.495) \approx 0.7499 < 0.75. 
\end{align*}
Therefore, TS accumulates an expected payoff of less than 75¢ per timestep.

\subsection{\texorpdfstring{$K$}{K} Coins}
\label{appendix:motivating_example_k}
Below, we show that in the coin-tossing game with $K$ coins, introduced in Section~\ref{sec:K_coins}, TS accumulates payoffs at an expected rate of at most 2\textcent\ per timestep when $K$ is sufficiently large. 

The proof closely follows the argument for the game with two coins. However, we present the full proof for completeness. 

First, observe that for all $t \in \mathbb{Z}_+$, 
\begin{align*}
\mathrm{expected\ payoff\ at\ time\ } t = \sum_{k = 2}^K \mathbb{P}(p_{t,k} = 1 | A_t = k)\mathbb{P}(A_t = k) + 0.99\mathbb{P}(A_t = 1). 
\end{align*}

For all $t \in \mathbb{Z}_+$ and $k \in \{2, ... , K\}$, 
the conditional probability 
\begin{align*}
\mathbb{P}(p_{t,k} = 1 | A_t = k) = &\ \sum_{h \in \mathcal{H}} \mathbb{P}(p_{t,k} = 1 | H_{t-1} = h, A_t = k) \mathbb{P}(H_{t-1} = h | A_t = k)\\
= &\ \sum_{h \in \mathcal{H}} \mathbb{P}(p_{t,k} = 1 | H_{t-1} = h) \mathbb{P}(H_{t-1} = h | A_t = k)\\
\leq &\ \max_{h \in \mathcal{H}} \mathbb{P}(p_{t,k} = 1 | H_{t-1} = h) \\
\leq &\ 1 - 0.99 \times 0.99 = 0.0199, 
\end{align*}
where the second equality follows from the independence of $A_t$ and $p_{t,k}$ given $H_{t-1}$, 
and the second inequality follows from the replacement rate of $0.99$ and the fact that 99\% of the coins in the bag have a bias of $0$. 

Therefore, combining the above equation and inequality, 
we have for all $t \in \mathbb{Z}_+$, 
\begin{align*} 
\mathrm{expected\ payoff\ at\ time\ } t = &\ \sum_{k = 2}^K \mathbb{P}(p_{t,k} = 1 | A_t = k)\mathbb{P}(A_t = k) + 0.99\mathbb{P}(A_t = 1)\\ 
\leq &\ 0.0199 \sum_{k = 2}^K \mathbb{P}(A_t = k) + 0.99 \mathbb{P}(A_t = 1).
\end{align*}
Recall that $\lim_{K \rightarrow +\infty} \sum_{k = 2}^K \mathbb{P}(A_t = k) = 1$, because 
\begin{align*}
\sum_{k = 2}^K \mathbb{P}(A_t = k) =
&\ \sum_{h \in \mathcal{H}}\mathbb{P}(A_t \neq 1 | H_{t-1} = h) \mathbb{P}(H_{t-1} = h) \\
\geq &\ \min_{h \in \mathcal{H}} \mathbb{P}(A_t \neq 1 | H_{t-1} = h)\\
= &\ \min_{h \in \mathcal{H}} \mathbb{P}\left(\max_{k \in \{2, ... , K\}} \hat{p}_{t, k} = 1 \Bigg{|} H_{t-1} = h\right) \\
= &\ \min_{h \in \mathcal{H}} \left[ 1 - \prod_{k \in \{2, ... , K\}} \mathbb{P}(\hat{p}_{t, k} = 0 | H_{t-1} = h)\right]  \\
= &\ \min_{h \in \mathcal{H}} \left[ 1 - \prod_{k \in \{2, ... , K\}} \left(1 - \mathbb{P}(\hat{p}_{t, k} = 1 | H_{t-1} = h)\right)\right],  \\
\geq &\ 1 - (1 - 0.0099)^{K-1} = 1 - 0.9901^{K-1}, 
\end{align*}
where the third-to-last equality follows from the fact that $\hat{p}_{t,k}$ are conditionally independent given $H_{t-1}$, and that each $\hat{p}_{t,k}$ takes values in $\{0, 1\}$ since the bag contains only coins with biases 0 and 1, and the last inequality follows from $\mathbb{P}(\hat{p}_{t, k} = 1 | H_{t-1} = h) \geq 0.99 \times 0.01 = 0.0099$  given that the replacement rate is 
$0.99$ and 1\% of the coins in the bag have a bias of 1.

Therefore, TS accumulates an expected payoff of at most 2 \textcent per timestep when $K$ is sufficiently large. 
}

\section{How Much Can PS Improve Over TS: Proof of Theorems~\ref{theorem:performance_difference_1} and~\ref{theorem:performance_difference_2} \label{appendix:difference}}

\maximumdifferencets*
\maximumdifference*

\begin{proof} 
Consider a modulated Bernoulli bandit (see Example~\ref{ex:modulated_bernoulli}) with $K$ arms. 
Arm $1$ has a deterministic mean of 
$p_1 = 1 - \epsilon$,  
which does not change over time. Each of arm $2$ through $K$'s mean reward takes value $0$ with 
probability $\beta$ and $1$ with probability $1-\beta$. 
The probability of transition for each of arm $2$ through $K$ is $q$. 

A PS agent estimates 
$\hat{\theta}_{t}$
at each timestep $t \in \mathbb{Z}_+$. Note that 
$\hat{\theta}_{t,1} = p_1$. 
For all $a \in \{2, ... , K\}$, 
\begin{align*}
    \E\left[R_{t, a} | H_{t-1}, R_{t+1:\infty}\right]
    =  \Pr\left(\theta_{t, a} = 1 | H_{t-1}, R_{t+1:\infty}\right)
    \leq 
    1 - q^2\beta\ \text{a.s.}
\end{align*}
This implies that for all timestep $t \in \mathbb{Z}_+$ and 
$a \in \{2, ... , K\}$, 
$\hat{\theta}_{t,a} \leq 1 - q^2 \beta$ a.s.

When $q$ and $\beta$ are sufficiently large, for $a \in \{2, ... , K\}$, $\hat{\theta}_{t,a} \leq 1 - q^2 \beta \leq 1 - \epsilon = p_1 = \hat{\theta}_{t, 1}$, and a PS agent selects action $1$ with probability one and collects cumulative reward 
\begin{align*}
   \mathrm{Return}(T; \pi_{\mathrm{PS}})
   =
   p_1 T = (1 - \epsilon) T. 
\end{align*}

A TS agent estimates $\hat{\theta}_t^{\pi_{\mathrm{TS}}}$ at each timestep $t \in \mathbb{Z}_+$. Note that 
$\hat{\theta}_{t, 1}^{\pi_{\mathrm{TS}}} = p_1 \in (0, 1)$. 
So a TS agent selects action $1$ at each timestep $t \in \mathbb{Z}_+$ with probability
\begin{align*}
    \Pr\left(\max_{a \in \{2, ... , K\}} \hat{\theta}_{t, a}^{\pi_{\mathrm{TS}}} < \hat{\theta}^{{\pi}_{\mathrm{TS}}}_{t, 1} \bigg| H_{t-1}^{\pi_{\mathrm{TS}}}\right) = \prod_{a = 2}^K \Pr\left(\hat{\theta}_{t, a}^{\pi_{\mathrm{TS}}} = 0 \Big| H_{t-1}^{\pi_{\mathrm{TS}}}\right)
    \leq \left(1 - q(1 - \beta)\right)^{K - 1} \ \text{a.s.}
\end{align*}
This probability is arbitrarily close to one with a sufficiently large $K$. 
So a TS agent accumulates rewards 
\begin{align*}
\mathrm{Return}(T; \pi_{\mathrm{TS}})
\leq &\
\left\{\left(1 - q(1 - \beta)\right)^{K - 1} (1 - \epsilon) + \left[1 - \left(1 - q(1 - \beta)\right)^{K - 1}\right] \left(1 - q \beta\right)\right\} T\\
\leq &\ \left\{\left(1 - q(1 - \beta)\right)^{K - 1}  + (1 - q\beta)\right\} T. 
\end{align*}
This is upper bounded by $\epsilon T$ if $q$ and $\beta$ are sufficiently large and that $K$ is sufficiently large. 
\end{proof}

\section{Equivalence of PS to TS in Stationary Bandits: Proof of Proposition~\ref{prop:equivalence} \label{appendix:equivalence}
}
\pstsequivalence*
\begin{proof}
It is sufficient to show that for all $t \in \mathbb{Z}_+$, 
\begin{align}
    \Pr\left(\hat{\theta}_t \in \cdot | H_{t-1}\right) = \Pr\left(\hat{\theta}_t^{\pi_{\mathrm{TS}}} \in \cdot | H^{\pi_\mathrm{TS}}_{t-1} \leftarrow H_{t-1}\right). 
\label{eq:equiv}
\end{align}
We first show that if $H_{t-1}$ and $H^{\pi_{\mathrm{TS}}}_{t-1}$ have the same support, then the change of measure is well-defined and \eqref{eq:equiv} holds. 

First observe that for all $t \in \mathbb{Z}_+$, $\Pr(\hat{\chi}_t \in \cdot | H^{\pi_\mathrm{TS}}_{t-1} ) = \Pr(\chi_t \in \cdot | H^{\pi_\mathrm{TS}}_{t-1})$ and $\Pr(\hat{R}^{(t-1)}_{t+1:\infty} \in \cdot | H_{t-1}) = \Pr(R_{t+1:\infty} \in \cdot | H_{t-1})$. Therefore, for all $t \in \mathbb{Z}_+$, 
\begin{align*}
&\ \Pr\left(\hat{\theta}_t^{\pi_{\mathrm{TS}}} \in \cdot \Big| H^{\pi_\mathrm{TS}}_{t-1}\right) = \Pr\left(\E[R_{t} | H_{t-1}^{\pi_{\mathrm{TS}}}, \chi_t \leftarrow \hat{\chi}_t] \in \cdot | H^{\pi_\mathrm{TS}}_{t-1}\right) = 
\Pr\left(\E[R_{t}| H_{t-1}^{\pi_{\mathrm{TS}}}, \chi_t] \in \cdot | H^{\pi_\mathrm{TS}}_{t-1}\right)\\
\text{and} &\ 
\Pr\left(\hat{\theta}_t \in \cdot \Big| H_{t-1}\right) = \Pr\left(\E[R_{t} | H_{t-1}, R_{t+1:\infty} \leftarrow \hat{R}^{(t)}_{t+1:\infty}] \in \cdot \Big| H_{t-1}\right)
= \Pr\left(\E[R_{t} | H_{t-1}, R_{t+1:\infty}] \in \cdot | H_{t-1}\right).\numberthis
\label{eq:equiv_proof_0}
\end{align*}
In addition, for all $t \in \mathbb{Z}_+$, 
\begin{align}
\E[R_{t} | H_{t-1}, R_{t+1:\infty}] 
\overset{\rm a.s.}{=} \E[R_{t} | H_{t-1}, R_{t+1:\infty}, P]
= \E[R_{t} | H_{t-}, P].
\numberthis
\label{eq:equiv_proof_1}
\end{align}
These conditional expectations determine how actions are sampled by PS and TS, and the equivalence implies that the two implement the same policy; that is, for all $t \in \mathbb{Z}_+$, 
\begin{align*}
      \Pr\left(\hat{\theta}_t \in \cdot \Big| H_{t-1}\right)
      \stackrel{(a)}{=} &\ 
      \Pr\left(\E[R_{t} | H_{t-1}, R_{t+1:\infty}] \in \cdot | H_{t-1}\right) \\
      \stackrel{(b)}{=} &\ 
      \Pr\left(\E[R_{t} | H_{t-1}, P] \in \cdot | H_{t-1}\right) \\
      = &\ \Pr\left(\E[R_{t} | H^{\pi_{\mathrm{TS}}}_{t-1}, P] \in \cdot 
      \Big| H^{\pi_{\mathrm{TS}}}_{t-1} \leftarrow H_{t-1}\right) \\
      \stackrel{(c)}{=} &\ 
      \Pr\left(\hat{\theta}_t^{\pi_{\mathrm{TS}}} \in \cdot \Big| H^{\pi_{\mathrm{TS}}}_{t-1} \leftarrow H_{t-1}\right), 
\end{align*}
where $(a)$ follows from \eqref{eq:equiv_proof_0}, $(b)$ follows from \eqref{eq:equiv_proof_1}, and $(c)$ follows from \eqref{eq:equiv_proof_0} and the fact that the TS agent we consider takes $\chi_t = P$ for all $t \in \mathbb{Z}_+$. 
Note that $H_0 = H_0^{\pi_{\mathrm{TS}}}$, so it is clear that by induction, for all $t \in \mathbb{Z}_+$, $H_t$ and $H_t^{\pi_{\mathrm{TS}}}$ have the same support and \eqref{eq:equiv} holds. 
\end{proof}

{
\section{A Motivating Example for Notions of Regret in Non-Stationary Bandits: Discussion in Section~\ref{sec:regret_def}}
\label{appendix:regret_non-stationary}
Below we provide a simple example, to show that multiple valid ``expected rewards" can exist, and thus it is hard to directly extend the traditional notion of regret defined for stationary environments to non-stationary ones. 
\begin{example}
[\bf{A Gaussian Bandit with Two Independent Actions}] Let 
$\theta_{1} = 0.8$, 
and $\{\theta_{t,2}\}_{t = 1}^{+\infty}$, 
$\{W_{2t, 2}\}_{t = 1}^{+\infty}$, 
$\{Z_{t, 1}\}_{t = 1}^{+\infty}$, and $\{Z_{t, 2}\}_{t = 1}^{+\infty}$ be independent sequences, each 
independent and identically distributed according to $\mathcal{N}(0, 1)$. 
Let $W_{2t-1, 2} = W_{2t, 2}$ for $t \in \mathbb{Z}_{+}$. 
We define the rewards as $R_{t, 1} = \theta_{t, 1} + Z_{t,1}$ and $R_{t,2} = \theta_{t,2} + W_{t,2} + Z_{t,2}$ for all $t \in \mathbb{Z}_{+}$. 
\end{example}

This example describes a non-stationary Gaussian bandit with two independent actions $\actions = \{1, 2\}$ and reward process $\{R_t\}_{t = 1}^{+\infty}$. In this example, each of 
$\{\theta_{t,2}\}_{t = 1}^{+\infty}$, 
$\{\theta_{t,2} + W_{t,2}\}_{t = 1}^{+\infty}$, 
and $\{R_{t,2}\}_{t = 1}^{+\infty}$ itself qualify as a potential choice for the sequence of ``expected rewards" associated with action $2$. 
}

\section{Nonnegativity of Hindsight Regret: Proof of Proposition~\ref{proposition:baseline}}
\label{appendix:regret_nonnegative}
\regretnonnegative*

\begin{proof}
For all policies $\pi$, 
and 
$t \in \mathbb{Z}_+$, 
\begin{align*}
     \E\left[R_{t, A_t^{{\pi}}}\right]
    = &\ \E\left[\E\left[R_{t, A_t^{{\pi}}} | H_{t-1}^{\pi}\right] \right] \\
    \stackrel{(a)}{=} &\ \E\left[\sum_{a \in \actions} \E\left[R_{t, a} |H_{t-1}^{\pi}\right] \Pr(A_t^{\pi} = a | H_{t-1}^{\pi})\right] \\
     \leq &\ \E\left[\max_{a \in \actions} \E\left[R_{t, a} | H_{t-1}^{\pi} \right] \right]\\
     \stackrel{}{=} &\ \E\left[\max_{a \in \actions} \E\left[\E[R_{t, a}|R_{1:t-1}] | H_{t-1}^{\pi} \right] \right]\\
     \stackrel{(b)}{\leq} &\ \E[R_{t, *}], \numberthis
     \label{eq:lowerbound_eq1}
\end{align*}
where $(a)$ follows from the fact that  $A_t^{\pi}$ is independent of $R_{t}$ conditioned on $H_{t-1}^{\pi}$, and $(b)$ follows from Jensen's inequality.  
\end{proof}

\section{Hindsight Regret and Foresight Regret: Proof of Proposition~\ref{proposition:strong_regret}}
\label{appendix:strong_regret}
\strongregret*
\begin{proof}
It suffices to show that for all $t \in \mathbb{Z}_+$, 
\begin{align*}
\E[R_{t, *}] \leq \E[R_{t, \overline{*}}]. 
\end{align*}
For all $t \in \mathbb{Z}_+$, 
\begin{align*}
\E[R_{t, \overline{*}}] 
\stackrel{}{=} &\ \E\left[\E\left[R_{t, \overline{*}}|R_{t+1:2t-1}\right]\right] \\
= &\ \E\left[\E\left[\max_{a \in \actions}\E\left[R_{t,a} | R_{t+1:\infty}\right]\bigg|R_{t+1:2t-1}\right]\right]\\
\stackrel{(a)}{\geq} &\ \E\left[\max_{a \in \actions}\E\left[\E\left[R_{t,a} | R_{t+1:\infty}\right]\bigg|R_{t+1:2t-1}\right]\right] \\
\stackrel{}{=} &\ \E\left[\max_{a \in \actions}\E\left[R_{t,a}|R_{t+1:2t-1}\right]\right] \\
\stackrel{(b)}{=} &\ \E\left[\max_{a \in \actions}\E\left[R_{t,a}|R_{1:t-1}\right]\right] \\
= &\ \E[R_{t, *}], 
\end{align*}
where $(a)$ follows from Jensen's inequality, and $(b)$ from reversibility. 
\end{proof}

\section{Predictive Information in Stationary Bandits: Proof of Proposition~\ref{proposition:predictive_information_stationary}}
\label{appendix:predictive_information_stationary}

\predictiveinformations*
\begin{proof} 
Observe that conditioned on $R_{1:t-1}$ and $P$, $R_{t+1:\infty} \perp R_{t}$; conditioned on $R_{1:t-1}$ and $R_{t+1:\infty}$, $P \perp R_{t}$. 
Therefore, by data-processing inequality for mutual information, we have 
\begin{align}
\I(R_{t+1:\infty}; R_{t} | R_{1:t-1}) =
 \I(P; R_{t} | R_{1:t-1}).
 \label{eq:information_proof_1}
 \end{align}

Then the cumulative predictive information satisfies
\begin{align*}
\sum_{t = 1}^{+\infty} \Delta_t = &\ \sum_{t = 1}^{+\infty} \I(R_{t+1:\infty}; R_{t} | R_{1:t-1}) \\
= &\ \sum_{t = 1}^{+\infty} \I(P; R_{t} | R_{1:t-1}) \\
= &\ \I(P; R_{1:\infty}) = \H(P), 
\end{align*}
where the second equality follows from \eqref{eq:information_proof_1}, and the 
third equality follows from chain rule for mutual information. 
\end{proof}

\section{General Regret Analysis: Proof of Theorem~\ref{theorem:general_regret}}
\label{appendix:general_regret}

\generalregret*

\begin{proof}
For all policies $\pi$ and $T \in \mathbb{Z}_+$, 
\begin{align*}
{\mathrm{Regret}}_{\mathrm{F}}(T; \pi)
=& \sum_{t= 1 }^{T} \E\left[R_{t, \overline{*}} - R_{t, A_t^{\pi}}\right] \\
\overset{(a)}{\leq}& \sum_{t=1}^{T} 
\sqrt{\Gamma_{t}^{\pi} \I(R_{t+1:\infty}; A_t^{\pi}, R_{t, A_t^{\pi}}|H_{t-1}^{\pi})}\\
\overset{(b)}{\leq}& \sqrt{\sum_{t=1}^{T} \I\left(R_{t+1:\infty}; A_t^{\pi}, R_{t, A_t^{\pi}}|H_{t-1}^{\pi}\right)} \sqrt{\sum_{t = 1}^{T}{\Gamma}^{\pi}_t}, 
\numberthis
\label{eq:main_proof_eq_1}
\end{align*}
where $(a)$ follows from the definition of the information ratio, 
and $(b)$ follows from the Cauchy-Schwarz inequality.
Observe that 
 $R_{t+1:\infty} \perp H^{\pi}_{t} | R_{1:t}$. 
Hence, for all policies $\pi$ and $t \in \mathbb{Z}_+$, 
\begin{align*}
       \I\left(R_{t+1:\infty}; A_t^{\pi}, R_{t, A_t^{\pi}} | H_{t-1}^{\pi}\right) 
    = \I\left(R_{t+1:\infty} ; R_{1:t} | H_{t-1}^{\pi}\right) - \I\left(R_{t+1:\infty} ; R_{1:t}|H_{t}^{\pi}\right).
\end{align*}
Therefore, for all policies $\pi$ and $T \in \mathbb{Z}_+$, 
\begin{align*}
    & \sum_{t = 1}^{T} \I\left(R_{t+1:\infty}; A_t^{\pi}, R_{t, A_t^{\pi}} | H_{t-1}^{\pi}\right) \\
    \stackrel{}{=} &\ \sum_{t = 1}^{T}\left[\I\left(R_{t+1:\infty}; R_{1:t} | H_{t-1}^{\pi}\right) - \I\left(R_{t+1:\infty}; R_{1:t}|H_{t}^{\pi}\right) \right]\\
    \leq &\ \I(R_{2:\infty}; R_1) + 
    \sum_{t = 2}^{T} \left[\I\left(R_{t+1:\infty}; R_{1:t} | H_{t-1}^{\pi}\right) - \I\left(R_{t:\infty}; R_{1:t-1}|H_{t-1}^{\pi}\right)\right] \\ 
    \stackrel{(a)}{=} &\ \I(R_{2:\infty}; R_1) + 
    \sum_{t = 2}^{T} \left[\I\left(R_{t+1:\infty}; R_{1:t-1}, | H_{t-1}^{\pi}\right)
    + \I\left(R_{t+1:\infty}; R_{t} | R_{1:t-1}, H_{t-1}^{\pi}\right)
    - \I\left(R_{t:\infty}; R_{1:t-1}|H_{t-1}^{\pi}\right)\right] \\ 
    \leq &\ \I(R_{2:\infty}; R_1) + 
    \sum_{t = 2}^{T} \I\left(R_{t+1:\infty}; R_{t} | R_{1:t-1}, H_{t-1}^{\pi}\right) \\ 
    \stackrel{(b)}{=} &\  \I(R_{2:\infty}; R_1) + 
    \sum_{t = 2}^{T} 
    \I\left(R_{t+1:\infty}; R_{t} | R_{1:t-1}\right) \\
    \stackrel{}{=} &\ \sum_{t = 1}^{T} \Delta_t, 
    \numberthis
    \label{eq:main_proof_eq_2}
\end{align*}
where $(a)$ follows from the chain rule of mutual information, and $(b)$ from $R_{t:\infty} \perp H_{t-1}^{\pi} | R_{1:t-1}$. 
Incorporating \eqref{eq:main_proof_eq_1} and \eqref{eq:main_proof_eq_2}, we complete the proof. 
\end{proof}

\section{Bounding the Predictive Information: Proof of Lemma~\ref{lemma:markov_info}}
\label{appendix:predictive_information}
\markovinfo*

\begin{proof}
Observe that for all $t \in \mathbb{Z}_+$, the incremental predictive information satisfies 
\begin{align*}
\Delta_t = &\ \I(R_{t+1:\infty}; R_{t} | R_{1:t-1})\\
\stackrel{(a)}{\leq} &\ \I(S_{t+1}; R_{t} | R_{1:t-1}) \\
= &\ \H(S_{t+1} | R_{1:t-1}) - \H(S_{t+1} | R_{1:t})\\
= &\ \H(S_{t+1} | R_{1:t-1}, S_{t}) 
+ \I(S_{t+1}; S_{t} | R_{1:t-1}) 
- \H(S_{t+1} | R_{1:t}, S_{t})
- \I(S_{t+1}; S_{t} | R_{1:t})\\
\stackrel{(b)}{=} &\ \H(S_{t+1} | S_{t}) 
+ \I(S_{t+1}; S_{t} | R_{1:t-1}) 
- \H(S_{t+1} | S_{t})
- \I(S_{t+1}; S_{t} | R_{1:t})\\
= &\ \I(S_{t+1}; S_{t} | R_{1:t-1}) 
- \I(S_{t+1}; S_{t} | R_{1:t}), 
\end{align*}
where $(a)$ follows from $R_{t+1:\infty} \perp R_{t} | S_{t+1}, R_{1:t-1}$ and the data processing inequality, and $(b)$ from $S_{t+1} \perp R_{1:t} | S_{t}$. 

Then the cumulative predictive information can be upper-bounded as follows:
\begin{align*}
\sum_{t = 1}^{T} \Delta_t = &\ 
\sum_{t = 1}^{T} \left[\I(S_{t+1}; S_{t} | R_{1:t-1}) 
- \I(S_{t+1}; S_{t} | R_{1:t})\right]\\
= &\ \I(S_2; S_1) + 
\sum_{t = 1}^{T-1} \left[\I(S_{t+2}; S_{t+1} | R_{1:t}) 
- \I(S_{t+1}; S_{t} | R_{1:t})\right] \\
\leq &\ \I(S_2; S_1) + 
\sum_{t = 1}^{T-1} \left[\I(S_{t+2}, S_{t}; S_{t+1} | R_{1:t}) 
- \I(S_{t+1}; S_{t} | R_{1:t})\right] \\
= &\ \I(S_2; S_1) + 
\sum_{t = 1}^{T-1} \left[\I(S_{t+2}; S_{t+1} | R_{1:t}, S_{t}) \right] \\
\stackrel{(a)}{=} &\ \I(S_2; S_1) + 
\sum_{t = 1}^{T-1} \left[\I(S_{t+2}; S_{t+1} | S_{t}) \right], 
\numberthis
\label{eq:markov_info_bound}
\end{align*}
where $(a)$ follows from $(S_{t+1}, S_{t+2}) \perp R_{1:t} | S_{t}$. 
\end{proof}

\section{Recovering Existing Bounds for Stationary Bandits}
\label{appendix:recover_stationary}
\subsection{Comparing Theorem~\ref{theorem:general_regret} to Existing Results}
It follows directly from  Lemma~\ref{lemma:markov_info} and Theorem~\ref{theorem:general_regret} that for all policies $\pi$ and $T \in \mathbb{Z}_+$, the regret satisfies 
\begin{align*}
{\mathrm{Regret}}_{\mathrm{F}}(T; \pi) \leq \sqrt{\left(\sum_{t = 1}^{T}\Gamma_t^{\pi}\right)[\I(S_2;S_1) + \sum_{t = 1}^{T-1} \I(S_{t+2}; S_{t+1} | S_{t})}].
\end{align*}

To relate our regret bound to existing regret bounds established in the literature of stationary bandits, we compare our bound to a bound established by \cite{neu2022lifting}. 
Observe that in a stationary bandit,
if the information ratio satisfies $\Gamma_t^{\pi} \leq \overline{\Gamma}^{\pi}$ for all $t \in \mathbb{Z}_+$ 
for some $\overline{\Gamma}^{\pi}$, 
then our result establishes that 
${\mathrm{Regret}}_{\mathrm{F}}(T; \pi) \leq \sqrt{\overline{\Gamma}^{\pi}T\H(P)}$, by letting $S_t$ be the reward distribution $P$. This is equivalent to an information-theoretic regret bound established by \cite{neu2022lifting}.  

\subsection{Comparing Corollary~\ref{cor:regret} to Existing Results for TS}
In a stationary bandit, by letting $S_t$ be the reward distribution, which we denote by $P$, Corollary~\ref{cor:regret} implies that 
\begin{align*}{\mathrm{Regret}}_{\mathrm{F}}(T) \leq \sqrt{2|\actions|\sigma_{\mathrm{SG}}^2 T \H(P)}.
\end{align*}
This regret bound for PS is identical to an information-theoretic regret bound for TS established by \cite{neu2022lifting} in stationary bandits.

\section{PS Regret Analysis: Proof of Lemma~\ref{lemma:ir}}
\label{appendix:info_ratio}
\ir*

\begin{proof}
For all $t \in \mathbb{Z}_+$, let 
\begin{align*}
\overline{\theta}_{t}^{H} = \E\left[R_{t} \big| H_{t-1},  R_{t+1:\infty}\right], 
\end{align*}
and $
A_{t, *}^{H} \in \arg \max_{a \in \actions}
\overline{\theta}^{H}_{t, a}$
satisfying $A_{t, *}^{H} \perp A_t | H_{t-1}$, 
and $R_{t, *}^{H} = R_{t, A_{t, *}^{H}}.
$ 
Then for all $t \in \mathbb{Z}_+$, we have
\begin{align*}
    \Pr\left(A_{t, *}^{H} \in \cdot | H_{t-1}\right) = \Pr(A_t \in \cdot | H_{t-1}) \text{ and } A_{t, *}^{H} \perp A_t | H_{t-1}. 
\end{align*}

We begin by establishing a relation using KL-divergence.  For all $a,a' \in \actions$, and $\lambda \in \R_+$, it follows from the variantal form of KL-divergence (Lemma~\ref{lemma:var_KL} of Appendix~\ref{appendix:information}) with $X = \lambda (R_{t,a} - \E[R_{t,a}|H_{t-1}])$ that
for all $t \in \mathbb{Z}_+$ and $h \in \mathcal{H}_t$, 
\begin{align*}
&\ \KL\left(\Pr ({R}_{t,a} \in \cdot | A_{t, *}^{H} = a', H_{t-1} = h ) \Big\| \  \Pr({R}_{t,a} \in \cdot | H_{t-1} = h)\right)\\
\geq &\ \E\left[X|H_{t-1} = h, A_{t, *}^{H} = a'\right] - \ln \E[\exp(X) |H_{t-1} = h] \\
\geq &\ \lambda \E\left[R_{t,a} - \E[R_{t,a}|H_{t-1} = h]|H_{t-1} = h, A_{t, *}^{H} = a'\right] - \frac{1}{2} \lambda^2 \sigma_{\text{SG}}^2.
\end{align*}
By maximizing over $\lambda$, we obtain
\begin{align*}
\label{eq:KL-regret-lemma}
&\ \left(\E\left[{R}_{t,a} | A_{t, *}^{H} = a' , H_{t-1} = h \right] - \E\left[ {R}_{t,a} | H_{t-1} = h \right]\right)^2 \\ \leq &\ 2 \sigma_{\text{SG}}^2 \KL \left(\Pr\left({R}_{t,a} \in \cdot | A_{t, *}^{H} = a^{\prime}, H_{t-1} = h \right) \Big\|\   \Pr\left({R}_{t,a} \in \cdot | H_{t-1} = h \right) \right).
\numberthis
\end{align*} 
We next establish a relation between this KL-divergence and mutual information. In particular,
\begin{align}
&\ \I\left(A_{t, *}^{H}; A_t, {R}_{t, A_t} | {H}_{t-1}  = h \right) 
\nonumber
\\
= &\ \I\left(A_{t, *}^{H}; A_t  | {H}_{t-1}  = h \right) + \I\left(A_{t, *}^{H}; {R}_{t, A_t} | A_t, {H}_{t-1}  = h \right)
\nonumber
\\
 \overset{(a)}{=} &\ \I\left(A_{t, *}^{H}; {R}_{t, A_t } | A_t , {H}_{t-1}  = h \right)
\nonumber
\\
\overset{}{=} &\ \sum_{a \in \actions} \Pr(A_t = a | H_{t-1} = h ) \I\left(A_{t, *}^{H}; {R}_{t, A_t } | A_t = a, {H}_{t-1} = h \right)
\nonumber
\\
\overset{}{=} &\ \sum_{a \in \actions} \Pr(A_t = a | {H}_{t-1} = h) \I\left(A_{t, *}^{H}; {R}_{t, a } | A_t = a, {H}_{t-1} = h \right)
\nonumber
\\
\overset{(b)}{=} &\ \sum_{a \in \actions} \Pr(A_t = a | {H}_{t-1} = h ) \I\left(A_{t, *}^{H}; {R}_{t, a } | {H}_{t-1} = h \right)
\nonumber
\\
\overset{(c)}{=} &\ \sum_{a \in \actions} \Pr(A_t = a | {H}_{t-1} = h) 
\left[ \sum_{a' \in \actions} \Pr\left(A_{t, *}^{H} = a' | {H}_{t-1} = h\right) \right.\\
&\left.\quad \cdot \KL \left( \Pr\left({R}_{t, a} \in \cdot | A_{t, *}^{H} = a', {H}_{t-1} = h\right) \big\|\ 
\Pr( {R}_{t, a} \in \cdot | {H}_{t-1} = h) \right) \right]
\nonumber
\\
 \overset{(d)}{=} &\ \sum_{a \in \actions} \sum_{a' \in \actions}
 \Pr\left(A_{t, *}^{H} = a | H_{t-1} = h\right)
 \Pr\left(A_{t, *}^{H} = a' | H_{t-1} = h\right)\\
&\quad \cdot \KL \left( \Pr\left({R}_{t, a} \in \cdot | A_{t, *}^{H} = a', H_{t-1} = h\right) \big\|\ 
\Pr( {R}_{t, a} \in \cdot | H_{t-1} = h) \right) 
\label{eq:KL-MI-lemma}
\end{align}
where $(a)$ follows from the fact that $A_t \perp A_{t, *}^{H} | H_{t-1} $, 
$(b)$ follows from $A_t\perp(A_{t, *}^{H}, R_{t,a})|H_{t-1}$, $(c)$ follows from the KL-divergence representation of mutual information (Lemma~\ref{lemma:KL_MI} of Appendix~\ref{appendix:information}), and $(d)$ follows from  $\Pr({A}_t \in \cdot | H_{t-1} = h) = \Pr(A_{t, *}^{H} \in \cdot | H_{t-1} = h)$ for all $t \in \mathbb{Z}_+$ and $h \in \mathcal{H}_t$.

Next, we bound the difference between $R_{t,A_{t, *}^{H}}$ and $R_{t, A_t}$. For all $t \in \mathbb{Z}_+$ and $h \in \mathcal{H}_t$, we have
\begin{align*}
&\ \E\left[R_{t,A_{t, *}^{H}} - R_{t, A_t}\big|H_{t-1} = h \right]^2 \\
\stackrel{(a)}{=} &\ \left[\sum_{a \in \actions} \Pr\left(A_{t, *}^{H} = a \Big| H_{t-1} = h \right) \right.\\
&\left.\quad \cdot \left(\E\left[R_{t, a}\Big|A_{t, *}^{H} = a,H_{t-1} = h\right] - \E[R_{t, a}|H_{t-1} = h]\right)\right]^2 \\
\stackrel{(b)}{\leq} &\ |\actions| \sum_{a \in \actions} \Pr\left(A_{t, *}^{H} = a | H_{t-1}=h\right)^2 \\
&\quad \cdot \left(\E\left[R_{t, a}|A_{t, *}^{H} = a,H_{t-1}=h\right] - \E\left[R_{t, a}|H_{t-1}=h\right]\right)^2 \\
\leq &\ |\actions| \sum_{a \in \actions} \sum_{a' \in \actions}
\Pr\left(A_{t, *}^{H} = a | H_{t-1}=h\right)
\Pr\left(A_{t, *}^{H} = a' | H_{t-1}=h\right) \\
&\quad \cdot \left(\E\left[R_{t, a}|A_{t, *}^{H} = a',H_{t-1}=h\right] - \E\left[R_{t, a}|H_{t-1}=h\right]\right)^2 \\
\stackrel{(c)}{\leq} &\ 2 |\actions| \sigma_{\text{SG}}^2
\sum_{a \in \actions} \sum_{a' \in \actions}
\Pr\left(A_{t, *}^{H} = a | H_{t-1}=h\right)
\Pr\left(A_{t, *}^{H} = a' | H_{t-1}=h\right) \\
&\quad \cdot \KL\bigl(\Pr\left(R_{t,a} \in \cdot | A_{t, *}^{H} = a',H_{t-1}=h\right) \\
&\qquad \big\|\ \Pr\left(R_{t,a} \in \cdot | H_{t-1}=h\right)\bigr) \\
\stackrel{(d)}{=} &\ 2 |\actions| \sigma_{\text{SG}}^2 \\
&\quad \cdot \I\left(A_{t, *}^{H}; A_t, R_{t,A_t} \right.\\
&\qquad \left. | H_{t-1}=h\right),
\numberthis
\label{eq:info_ratio_proof_2}
\end{align*}
where $(a)$ follows from 
$A_t\perp R_{t,a}|H_{t-1}$ and $\Pr({A}_t \in \cdot | H_{t-1} = h) = \Pr(A_{t, *}^{H} \in \cdot | H_{t-1} = h)$, $(b)$ follows from the Cauchy-Schwartz inequality, $(c)$ follows from Equation (\ref{eq:KL-regret-lemma}), and $(d)$ follows from Equation (\ref{eq:KL-MI-lemma}). 
Hence,
\begin{align*}
 {\E\left[R_{t, A_{t, *}^{H}} - R_{t, A_{t}}\right]^2}
 = &\ \E\left[\E \left[R_{t, A_{t, *}^{H}} - R_{t, A_{t}}| H_{t-1} \right]\right]^2 \\
 \stackrel{(a)}{\leq} &\ \E\left[\E \left[R_{t, A_{t, *}^{H}} - R_{t, A_{t}}| H_{t-1} \right]^2 \right] \\
  \stackrel{(b)}{\leq} &\ 
    \E\left[2 |\actions| \sigma^2_{\mathrm{SG}} \I\left(A_{t, *}^{H}; A_t, R_{t, A_{t}} | H_{t-1} = H_{t-1}\right)\right] \\
= &\ 2 |\actions| \sigma^2_{\mathrm{SG}}
    \I\left(A_{t, *}^{H}; A_t, R_{t, A_{t}} | H_{t-1}\right), \numberthis
\label{eq:proof_info_ratio_1}
\end{align*}
where $(a)$ follows from Jensen's Inequality and $(b)$ follows from \eqref{eq:info_ratio_proof_2}. 

In addition, for all $t \in \mathbb{Z}_+$, 
\begin{align*}
    \E[R_{t, \overline{*}}] 
    = &\ \E\left[\max_{a \in \actions} \E\left[R_{t, a} | R_{t+1:\infty}\right]\right] \\
    = &\ \E\left[\max_{a \in \actions} \E\left[\E\left[R_{t, a} | H_{t-1},  R_{t+1:\infty}\right] | R_{t+1:\infty} \right]\right] \\
    \stackrel{(a)}{\leq} &\ \E\left[ \E\left[\max_{a \in \actions} \E\left[R_{t, a} | H_{t-1},  R_{t+1:\infty}\right] \bigg| R_{t+1:\infty} \right]\right] \\
    \stackrel{}{=} &\ \E\left[\max_{a \in \actions} \E\left[R_{t, a} | H_{t-1},  R_{t+1:\infty}\right] \right] \\
    = &\ \E\left[R_{t, A_{t, *}^{H}}\right]. \numberthis
\label{eq:proof_info_ratio_2}
\end{align*}
By the data processing inequality of mutual information (Lemma~\ref{lemma:dp_MI} of Appendix~\ref{appendix:information}), we have for all $t \in \mathbb{Z}_+$, 
\begin{align}
   \I\left(R_{t+1:\infty}; A_t, R_{t, A_t} | H_{t-1}\right)
   \geq \I\left(A_{t, *}^{H}; A_t, R_{t, A_t} | H_{t-1}\right). 
\label{eq:proof_info_ratio_3}
\end{align}
Then it follows from \eqref{eq:proof_info_ratio_1},  \eqref{eq:proof_info_ratio_2} and 
\eqref{eq:proof_info_ratio_3} that for all $t \in \mathbb{Z}_+$, 
\begin{align*}
    \Gamma_t 
    = \frac{\E\left[R_{t, \overline{*}} - R_{t, A_t} \right]^2}{\I\left(R_{t+1:\infty}; A_t, R_{t, A_t} | H_{t-1}\right)}
    \leq \frac{\E\left[R_{t, A_{t, *}^{H}} - R_{t, A_t} \right]^2}{\I\left(R_{t+1:\infty}; A_t, R_{t, A_t} | H_{t-1}\right)}
    \leq \frac{\E\left[R_{t, A_{t, *}^{H}} - R_{t, A_t} \right]^2}{\I\left(A_{t, *}^{H}; A_t, R_{t, A_t} | H_{t-1}\right)}
    \leq 2 |\actions| \sigma_{\mathrm{SG}}^2. 
\end{align*}
\end{proof}

\section{PS Regret Upper Bound in a Modulated Bernoulli Bandit: Proof of Theorem~\ref{theorem:regret_bernoulli}}
\label{appendix:modulated_bernoulli}
\regretbernoulli*
{Note that in a modulated Bernoulli bandit, $R_{t,a} \in [0,1]$ for each $t \in \mathbb{Z}_+$ and $a \in \actions$. So the rewards are sub-Gaussian with parameter $\sigma_{\mathrm{SG}} = \frac{1}{2}$. Then}
Theorem~\ref{theorem:regret_bernoulli} follows directly from Lemma~\ref{lemma:bernoulli_pred}
{ and Corollary~\ref{cor:regret}}. Below we present this lemma followed by its proof. 
\begin{lemma}
\label{lemma:bernoulli_pred}
In a modulated Bernoulli bandit, for all $T \in \mathbb{Z}_+$, the cumulative predictive information satisfies 
{
\begin{align*}
\sum_{t = 0}^{T-1}\Delta_t 
\leq &\ \I(\theta_2; \theta_1) + (T - 1) \I(\theta_3; \theta_2 | \theta_1) \\
\leq  &\ \sum_{a \in \actions} (1 - q_a) \H(\theta_{1,a})
+ (T-1) \sum_{a \in \actions} \left[2 \H(q_a) + q_a(1 - q_a) \H(\theta_{1,a})\right]
\end{align*}
}
\remove{\begin{align*}
\sum_{t = 0}^{T-1}\Delta_t 
\leq  \sum_{a \in \actions} (1 - q_a) \H(\theta_{0,a})
+ (T-1) \sum_{a \in \actions} \left[2 \H(q_a) + q_a(1 - q_a) \H(\theta_{0,a})\right]
\end{align*}}
\end{lemma}
\begin{proof}
Applying Lemma~\ref{lemma:markov_info} by letting $S_t = \theta_{t}$, we can bound the cumulative predictive information as follows
\begin{align*}
\sum_{t = 1}^{T} \Delta_t \leq 
\I(S_2; S_1) + (T - 1) \I(S_3; S_2 | S_1) 
= 
\I(\theta_2; \theta_1) + (T - 1) \I(\theta_3; \theta_2 | \theta_1).
\end{align*}

We can further upper-bound the total predictive information by the entropy and conditional entropy of mean rewards, i.e., by $\H(\theta_1)$, $\H(\theta_2 | \theta_1)$, and $\H(\theta_3 | \theta_1)$: 
\begin{align*}
\sum_{t = 1}^{T} \Delta_t \leq &\ 
\I(\theta_2; \theta_1) + (T - 1) \I(\theta_3; \theta_2 | \theta_1)  \\
= &\ \H(\theta_2) - \H(\theta_2 | \theta_1)  + (T-1)  \left[\H(\theta_3 | \theta_1) - \H(\theta_3 | \theta_2, \theta_1)\right]  \\
\stackrel{}{=} &\ \H(\theta_1) - \H(\theta_2 | \theta_1) + (T-1 )\left[\H(\theta_3 | \theta_1) - \H(\theta_2 | \theta_1)\right], \numberthis
\label{eq:bernoulli_info_bound}
\end{align*}
where the last equality follows from $\theta_3 \perp \theta_1 | \theta_2$.

To upper-bound $\H(\theta_3 | \theta_1)$ and to lower-bound $\H(\theta_2 | \theta_1)$ in \eqref{eq:bernoulli_info_bound}, 
it is helpful to consider an alternative formulation of the modulated Bernoulli bandit. For all $a \in \actions$, let $\{B_{t, a}\}_{t=1}^{\infty}$ be an i.i.d.~$\mathrm{Bernoulli}(q_a)$ process and $\{X_{t,a}\}_{t=1}^{\infty}$ be an i.i.d.~process with discrete range. We let $\theta_{1,a} = X_{1,a}$ and, for all $a \in \actions$ and $t \in \mathbb{Z}_+$,
\begin{align*}
\theta_{t+1, a} = \left\{\begin{array}{ll}
X_{t+1, a} \qquad & \text{if } B_{t+1,a} = 1 \\
\theta_{t, a} \qquad & \text{if } B_{t+1,a} = 0.
\end{array}\right.
\end{align*}
It is clear that each $\theta_{t,a}$ is ``redrawn" from its initial distribution with probability $q_a$ at each timestep. With this alternative formulation, for all $t \in \mathbb{Z}_+$ and $a \in \actions$, 
\begin{align}
    \theta_{t + 1, a} = (1 - B_{t+1, a}) \theta_{t, a}
    + B_{t+1, a} X_{t+1, a}. 
\label{eq:theta_recurrence}
\end{align}
This recursive formula \eqref{eq:theta_recurrence} is helpful in deriving an lower bound for $\H(\theta_2 | \theta_1)$ and in deriving an upper bound for $\H(\theta_3 | \theta_1)$. 
We first derive a lower bound for $\H(\theta_2 | \theta_1)$: 
\begin{align*}
\H(\theta_2 | \theta_1) 
    = &\ \sum_{a \in \actions} \H(\theta_{2, a} | \theta_{1, a}) \\
    \geq &\ \sum_{a \in \actions} \H(\theta_{2, a} | \theta_{1, a}, B_{2,a}) \\
    \stackrel{(a)}{=} &\ \sum_{a \in \actions} \H((1 - B_{2, a}) \theta_{1, a} + B_{2, a} X_{2, a} | \theta_{1, a}, B_{2, a}) \\
    = &\ \sum_{a \in \actions} \H(B_{2,a} X_{2, a} | \theta_{1, a}, B_{2, a}) \\
    \stackrel{(b)}{=} &\ \sum_{a \in \actions} \H(B_{2,a} X_{2, a} | B_{2, a}) \\
    = &\ \sum_{a \in \actions} q_a \H(X_{2, a})\\
    = &\ \sum_{a \in \actions} q_a \H(\theta_{1, a}),
   \numberthis    \label{eq:proof_lemma_bound_eq1}
\end{align*}
where $(a)$ follows from \eqref{eq:theta_recurrence}, and $(b)$ follows from $B_{2,a} X_{2,a} \perp \theta_{1,a} | B_{2,a}$.  

Now we derive an upper bound for $\H(\theta_3 | \theta_1)$: 
\begin{align*}
\H(\theta_3 | \theta_1)
= &\ \sum_{a \in \actions} \H(\theta_{3,a} | \theta_{1,a}) \\
= &\ \sum_{a \in \actions} \H(B_{3,a} X_{3,a} + (1 - B_{3,a})B_{2,a} X_{2,a} + (1 - B_{3, a})(1 - B_{2, a}) \theta_{1, a} | \theta_{2,a}) \\
\leq &\ \sum_{a \in \actions} \H(\theta_{1, a}, B_{2, a}, B_{3, a}, B_{3,a} X_{3,a}, (1 - B_{3,a})B_{2,a} X_{2,a} | \theta_{1,a}) \\
= &\ \sum_{a \in \actions}\left[ \H(B_{2, a}) + \H(B_{3, a}) + \H(B_{3,a} X_{3,a} | B_{3,a}) + \H((1 - B_{3,a})B_{2,a} X_{2,a} | B_{2,a}, B_{3,a})\right] \\
= &\ \sum_{a \in \actions} \left[2\H(q_a) + q_a \H(\theta_{1,a}) + q_a (1 - q_a) \H(\theta_{1,a})\right],  \numberthis
\label{eq:proof_lemma_bound_eq2}
\end{align*}
where the second-to-last equality follows from the independence of $B_{2,a}$, $X_{2,a}, B_{3,a}, X_{3,a}$ and $\theta_{1,a}$. 

Plugging \eqref{eq:proof_lemma_bound_eq1} and \eqref{eq:proof_lemma_bound_eq2} into \eqref{eq:bernoulli_info_bound}, we complete the proof. 
\end{proof}

\section{Regret Lower Bound: Proof of Theorem~\ref{theorem:lower_bound}}
\label{appendix:lower_bound}
\regretlowerbound*

\begin{proof}
We outline the proof below. We first construct a modulated Bernoulli bandit where $\actions = \{1, 2\}$, and $\theta_{1, a} \sim \mathrm{unif}\{0, 1\}$ for each $a \in \actions$; we let $q_1 = 1/2$ and $q_2 = 1$. In this bandit, $q_2 = 1$, so selecting action $2$ provides information that immediately becomes irrelevant in the next timestep; in contrast, selecting action $1$ provides information of better durability. Recall that the regret is defined with respect to an oracle that acts optimally with full knowledge of all past rewards $R_{1:t-1}$. So if an agent selects action $2$ with a large probability, the agent is short in information compared to this oracle, and thus incurs a large regret in the \textit{next timestep}.  On the other hand, if an agent selects action $2$ with a small probability, then the agent collects an expected reward that is close to $\E[R_{t, 1}] = \frac{1}{2}$, and thus incurs a large regret in the \textit{current timestep}. Combining these two arguments, we lower-bound the regret.
 
We introduce a modulated Bernoulli bandit 
with a set of two actions $\actions = \{1, 2\}$, and for each $a \in \actions$, 
\begin{align*}
    \theta_{1, a} = \begin{cases}
    0 & \text{with probability }\ 1/2\\
    1 & \text{with probability }\ 1/2. 
    \end{cases}
\end{align*}
We let $q = [1/2, 1]$. Then for all $t \in \mathbb{Z}_+$, $t\geq 2$, the baseline at time $t$ is 
\begin{align*}
\label{eq:baseline_bernoulli}
    \E[R_{t, \overline{*}}] 
    = &\ \E\left[\max_{a \in \actions} \E[R_{t, a} | R_{t+1:\infty}] \right]\\
     \stackrel{(a)}{=}&\ \E\left[\max_{a \in \actions} \E[R_{t, a} | R_{t+1}] \right]\\
    \stackrel{(b)}{=} &\ \E\left[\max_{a \in \actions} \E[\theta_{t, a} | \theta_{t-1}] \right]\\
     \stackrel{(c)}{=} &\ \E\left[\max_{a \in \actions} \E[\theta_{t, a} | \theta_{t-1, 1}] \right]\\
     = &\ \E\left[\E\left[\max_{a \in \actions} \E[\theta_{t, a} | \theta_{t-1, 1}] \bigg| H_{t-2}^{\pi} \right]\right]\\
     \stackrel{(d)}{=} &\ \E\left[\sum_{a' \in \actions} \E\left[\max_{a \in \actions} \E[\theta_{t, a} | \theta_{t-1, 1}] \bigg| H_{t-2}^{\pi} \right] \Pr(A_{t-1}^{\pi} = a' |H_{t-2}^{\pi}) \right], \numberthis
\end{align*}
where $(a)$ follows from that $ \{\theta_t\}_{t=1}^{\infty}$  follows a Markov process, and that $R_{t} = \theta_t$, $(b)$ follows from the reversibility, $(c)$ follows from $q_2 = 1$, and $(d)$ from that $A_{t-1}^{\pi}$ is independent of $\theta_{t-1}$ conditioned on $H_{t-2}^{\pi}$. 

For any policy $\pi$ and all $t \in \mathbb{Z}_+$, $t\geq 2$, the reward collected at time $t$ is upper-bounded by 
\begin{align*}
\label{eq:reward_bernoulli}
 \E \left[R_{t, A_t^{\pi}}\right]
= &\ \E \left[\E\left[R_{t, A_t^{\pi}} | H_{t-1}^{\pi}\right]\right] \\
= &\ \E \left[\sum_{a \in \actions}\E\left[R_{t, a} | H_{t-1}^{\pi}\right] \Pr(A_t^{\pi} = a | H_{t-1}^{\pi})\right] \\
\leq &\ \E \left[\max_{a \in \actions} \E\left[R_{t, a} | H_{t-1}^{\pi}\right] \right]\\
    = &\ \E\left[\E\left[\max_{a \in \actions} \E[R_{t, a} | H_{t-1}^{\pi}] \bigg| H_{t-2}^{\pi} \right]\right] \\
         = &\ \E\left[\E\left[\max_{a \in \actions} \E[R_{t, a} | H_{t-2}^{\pi}, A_{t-1}^{\pi}, R_{{t-1}, A_{t-1}^{\pi}}] \bigg| H_{t-2}^{\pi} \right]\right] \\
        \stackrel{(a)}{=} &\ \E\left[\sum_{a' \in \actions}\E\left[\max_{a \in \actions}  \E[R_{t, a} | H_{t-2}^{\pi}, R_{t-1, a'}] \bigg| H_{t-2}^{\pi} \right] \Pr(A_{t-1}^{\pi} = a' | H_{t-2}^{\pi})\right] \\
    \stackrel{(b)}{=} &\ \E\left[\sum_{a' \in \actions}\E\left[\max_{a \in \actions} \E[\theta_{t, a} | H_{t-2}^{\pi}, \theta_{t-1, a'}] \bigg| H_{t-2}^{\pi} \right] \Pr(A_{t-1}^{\pi} = a' | H_{t-2}^{\pi})\right],  \numberthis
\end{align*}
where $(a)$ follows from that $A_{t-1}^{\pi}$ is independent of $R_{t-1}$ conditioned on $H_{t-2}^{\pi}$, and $(b)$ from $R_{t} = \theta_t$. Observe that for all $t \in \mathbb{Z}_+$, $t\geq 2$, the term $\E\left[\max_{a \in \actions} \E[\theta_{t, a} | H_{t-2}^{\pi}, \theta_{t-1, a'}] \bigg| H_{t-2}^{\pi} \right]$ in \eqref{eq:reward_bernoulli} for each of $a' \in \actions = \{1, 2\}$ can be derived or upper-bounded as follows: 
\begin{align}
\label{eq:reward_bernoulli_1}
    \E\left[\max_{a \in \actions} \E[\theta_{t, a} | H_{t-2}^{\pi}, \theta_{t-1, 1}] \bigg| H_{t-2}^{\pi} \right]
    = \E\left[\max_{a \in \actions} \E[\theta_{t, a} | \theta_{t-1, 1}] \bigg| H_{t-2}^{\pi} \right], 
\end{align}
and 
\begin{align*}
\label{eq:reward_bernoulli_2}
    \E\left[\max_{a \in \actions} \E[\theta_{t, a} | H_{t-2}^{\pi}, \theta_{t-1, 2}] \bigg| H_{t-2}^{\pi} \right]
 \stackrel{(a)}{=} &\ \E\left[\max_{a \in \actions} \E[\theta_{t, a} | H_{t-2}^{\pi}] \bigg| H_{t-2}^{\pi} \right]\\
  = &\ \max_{a \in \actions} \E[\theta_{t, a} | H_{t-2}^{\pi}]  \\
   \stackrel{(b)}{=} &\ \max_{a \in \actions} \E\left[\E\left[\theta_{t, a} | \theta_{t-2}\right]| H_{t-2}^{\pi}\right]  \\
  \stackrel{(c)}{\leq} &\ \E\left[\max_{a \in \actions} \E\left[\theta_{t, a} | \theta_{t-2}\right] \bigg| H_{t-2}^{\pi}\right] \\
    \stackrel{(d)}{=} &\ \E\left[\max_{a \in \actions} \E\left[\theta_{t, a} | \theta_{t-2, 1}\right] \bigg| H_{t-2}^{\pi}\right],  \numberthis
\end{align*}
where $(a)$ follows from $q_2 = 1$, $(b)$ follows from that $\theta_t$ is independent of $H_{t-2}^{\pi}$ conditioned on $\theta_{t-2}$ (recall that $H_{t-2}^{\pi} = (A_1^{\pi}, R_{1, A_1^{\pi}}, ... A_{t-2}^{\pi}, R_{t-2, A_{t-2}^{\pi}}) = (A_1^{\pi}, \theta_{1, A_0^{\pi}}, ... A_{t-2}^{\pi}, \theta_{t-2, A_{t-2}^{\pi}})$), $(c)$ from Jensen's inequality, and $(d)$ again from $q_2 = 1$. 
Subtracting \eqref{eq:reward_bernoulli} from \eqref{eq:baseline_bernoulli}, we establish a lower bound on the instantaneous regret:  
\begin{align*}
     \E[R_{t, \overline{*}} - R_{t, A_t^{\pi}}] 
 \geq &\ \E\left[\sum_{a' \in \actions}\E\left[\max_{a \in \actions} \E[\theta_{t,a} | \theta_{t-1, 1}] - \max_{a \in \actions} \E[\theta_{t, a} | H_{t-2}^{\pi}, \theta_{t-1, a'}] \bigg| H_{t-2}^{\pi} \right] \Pr(A_{t-1}^{\pi} = a' | H_{t-2}^{\pi})\right] \\
  \stackrel{(a)}{\geq} &\ \E\left[\E\left[\max_{a \in \actions} \E[\theta_{t,a} | \theta_{t-1, 1}] - \max_{a \in \actions} \E[\theta_{t, a} | \theta_{t-2, 1}] \bigg| H_{t-2}^{\pi} \right] \Pr(A_{t-1}^{\pi} = 2 | H_{t-2}^{\pi})\right]\\
    \stackrel{(b)}{=} &\ \E\left[\frac{1}{16} \Pr(A_{t-1}^{\pi} = 2 | H_{t-2}^{\pi})\right] \\
     \stackrel{}{=} &\ \frac{1}{16} \Pr(A_{t-1}^{\pi} = 2), \numberthis
\label{eq:lower_bound_1}
\end{align*}
where $(a)$ follows from \eqref{eq:reward_bernoulli_1} and \eqref{eq:reward_bernoulli_2}, and $(b)$ from computing the conditional expectation, which turns out to be independent of $H_{t-2}^{\pi}$. 

Below we derive another lower bound on the instantaneous regret. First, observe that for any policy $\pi$ and all $t \in \mathbb{Z}_+$, the reward collected at time $t$ is upper-bounded by:
\begin{align*}
    \E\left[R_{t, A_{t}^{\pi}}\right] 
    = &\ 
    \E\left[R_{t, A_{t}^{\pi}} | A_{t}^{\pi}= 1\right] \Pr\left(A_t^{\pi} = 1\right) + \E\left[R_{t, A_{t}^{\pi}} | A_{t}^{\pi}= 2\right] \Pr\left(A_t^{\pi} = 2\right) \\
     \stackrel{}{\leq} &\  \E\left[R_{t, 1} | A_{t}^{\pi}= 1\right] \Pr\left(A_t^{\pi} = 1\right) + \Pr\left(A_t^{\pi} = 2\right) \\
    \stackrel{}{\leq} &\ \E\left[R_{t, 1}\right] + \Pr\left(A_t^{\pi} = 2\right), 
\end{align*}
where both inequalities follow from that rewards are bounded in $[0, 1]$. 
Therefore, for any policy $\pi$ and all $t \in \mathbb{Z}_+$, 
the instantaneous regret can be lower-bounded as follows:
\begin{align*}
    \E\left[R_{t, \overline{*}} - R_{t, A_{t}^{\pi}}\right] 
    \geq &\ \E\left[R_{t, \overline{*}}\right] - \E\left[R_{t, 1}\right] - \Pr\left(A_t^{\pi} = 2\right)\\
    = &\ \frac{5}{8} - \frac{1}{2} - \Pr(A_t^{\pi} = 2) = \frac{1}{8} - \Pr(A_t^{\pi} = 2). \numberthis
\label{eq:lower_bound_2}
\end{align*}
Incorporating the two lower bounds on instantaneous regret established in \eqref{eq:lower_bound_1} and \eqref{eq:lower_bound_2}, respectively, we derive a lower bound on the cumulative regret: for any policy $\pi$, and $T \in \mathbb{Z}_{+}$, $T \geq 2$,  
\begin{align*}
    {\mathrm{Regret}}_{\mathrm{F}}(T; \pi)
    \geq &\ \max\left\{\sum_{t = 1}^{T-1} \frac{1}{16} \Pr(A_t^{\pi} = 2), \sum_{t = 1}^{T-2} \left[\frac{1}{8} - \Pr(A_t^{\pi} = 2)\right] \right\} \\
    \geq &\ \frac{16}{17}\sum_{t = 1}^{T-1} \frac{1}{16} \Pr(A_t^{\pi} = 2) + \frac{1}{17}\sum_{t = 1}^{T-1} \left[\frac{1}{8} - \Pr(A_t^{\pi} = 2)\right] \\
    = &\ \frac{1}{136} (T - 1) \\
    \geq &\ \frac{1}{272} T. \numberthis
\label{eq:lower_bound_final_1}
\end{align*}
For any policy $\pi$, and $T = 1$, 
\begin{align*}
    {\mathrm{Regret}}_{\mathrm{F}}(T; \pi) = &\ \E[R_{1, *}] - \E[R_{1, A_1^{\pi}}] \\
    \geq &\ \E[R_{1, *}] - \E\left[\max_{a \in \actions}\E[R_{1, a}]\right] \\
    = &\ \frac{5}{8} - \frac{1}{2} = \frac{1}{8} \geq \frac{1}{272} T.  \numberthis 
\label{eq:lower_bound_final_2}
\end{align*}
Combining \eqref{eq:lower_bound_final_1} and \eqref{eq:lower_bound_final_2}, we complete the proof.
\end{proof}

{
\section{Application of an Existing Theoretical Result to Modulated Bernoulli Bandits: Illustration of a Discussion in Section~\ref{sec:modulated_bernoulli}}
\label{appendix:existing_bounds}

Theorem~2 of \cite{besbes2019optimal} establishes a regret upper bound of Rexp3, under a different notion of regret which we denote by ${\mathrm{Regret}}_{\mathrm{D}}$. The result suggests that for all $T \in \mathbb{Z}_+$, 
\begin{align*}
{\mathrm{Regret}}_{\mathrm{D}}(T; \pi^{\mathrm{Rexp3}}) \leq \overline{C} (|\actions| \log |\actions|)^{1/3} V_T^{1/3} T^{2/3},
\end{align*}
where 
 $\overline{C}$ is a constant that does not depend on $|\actions|$, $V_T$, or $T$, and 
 $V_T$ measures the temporal variation of the mean reward sequence and is defined as 
 $V_T = \sum_{t = 1}^{T-1} \sup_{a \in \actions} |\theta_{t,a} - \theta_{t+1,a}|$.  
 
Applying this frequentist result to the modulated Bernoulli bandits introduced by Example~\ref{ex:modulated_bernoulli} yields a bound of 
 $\overline{C} (|\actions| \log |\actions|)^{1/3} \mathbb{E}[V_T^{1/3}] T^{2/3}$. We can lower-bound 
 $\mathbb{E}[V_T^{1/3}]$ as follows: for all $T \in \mathbb{Z}_+$, we have 
\begin{align*}
\E\left[V_T^{1/3}\right] 
= &\ \E\left[\left(\sum_{t = 1}^{T-1} \sup_{a \in \actions} |\theta_{t,a} - \theta_{t+1,a}|\right)^{1/3}\right]\\
= &\ \E\left[T^{1/3}\left(\frac{1}{T}\sum_{t = 1}^{T-1} \sup_{a \in \actions} |\theta_{t,a} - \theta_{t+1,a}|\right)^{1/3}\right]\\
\geq &\ \E\left[T^{1/3}\ \frac{1}{T}\sum_{t = 1}^{T-1} \left( \sup_{a \in \actions} |\theta_{t,a} - \theta_{t+1,a}|\right)^{1/3}\right] \\
= &\ T^{-2/3} \sum_{t = 1}^{T-1} \E\left[\left( \sup_{a \in \actions} |\theta_{t,a} - \theta_{t+1,a}|\right)^{1/3}\right] \\
= &\ T^{1/3} \E\left[\left( \sup_{a \in \actions} |\theta_{1,a} - \theta_{2,a}|\right)^{1/3}\right]. 
\end{align*}
This implies that applying the result established by \cite{besbes2019optimal} 
to the modulated Bernoulli bandits yields a bound of at least 
$\overline{C} (|\actions| \log |\actions|)^{1/3} \E\left[\left( \sup_{a \in \actions} |\theta_{1,a} - \theta_{2,a}|\right)^{1/3}\right] T$. This is linear in $T$ and increases as $q_a$ increases. 
}

{
\section{Analysis of a Class of Modulated Bernoulli Bandits: Example~\ref{ex:modulated_bernoulli_example}}
\label{appendix:example_random_TS}
This section presents simple derivations for a regret lower bound for a random policy that uniformly selects actions and a regret lower bound for TS in Example~\ref{ex:modulated_bernoulli_example}. 
We first restate the example below, which describes a class of modulated Bernoulli bandits parameterized by $q_2$.
\modulatedBernoulliBandit*

In each step, an agent that selects each action uniformly at random would collect an expected reward of 
$\frac{1}{2}\E[\theta_{t,1}] + \frac{1}{2}\E[\theta_{t,2}] = 0.7$. An optimal agent would achieve least $0.9$ in expected reward by selecting only action $1$. Therefore, a random policy has a regret lower bound of $0.2$. If PS attains regret of less than $0.2$, it would suggest that it is performing arm selection in a manner more intelligent than a random policy. 

Next, we consider a TS agent, who, at each timestep $t$, 
\begin{enumerate}
    \item \textbf{samples} $\hat{\theta}_{t,1}$ from posterior $\Pr(\theta_{t,1} \in \cdot | H_{t-1}^{\pi_{\mathrm{TS}}})$ and $\hat{\theta}_{t,2}$ from posterior $\Pr(\theta_{t,2} \in \cdot | H_{t-1}^{\pi_{\mathrm{TS}}})$, 
    \item \textbf{selects} the action that maximizes $\hat{\theta}_{t,a}$ for $a \in \{1, 2\}$. 
\end{enumerate}

If $q_2 > 0.2$, then in each step $t \in \mathbb{Z}_+$, TS collects an expected reward of 
\begin{align*}
\E\left[R_{t, A_t^{\pi_{\mathrm{TS}}}}\right]
= &\ \E\left[\E\left[R_{t, A_t^{\pi_{\mathrm{TS}}}}|H_{t-1}^{\pi_{\mathrm{TS}}}\right]\right]\\
= &\ \E\left[\E\left[R_{t,1}|H_{t-1}^{\pi_{\mathrm{TS}}}\right]\Pr(A_t^{\pi_{\mathrm{TS}}} = 1 |H_{t-1}^{\pi_{\mathrm{TS}}}) + \E\left[R_{t,2}|H_{t-1}^{\pi_{\mathrm{TS}}}\right]\Pr(A_t^{\pi_{\mathrm{TS}}} = 2 |H_{t-1}^{\pi_{\mathrm{TS}}}) 
 \right]\\
 = &\ \E\left[0.9 \Pr(\theta_{t,2} = 0 |H_{t-1}^{\pi_{\mathrm{TS}}}) + \Pr\left(\theta_{t,2} = 1|H_{t-1}^{\pi_{\mathrm{TS}}}\right)\Pr(\theta_{t,2} = 1 |H_{t-1}^{\pi_{\mathrm{TS}}}) 
 \right] \\
 \leq &\ \E\left[0.9 \Pr(\theta_{t,2} = 0 |H_{t-1}^{\pi_{\mathrm{TS}}}) + \left(1 - \frac{q_2}{2}\right)\Pr(\theta_{t,2} = 1 |H_{t-1}^{\pi_{\mathrm{TS}}}) 
 \right] \\
\leq &\  \E\left[0.9 \left(1 - \frac{q_2}{2}\right) + \left(1 - \frac{q_2}{2}\right) \frac{q_2}{2}) 
 \right] \\
 = &\ \left(0.9 + \frac{q_2}{2}\right) \left(1 - \frac{q_2}{2}\right), 
\end{align*}
where the last inequality follows from 
$0.9 > 1 - \frac{q_2}{2}$ when 
$q_2 > 0.2$. 
Consequently, a regret lower bound for a TS agent is $0.9 - (0.9 + q_2/2)(1 - q_2 / 2) = q_2^2/4 - q_2/20$, when $q_2 \geq 0.2$. If PS achieves a regret lower than this bound, it implies that PS outperforms TS.
}

\section{Efficient Implementation of PS in AR(1) Bandits: Proof of Proposition~\ref{proposition:ar1}}
\label{appendix:ar1}
\arderivation*
\begin{proof}
The analysis is done for PS and 
an arbitrary arm $a \in \actions$. We drop
the subscript $a$ from most of the random variables. 

For all $t \in \mathbb{Z}_+$, and $n \in \mathbb{Z}_{+}$, $n \geq 2$, let
\begin{align*}
    \overline{\theta}_{t}^{H}(n) =  \E[R_{t}|H_{t-1}, R_{t+1:t + n}].
\end{align*}
We define 
\begin{align*}\tilde{R}_{t+1} = R_{t+1}, \text{ and }
\tilde{R}_{t+i} = R_{t+i} - \gamma R_{t+i - 1} - (1 - \gamma) c
\text{ for all } i \in \{2, ... , n\}.
\end{align*}
Then we can rewrite $\overline{\theta}^{H}_{t}(n)$ as follows: 
\begin{align*}
    \overline{\theta}^{H}_{t}(n) =  \E\left[R_{t}|H_{t-1}, R_{t+1:t + n }\right]
    = &\ \E\left[R_{t}|H_{t-1}, \tilde{R}_{t+1:t + n }\right]. 
\end{align*}
Conditioned on $H_{t-1}$, for all $n \geq 2$,  $R_{t:t+n}$ is Gaussian, so the vector constructed by stacking $R_{t} $ and $\tilde{R}_{t+1:t+n}$ is also Gaussian. We use $\mu_n$ and $\Sigma_n$ to denote its mean and variance. In particular, we view $\mu_n$ as a block matrix with blocks $\mu_{n1} \in \mathbb{R}$ and $\mu_{n2} \in \mathbb{R}^{n}$ and $\Sigma_n$ a block matrix with blocks $\Sigma_{n11} \in \mathbb{R}$, $\Sigma_{n12} \in \mathbb{R}^{1 \times n}$, $\Sigma_{n21} \in \mathbb{R}^{n \times 1}$ and $\Sigma_{n22} \in \mathbb{R}^{n \times n}$. Observe that for all $n \geq 2$, conditioned on $H_{t-1}$, 
\begin{align*}
    \overline{\theta}_{t}^{H}(n)  
    =  \E\left[R_{t}|H_{t-1}, \tilde{R}_{t+1:t + n}\right] 
    =  \mu_{n1} + \Sigma_{n12} \Sigma_{n22}^{-1} \left(\tilde{R}_{t+1:t+n} - \mu_{n2}\right).
\end{align*}
Then, conditioned on $H_{t-1}$, $\overline{\theta}^{H}_{t}(n)$ is Gaussian with mean 
\begin{align*}
    \E\left[\overline{\theta}^{H}_{t}(n) \Big| H_{t-1}\right] = \mu_{n1} = \mu_{t, a}
\end{align*}
and variance
\begin{align*}
\V\left(\overline{\theta}^{H}_{t}(n) \Big| H_{t-1}\right) = \Sigma_{n12}\Sigma_{n22}^{-1} \Sigma_{n22} \Sigma_{n22}^{-1}\Sigma_{n21} = \Sigma_{n12}\Sigma_{n22}^{-1}\Sigma_{n21}.
\end{align*}
Observe that for all
$t \in \mathbb{Z}_+$ and 
$k \in \mathbb{Z}_{+}$, $k \geq 2$:
\begin{align*}
    \tilde{R}_{t+ k + 1} = R_{t + k + 1} - \gamma R_{t+ k} - (1 - \gamma)c =  W_{t + k+1} + Z_{t + k + 1} - \gamma Z_{t+k}.
\end{align*}
Then we have for all $t \in \mathbb{Z}_+$:
\begin{enumerate} [(i)]
    \item $\V(R_{t}, \tilde{R}_{t+1} | H_{t-1}) = \V(R_{t}, {R}_{t+1} | H_{t-1}) = \V(\theta_{t} + Z_{t}, \gamma \theta_{t} + W_{t+1} + Z_{t+1} | H_{t-1}) = \gamma \V(\theta_{t} | H_{t-1}) = \gamma \sigma_{t}^2$;
    \item $\V(R_{t}, \tilde{R}_{t+k} | H_{t-1|}) = \V(\theta_{t} + Z_{t}, W_{t+k} + Z_{t+k } - \gamma Z_{t+k-1} | H_{t-1}) = 0$ for all $k \geq 2$, $k \in \mathbb{Z}_+$;  
    \item $\V(\tilde{R}_{t+1} | H_{t-1}) = \V(R_{t+1} | H_{t-1}) = \V(\gamma \theta_{t} + W_{t+1} + Z_{t+1} | H_{t-1}|) = \gamma^2 \sigma_t^2 + \delta^2 + \sigma^2$; 
    \item $\V(\tilde{R}_{t + k } | H_{t-1}) = \V(W_{t + k} + Z_{t + k } - \gamma Z_{t + k-1} | H_{t-1}) = \delta^2 + \sigma^2 + \gamma^2 \sigma^2$, for all $k \geq 2$, $k \in \mathbb{Z}_+$;
    \item $\V(\tilde{R}_{t + i }, \tilde{R}_{t + k } | H_{t-1}) = \V(W_{t+i} + Z_{t+i} - \gamma Z_{t+i-1}, W_{t + k} + Z_{t + k } - \gamma Z_{t+k-1} | H_{t-1})$ for all $k > i \geq 1$. Hence, $\V(\tilde{R}_{t + i }, \tilde{R}_{t + k } | H_{t-1}) = \gamma \sigma^2$ for $k = i + 1$, $i \geq 1$, and $\V(\tilde{R}_{t + i }, \tilde{R}_{t + k } | H_{t-1}) = 0$ for $k \geq i + 2$, $i \geq 1$. 
\end{enumerate}
Based on these derivations, 
\begin{align*}
  \Sigma_{n12} =  \gamma \sigma_{t}^2 
    \begin{bmatrix}
    1 & 0 & 0 & ... & 0
    \end{bmatrix},
\end{align*}
and 
\begin{align*}
  \Sigma_{n22} = \begin{bmatrix}
    \gamma^2 \sigma_{t}^2 + \delta^2 + \sigma^2 & \gamma \sigma^2 & 0 & ... & 0\\
    \gamma \sigma^2 & \delta^2 + (1 + \gamma^2) \sigma^2 & \gamma \sigma^2 & ... & 0 \\
    0 & \gamma \sigma^2 & \delta^2 + (1 + \gamma^2) \sigma^2 & ... & 0 \\
    \vdots & \vdots & \vdots &  & \vdots \\
    0 & 0 & 0 & ...  & \delta^2 + (1 + \gamma^2) \sigma^2
    \end{bmatrix}. 
\end{align*}
We use Gaussian elimination to compute the inverse of $\Sigma_{n22}$. In particular, for $k = 1, 2, ... , n - 1$, we perform row operations on the $(k+1)$-th to last row: Subtract $r_k$ times $k$-th to the last row from the $(k+1)$-th to last row. The sequence $\{r_k\}$ are  
such that 
the matrix becomes lower-triangular after the $n - 1$ row operations.
If we use $d_k$ to denote the diagonal entry of the matrix on the $k$-th to last row after these row operations. Then the sequence $\{d_k\}$ satisfies the following recurrence:
\begin{align*}
    d_1 = &\ \delta^2 + (1 + \gamma^2) \sigma^2, \\
    d_{k} = &\ \delta^2 + (1 + \gamma^2) \sigma^2 - \frac{\gamma^2 \sigma^4}{d_{k-1}},\ k = 2, ..., n-1, \\
    d_n = &\  \gamma^2(\sigma_t^2 - \sigma^2)+ \delta^2 + (1 + \gamma^2) \sigma^2 - \frac{\gamma^2 \sigma^4}{d_{n-1}}.
\end{align*}
Note that the recurrence induces the following fixed-point equation:
\begin{align*}
    d_* = \delta^2 + (1 + \gamma^2) \sigma^2 - \frac{\gamma^2 \sigma^4}{d_*}. 
\end{align*}
Solving for $d_*$, we have 
\begin{align*}
    d_* = &\ \frac{1}{2} \left(\gamma^2 \sigma^2 + \sigma^2 + \delta^2 \pm \sqrt{(\gamma^2 \sigma^2 + \sigma^2 + \delta^2)^2 - 4\gamma^2 \sigma^4} \right) \\
    = &\ \frac{1}{2} \left(\gamma^2 \sigma^2 + \sigma^2 + \delta^2 \pm \sqrt{(\delta^2 + \sigma^2 - \gamma^2 \sigma^2)^2 + 4\gamma^2 \delta^2 \sigma^2} \right).
\end{align*}
Then for all $t \in \mathbb{Z}_+$, the variance
\begin{align*}
\V\left(\hat{\theta}_{t, a} | H_{t-1}\right) = &\ 
    \V\left(\overline{\theta}^{H}_{t, a} | H_{t-1}\right) = \lim_{n \rightarrow \infty} \V\left(\overline{\theta}^{H}_{t, a}(n) | H_{t-1}\right)\\
    = &\ \lim_{n \rightarrow \infty} \Sigma_{n12}\Sigma_{n22}^{-1}\Sigma_{n21} 
    = \frac{\gamma_a^2 \sigma_{t, a}^4}{d_* + \gamma_a^2 (\sigma_{t, a}^2 - \sigma^2)} 
    = \frac{\gamma_a^2 \sigma_{t, a}^4}{\gamma_a^2 \sigma_{t, a}^2 + x_a^*}, 
\end{align*}
where 
\begin{align*}x_a^* = \frac{1}{2} \left(\delta_a^2 + \sigma_a^2 - \gamma_a^2 \sigma_a^2 + \sqrt{(\delta_a^2 + \sigma_a^2 - \gamma_a^2 \sigma_a^2)^2 + 4 \gamma_a^2 \delta_a^2 \sigma_a^2} \right).
\end{align*}
\end{proof}

\section{Tractable Variant of PS in AR(1) Logistic Bandits}
\label{appendix:ar_logistic}
We next consider a class of bandits which we refer to as AR(1) logistic bandits. Just like the AR(1) bandits, the reward distributions of these bandits are governed by a sequence $\{\alpha_t\}_{t \in \mathbb{Z}_+}$ that evolves according to an AR(1) process, but in this case the rewards will be Bernoulli. This class extends bandits modulated by AR(1) processes to logistic bandits. We study AR(1) logistic bandits because it serves as a first step towards implementing PS in contextual bandits and further in practical problems.

\begin{example} [\bf{AR(1) Logistic Bandit}]
\label{ex:logistic_ar1} 
In an AR(1) logistic bandit, each reward $R_{t, a}$ is Bernoulli distributed with mean $\frac{\exp(\alpha_t^{\top}\phi_a)}{1 + \exp(\alpha_t^{\top} \phi_a)}$, where $\alpha_t \in \R^d$ is unknown with a known dimension $d \in \mathbb{Z}_{+}$ and 
$\phi_a \in \R^d$ denotes a known feature vector associated with action $a \in \actions$. The variable $\alpha_{t,a}$ is defined exactly as in Example~\ref{ex:autoregressive}, transitioning following an AR(1) process.  
\end{example}

Similarly, the formulation of AR(1) logistic bandits accommodates stationary logistic bandits as a special case. In particular,
following a similar argument to that of AR(1) bandits, 
we can model any stationary logistic bandit using an AR(1) logistic bandit with $\gamma_a = 1$ and $\delta_a = 0$ for $a \in \actions$ and suitably-chosen $\alpha_{1, a}$ for each $a \in \actions$. As was the case with the AR(1) bandits, we assume that an agent knows a priori 
$c_a$, $\gamma_a$, $\delta_a$, $q_a$, $\Pr(\theta_{1,a} \in \cdot)$, and $\sigma_a$ for all $a \in \actions$.

\subsection{Approximate PS in AR(1) Logistic Bandits}  
\label{sec:logistic_implementation}
We next discuss techniques to approximate PS and apply those techniques to AR(1) logistic bandits to develop efficient implementations.  

\paragraph{Incremental Laplace Approximation And Approximate TS} 
We first introduce a technique that is useful in developing tractable procedues to approximate PS in AR(1) logistic bandits. 
For better exposition, we will first demonstrate this technique in the context of implementing an approximation of TS in AR(1) logistic bandits. 
In an AR(1) logistic bandit, a natural learning target is $\alpha_t$. We thus focus on TS that takes $\alpha_t$ as its learning target. 
At each timestep, TS samples $\hat{\alpha}_t^{\pi_{\mathrm{TS}}}$ from the posterior of $\alpha_t$, and selects an action that maximizes the expected reward conditioned on the sample.

Because deriving the posterior of $\alpha_t$ is usually intractable, we can instead apply Laplace approximation \citep{laplace1986memoir}.
That is, we approximate the posterior using a Gaussian distribution centered at the maximum a posteriori (MAP) of $\alpha_t$, 
with a variance that is equal to the inverse of the Hessian of the log-posterior. This is a standard practice and has been popular with stationary logistic bandits and contextual bandits \citep{NIPS2011_e53a0a29}.

However, when the environment is non-stationary, applying the method to approximate the posterior of $\alpha_t$ can be computationally onerous. To address this, we propose what we call incremental Laplace approximation---the practice of approximating the posterior distribution incrementally at each timestep using Laplace approximation. 
Incremental Laplace approximation is comparable with the standard Laplace approximation in stationary logistic bandits, but can be efficiently carried out in non-stationary ones.

\begin{centering}
\begin{algorithm}
\caption{Approximate TS in an AR(1) logistic bandit}\label{alg:thompson-sampling-logistic}
\For{$t = 1, 2, \ldots, T$}{
\textbf{sample}: $\hat{\alpha}_t$ from 
$
\mathcal{N}(\mu_t^{ }, \Sigma_t^{ })
$\\
\textbf{estimate}: $\hat{\theta}_t^{ } = \phi^{\top} \hat{\alpha}_t^{ }$ \\
\textbf{select}: $A_t^{ } \in \argmax_{a \in \actions} \hat{\theta}_{t, a}^{ }$ \\
\textbf{observe}: $R_{t,A_t^{ }}$ \\
\textbf{derive}: 
    $\mu_{t+1}^{ } \leftarrow \min_{\alpha} \left\{ \frac{1}{2}(\alpha - \mu_{t}^{ })^{\top} \Sigma_{t}^{ -1} (\alpha - \mu_{t}^{ }) - R_{t, A_t^{ }} \phi_{A^{ }_t}^{} \alpha + \log\left(1 + \exp\left(\phi_{A_t^{ }}^{} \alpha\right)\right)\right\}$, 
and 
  $\Sigma_{t+1}^{} \leftarrow \left\{\Sigma_{t}^{} + {\exp(\phi_{A_t^{}}^{} \mu_{t+1}^{})}
    {\left[1 + \exp\left(\phi_{A_t^{}}^{} \mu_{t+1}^{}\right)\right]^{-2}} \phi_{A_t^{}}^{\top} \phi_{A^{}_t}^{}\right\}^{-1}$
\\
\textbf{update}: $\mu_{t+1} \leftarrow A \mu_{t+1} \text{ and }
    \Sigma_{t+1} \leftarrow A \Sigma_{t+1} A^{\top} + V$ \\

}
\end{algorithm}
\end{centering}

Applying incremental Laplace approximation, we develop an efficient implementation of an approximation of TS in an AR(1) logistic bandit. It is detailed in Algorithm~\ref{alg:thompson-sampling-logistic}; 
for the sake of simplicity, we drop the superscript $\pi_{\mathrm{TS}}$. 
Step~6 carries out incremental Laplace approximation. In Step~7, $A$ denotes a diagonal matrix with $\gamma_a$ at its $a$-th position along its diagonal, and $V$ a diagonal matrix with $\delta_a^2$ at its $a$-th position along its diagonal.

\paragraph{Finite-sample Approximation} Another technique that is useful in constructing computationally tractable approximations of PS is to sample a finite number of rewards instead of an infinite sequence of rewards in executing the algorithm. In Algorithm~\ref{alg:predictive-sampling}, this corresponds to changing Steps~2 and~3 to:
\begin{enumerate}
    \item \textbf{sample} 
    $\hat{R}^{(t)}_{t+1:t+n} \sim \Pr(R_{t+1:t+n} \in \cdot | H_{t-1})$, 
    \item \textbf{estimate} 
    $\hat{\theta}_{t} = \E[R_{t} | H_{t-1}, R_{t+1:t+n} \leftarrow \hat{R}^{(t+1)}_{t+1:t+n}]$. 
\end{enumerate}

The new Steps~2 and~3 are equivalent to sampling $\hat{\theta}_t$ from the distribution $\Pr(\E[R_{t} | H_{t-1}, R_{t+1:t+n}] \in \cdot | H_{t-1})$. To see why, first observe that for all $t \in \mathbb{Z}_+$,  $\Pr(\hat{R}^{(t)}_{t+1:t+n} \in \cdot | H_{t-1}) = \Pr(R_{t+1:t+n} \in \cdot | H_{t-1})$. Therefore, for all $t \in \mathbb{Z}_+$, 
\begin{align*}
\Pr(\hat{\theta}_t \in \cdot | H_{t-1}) = \Pr(\E[R_{t} | H_{t-1}, R_{t+1:t+n} \leftarrow \hat{R}^{(t)}_{t+1:t+n}] \in \cdot | H_{t-1})
= \Pr(\E[R_{t} | H_{t-1}, R_{t+1:t+n}] \in \cdot | H_{t-1}).
\end{align*}

\paragraph{Gaussian Imagination}
If the sampling distribution $\Pr(\E[R_{t} | H_{t-1}, R_{t+1:t+n}] \in \cdot | H_{t-1})$  of an approximation of PS have closed-form solutions, then we can 
design efficient procedures to execute the algorithm. Regardless of whether this is computationally tractable, an agent can approximate this distribution by pretending that the rewards are Gaussian. This practice is called Gaussian imagination \citep{liu2022gaussian}. Specifically, an agent can construct imaginary Gaussian rewards $\tilde{R}_{t}$ for all $t \in \mathbb{Z}_+$; the imaginary rewards live in the agent's imagination and are designed to approximate real rewards. The agent then estimates 
$\Pr(\E[R_{t} | H_{t-1}, R_{t+1:t+n}] \in \cdot | H_{t-1})$ using $\Pr(\E[\tilde{R}_{t} | H_{t-1}, \tilde{R}_{t+1:t+n}] \in \cdot | H_{t-1})$.

\paragraph{Approximate PS} 
We next apply finite-sampling approximation, Gaussian imagination, and incremental Laplace approximation to approximate PS in AR(1) logistic bandits. Specifically, this approximation algorithm, which we refer to as approximate PS, samples $\hat{\theta}_t$ from the an approximate posterior of 
$\E[\tilde{R}_{t} | H_{t-1}, \tilde{R}_{t+1:t+n}]$, and selects an action that maximizes $\hat{\theta}_{t,a}$. In applying Gaussian imagination, 
we let 
$
\tilde{R}_{t} \sim \mathcal{N}\left(\frac{1}{2} \mathbf{1} + \frac{1}{4}\phi^{\top} \alpha_t, \frac{1}{16}I\right),
$
where $\mathbf{1}$ is an all-one vector and $\phi$ is the matrix whose $a$-th column is $\phi_a$. With this definition of $\tilde{R}_{t}$, $\E[\tilde{R}_{t} | H_{t-1}, \tilde{R}_{t+1:t+n}] = \frac{1}{2} \mathbf{1} + \frac{1}{4} \phi^{\top} \E[\alpha_{t} | H_{t-1}, \tilde{R}_{t+1:t+n}]$. Therefore, equivalently, approximate PS can sample $\hat{\alpha}_t$ from an approximate posterior distribution of $\E[\alpha_t | H_{t-1}, \tilde{R}_{t+1:t+n}]$ and select an action that maximizes $\phi^{\top} \hat{\alpha}_{t,a}$.

Applying incremental Laplace approximation, we construct the approximate posterior using a Gaussian distribution $\mathcal{N}(\mu'_t, \Sigma'_t)$. 
More precisely, we first approximate the posterior of $\alpha_t$ using a Gaussian distribution $\mathcal{N}(\mu_t, \Sigma_t)$ according to incremental Laplace approximation, and then derive $\mu_t'$ and $\Sigma_t'$ based on $\mu_t$ and $\Sigma_t$. 
Detailed steps about this derivation are provided in  Appendix~\ref{appendix:laplace_examples}.  

The implementation of approximate PS in an AR(1) logistic bandit is presented in  Algorithm~\ref{alg:predictive-sampling-logistic}. 
It is worth highlighting that the only difference between approximate PS and approximate TS in an AR(1) logistic bandit is that between the sampling distribution of $\hat{\alpha}_t$ and $\hat{\alpha}_t^{\pi_{\mathrm{TS}}}$. In Algorithm~\ref{alg:predictive-sampling-logistic}, Step~6 again carries out incremental Laplace approximation to incrementally derive $\mu_t$ and $\Sigma_t$. As we have discussed, Appendix~\ref{appendix:laplace_examples} presents details on how we can derive $\mu'_t$ and $\Sigma'_t$ based on $\mu_t$ and $\Sigma_t$.

\begin{centering}
\begin{algorithm}
\caption{Approximate PS in an AR(1) logistic bandit}\label{alg:predictive-sampling-logistic}
\For{$t = 1, 2, \ldots, T$}{
\textbf{sample}: $\hat{\alpha}_t$ from 
$
\mathcal{N}(\mu'_t, \Sigma'_t)
$ \\
\textbf{estimate}: $\hat{\theta}_t = \phi^{\top} \hat{\alpha}_t$ \\
\textbf{select}: $A_t \in \argmax_{a \in \actions} \hat{\theta}_{t, a}$ \\
\textbf{observe}: $R_{t,A_t}$ \\
\textbf{derive}: 
    $\mu_{t+1} \leftarrow \min_{\alpha} \left\{ \frac{1}{2}(\alpha - \mu_{t})^{\top} \Sigma_{t}^{-1} (\alpha - \mu_{t}) - R_{t, A_t} \phi_{A_t}^{} \alpha + \log\left(1 + \exp\left(\phi_{A_t}^{} \alpha\right)\right)\right\}$, 
   and 
    $\Sigma_{t+1} \leftarrow \left\{\Sigma_{t}^{-1} + {\exp(\phi_{A_t}^{} \mu_{t+1})}
    {\left[1 + \exp\left(\phi_{A_t}^{} \mu_{t+1}\right)\right]^{-2}} \phi_{A_t}^{\top} \phi_{A_t}^{}\right\}^{-1}$
\\
\textbf{update}: $\mu_{t+1} \leftarrow A \mu_{t+1}, 
    \Sigma_{t+1} \leftarrow A \Sigma_{t+1} A^{\top} + V$, and update $\mu'_t$ and $\Sigma'_t$ according to incremental Laplace approximation
}
\end{algorithm}
\end{centering}

\subsection{Experiments}
We now conduct experiments to examine the performance of approximate PS. 
Since the experiment design and the analyses are similar to that concerning PS in Section~\ref{sec:ar1_experiments}, some repetition is expected but is kept to a minimum. 
In particular, we compare approximate PS with approximate TS in a sequence of AR(1) logistic bandits where $\alpha_{t,1}$ is associated with information of varying durability. We also examine both the case where the actions are independent and the case where the actions are dependent. Specifically, we let $\actions = \{1, 2, 3\}$, the stationary distribution of each $\alpha_{t,k}$ be $\mathcal{N}(0, 1)$, and $\gamma = [\gamma_1, 0.9, 0.9]$, 
where $\gamma_1 \in \{0.1, 0.9, 0.99\}$. 
In addition, we let $\phi \in \{\phi^{\text{ind}}, \phi^{\mathrm{dep}}\}$, where
\begin{align*}
    \phi^{\mathrm{ind}} = 
    \begin{bmatrix}
    1 & 0 & 0 \\
    0 & 1 & 0 \\
    0 & 0 & 1
    \end{bmatrix} \text{ and } 
    \phi^{\mathrm{dep}} = 
   \begin{bmatrix}
    0.9 & 0.1 & 0 \\
    0 & 0.9 & 0.1 \\
    0.1 & 0 & 0.9
    \end{bmatrix}. 
\end{align*}
Note that $\gamma_1$ determines the durability of the information associated with $\alpha_{t,1}$; 
the feature matrix $\phi$ determines weather the actions are independent, with $\phi^{\mathrm{ind}}$ indicating that the actions are independent, and 
$\phi^{\mathrm{dep}}$ indicating that the actions are dependent. 
Therefore, the set of parameters define a range of three-armed AR(1) logistic bandits with varying durability of information associated with $\alpha_{t,1}$, and accommodates both the case where the actions are independent and the case where the actions are dependent.

\paragraph{Approximate PS Outperforms Approximate TS Across Bandits}
Figure~\ref{fig:logistic_scatter} plots the average rewards collected by approximate PS and that by approximate TS over a long duration of $T = 1000$ timesteps, to study the long-run performance of the algorithms. The plot shows that approximate PS consistently outperforms approximate TS. These results support the efficacy of approximate PS and the usefulness of the techniques we introduced to approximate PS.

\paragraph{Approximate PS Outperforms Approximate TS Across Time}
Similar to how we investigate PS, we also investigate if approximate PS sacrifices its short-term performance for long-term benefits. We focus on an example in the aforementioned AR(1) logistic bandits, with $\gamma_1 = 0.1$ and $\phi = \phi^{\mathrm{ind}}$. Figure~\ref{fig:logistic_example} plots
the average reward collected by approximate PS and that collected by approximate TS over $t \in \{1, 2, ... , 200\}$ timesteps, with the error bars representing $95\%$ confidence intervals. 
The results suggest that approximate PS outperforms approximate TS across time. 

\begin{figure} [!ht]
\centering
\begin{subfigure}[b]{0.47\textwidth}
\centering
\includegraphics[width=\textwidth]{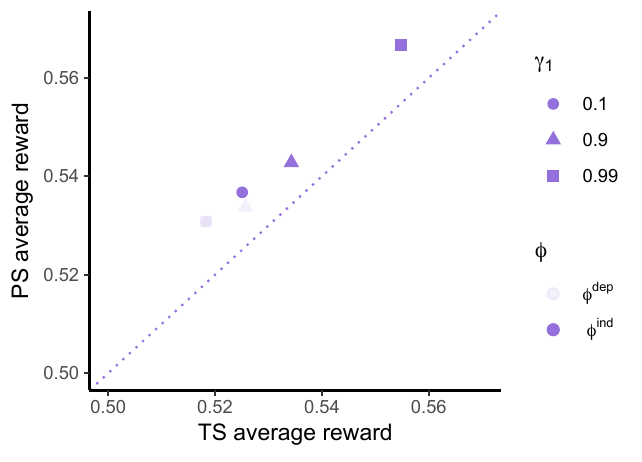}
\caption{The average rewards collected over $T = 1000$ timesteps in the environments where $\phi \in \{\phi^{\mathrm{ind}}, \phi^{\mathrm{dep}}\}$, $\gamma_1 \in \{0.1, 0.9, 0.99\}$}
\label{fig:logistic_scatter}
\end{subfigure}
\hfill
\begin{subfigure}[b]{0.47\textwidth}
\centering
\includegraphics[width=\textwidth]{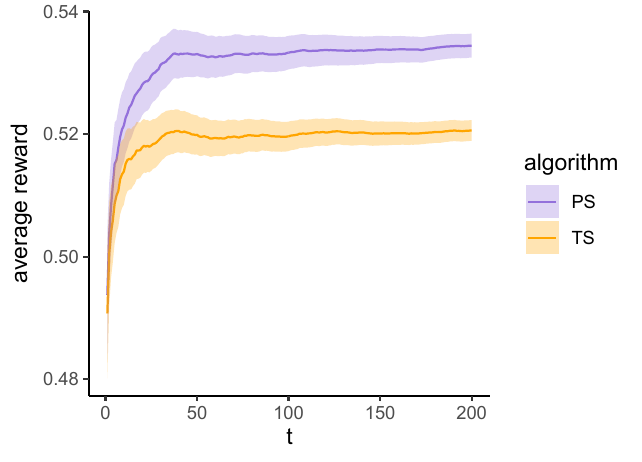}
\caption{An example: the average rewards collected over $t \in \{1, ... , 200\}$ timesteps in an  environment where $\gamma_1 = 0.1$, $\phi = \phi^{\text{ind}}$}
\label{fig:logistic_example}
\end{subfigure}
\caption{The average rewards collected by approximate PS and that collected by approximate TS in AR(1) logistic bandits}
\label{figure:ar1_laplace_gi}
\end{figure}

\subsection{More Details on Implementation of Algorithm~\ref{alg:predictive-sampling-logistic}}
\label{appendix:laplace_examples}
In implementing Algorithm~\ref{alg:predictive-sampling-logistic}, we present detailed steps to derive $\mu_t^{'}$ and $\Sigma_t^{'}$ based on $\mu_t$ and $\Sigma_t$ as follows: 
$\mu_t^{'} = \mu_t$, $\Sigma_t^{'} = \Sigma_{12} \Sigma_{22}^{-1} \Sigma_{21}$, and 

\begin{align*}
    \Sigma_{21} = \frac{1}{4}
    \begin{bmatrix}
    \phi A \Sigma_t \\
    \phi A^2 \Sigma_t \\
    ... \\
    \phi A^n \Sigma_t
    \end{bmatrix}, 
    \Sigma_{12} = \Sigma_{21}^{\top}, 
    \Sigma_{22} 
    = \frac{1}{16}
    \begin{bmatrix}
    \phi \tilde{\Sigma}_{t+1}^{(t)} \phi^{\top} + I & \phi \tilde{\Sigma}_{t+1}^{(t)} A^{\top} \phi^{\top} & \phi \tilde{\Sigma}_{t+1}^{(t)} A^{2 \top}\phi^{\top} & \ldots & \phi \tilde{\Sigma}_{t+1}^{(t)} A^{n-1 \top}\phi^{\top}  \\
    \phi A \tilde{\Sigma}_{t+1}^{(t)} \phi^{\top} & \phi \tilde{\Sigma}_{t+2}^{(t)} \phi^{\top} + I & \phi \tilde{\Sigma}_{t+2}^{(t)} A^{\top} \phi^{\top} & \ldots
    & \phi \tilde{\Sigma}_{t+2}^{(t)} A^{n - 2 \top} \phi^{\top} \\
    \phi A^2 \tilde{\Sigma}_{t+1}^{(t)} \phi^{\top} & 
    \phi A \tilde{\Sigma}_{t+2}^{(t)} \phi^{\top} & 
    \phi \tilde{\Sigma}_{t+3}^{(t)} \phi^{\top} + I & \ldots & \phi \tilde{\Sigma}_{t+3}^{(t)} A^{n-3 \top} \phi^{\top} \\
    \vdots & \vdots & \vdots & \ddots & \vdots \\
    \phi A^{n-1} \tilde{\Sigma}_{t+1}^{(t)} \phi  
    & \phi A^{n-2} \tilde{\Sigma}_{t+2}^{(t)} \phi
    & \phi A^{n-3} \tilde{\Sigma}_{t+3}^{(t)} \phi
    & \ldots 
    & \phi \tilde{\Sigma}_{t+n}^{(t)} \phi^{\top} + I 
    \end{bmatrix}.
\end{align*}

The matrices $\tilde{\Sigma}_{t}^{(t)}, \tilde{\Sigma}_{t+1}^{(t)}, ... , \tilde{\Sigma}_{t+n}^{(t)}$ can be derived from the following recursion:
\begin{align*}
    \tilde{\Sigma}_{t}^{(t)}  = &\ \Sigma_t, \\
    \tilde{\Sigma}_{t+i + 1}^{(t)} = &\ A \tilde{\Sigma}_{t + i}^{(t)} A^{\top} + V,\ i = 0, 1, 2, ... , n - 1. 
\end{align*}

\section{Additional Experiments}
\label{appendix:ar1_experiments}
We conduct additional experiments in AR(1) bandits with $\gamma_1 \in \{0.1, 0.3\}$ and $\gamma_2 \in \{0.1, 0.5, 0.9\}$. The results suggest that PS outperforms TS across time consistently in all bandits. 
\begin{figure} [!ht]
\centering
\begin{subfigure}[b]{0.45\textwidth}
\centering
\includegraphics[width=\textwidth]{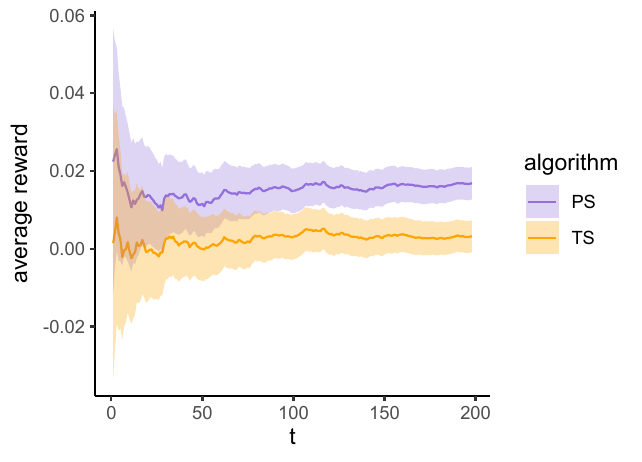}
\caption{$\gamma_1 = 0.1, \gamma_2 = 0.1$}
\label{fig:ar1_plot_1_1}
\end{subfigure}
\hfill
\begin{subfigure}[b]{0.45\textwidth}
\centering
\includegraphics[width=\textwidth]{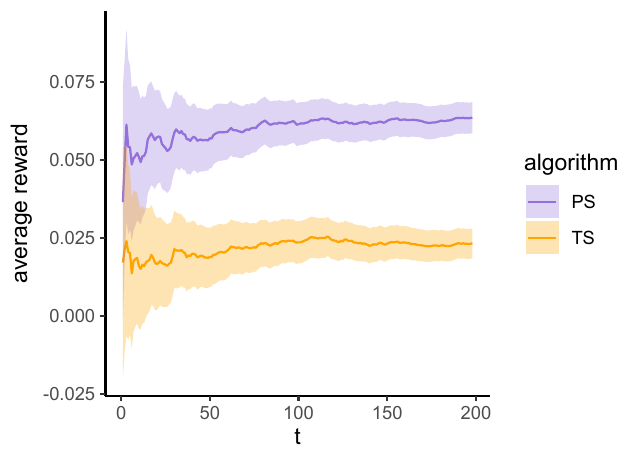}
\caption{$\gamma_1 = 0.1, \gamma_2 = 0.5$}
\label{fig:ar1_plot_1_2}
\end{subfigure}
\hfill
\begin{subfigure}[b]{0.45\textwidth}
\centering
\includegraphics[width=\textwidth]{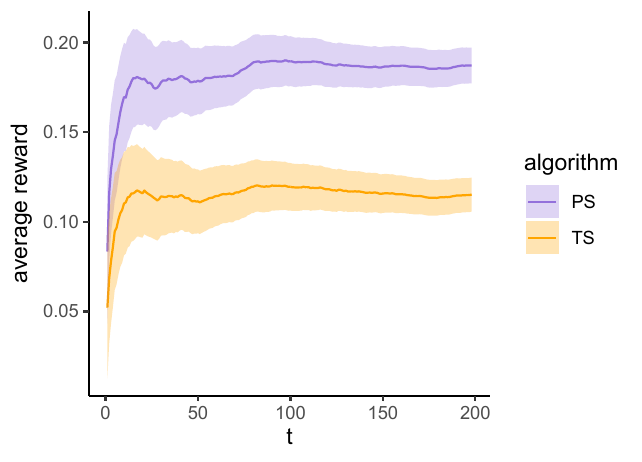}
\caption{$\gamma_1 = 0.1, \gamma_2 = 0.9$}
\label{fig:ar1_plot_1_3}
\end{subfigure}
\hfill
\begin{subfigure}[b]{0.45\textwidth}
\centering
\includegraphics[width=\textwidth]{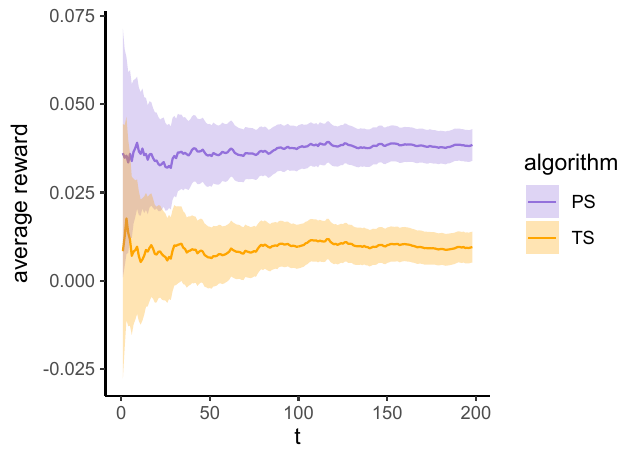}
\caption{$\gamma_1 = 0.3, \gamma_2 = 0.1$}
\label{fig:ar1_plot_1_4}
\end{subfigure}
\hfill
\begin{subfigure}[b]{0.45\textwidth}
\centering
\includegraphics[width=\textwidth]{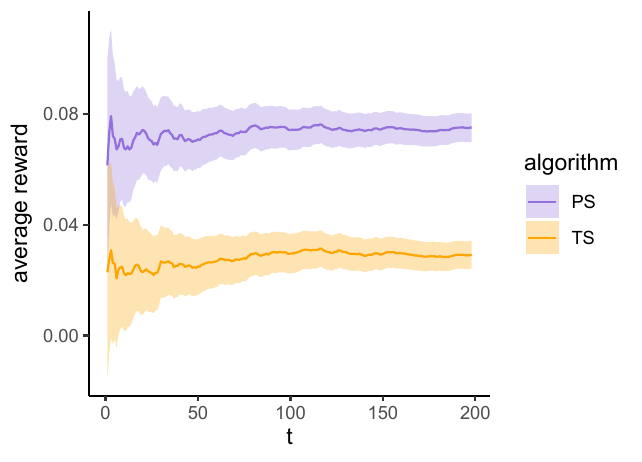}
\caption{$\gamma_1 = 0.3, \gamma_2 = 0.5$}
\label{fig:ar1_plot_1_5}
\end{subfigure}
\hfill
\begin{subfigure}[b]{0.45\textwidth}
\centering
\includegraphics[width=\textwidth]{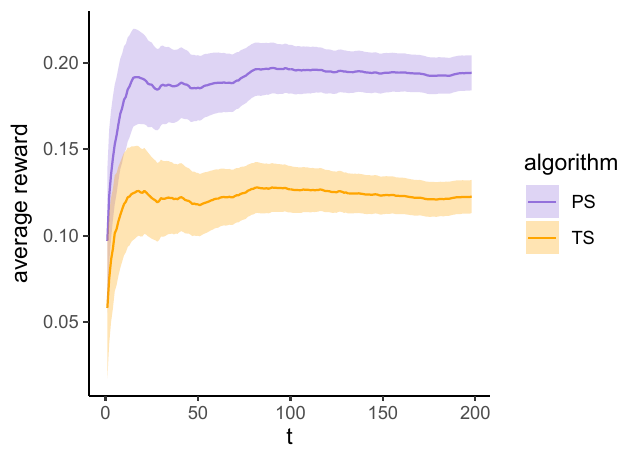}
\caption{$\gamma_1 = 0.3, \gamma_2 = 0.9$}
\label{fig:ar1_plot_1_6}
\end{subfigure}
\caption{Average reward collected by PS and that collected by TS in AR(1) bandits}
\label{fig:ar1_plot_2}
\end{figure}

\end{document}